\documentclass[11pt]{article}

\usepackage[T1]{fontenc}
\usepackage{textcomp}
\usepackage{lmodern}

\usepackage{amsmath,amsthm,amssymb,amsbsy}
\usepackage{mathtools,xspace}
\usepackage{thmtools}
\usepackage{thm-restate}
\usepackage{dsfont}

\newcommand{\eps}{\varepsilon}
\newcommand{\ind}{\mathds{1}}

\newcommand{\cplacelowerintersamplecomplexity}{128}

\newcommand{\cplaceloweriterdistributionconstant}{16}

\newcommand{\cplaceloweriterprob}{2}

\DeclareMathOperator{\ls}{\mathcal{L}}

\DeclareMathOperator*{\e}{\mathbb{E}}
\DeclareMathOperator*{\p}{\mathbb{P}}

\DeclareMathOperator*{\argmax}{\textnormal{argmax}}

\DeclareMathOperator*{\median}{\textit{median}}
\DeclareMathOperator*{\var}{\mathbb{V}ar}
\newcommand{\rvecA}{\vec{\rA}}
\newcommand{\Ln}{\textnormal{Ln}}

\makeatletter
\newcommand{\alphacmd@factory}[1]{}
\newcounter{alphacmdcounter}
\newcommand{\GenerateAlphabetCmds}[2]{%
    \renewcommand{\alphacmd@factory}[1]{%
        \expandafter\providecommand\csname #1##1\endcsname{{#2{##1}}}%
    }
    \setcounter{alphacmdcounter}{0}
    \loop
        \stepcounter{alphacmdcounter}
        \edef\alphacmd@ID{\@Alph\c@alphacmdcounter}
        \expandafter\alphacmd@factory\alphacmd@ID
    \ifnum\thealphacmdcounter<26
    \repeat
}
\newcommand{\GenerateAlphabetCmdsLower}[2]{%
    \renewcommand{\alphacmd@factory}[1]{%
        \expandafter\providecommand\csname #1##1\endcsname{{#2{##1}}}%
    }
    \setcounter{alphacmdcounter}{0}
    \loop
        \stepcounter{alphacmdcounter}
        \edef\alphacmd@ID{\@alph\c@alphacmdcounter}
        \expandafter\alphacmd@factory\alphacmd@ID
    \ifnum\thealphacmdcounter<26
    \repeat
}
\makeatother

\GenerateAlphabetCmds{c}{\mathcal}
\GenerateAlphabetCmdsLower{c}{\mathcal}

\GenerateAlphabetCmds{r}{\mathbf}
\GenerateAlphabetCmdsLower{r}{\mathbf}

\newcounter{savedequation}

\usepackage[letterpaper,margin=1in]{geometry}
\usepackage{microtype}
\usepackage{graphicx}
\usepackage{subcaption}
\usepackage{booktabs}
\usepackage{xcolor}
\usepackage{tikz}
\usepackage{algorithm}
\usepackage{algorithmic}
\usepackage{natbib}
\usepackage{hyperref}
\usepackage[capitalize,noabbrev]{cleveref}

\setcitestyle{authoryear,round,citesep={;},aysep={,},yysep={;}}
\let\cite\citep

\hypersetup{
  pdftitle={The Interplay Between Interpolation and Aggregation in Regression: Optimal Sample Complexity},
  pdfauthor={Mikael Moller Hogsgaard, Kasper Green Larsen, Liang-Yu Zou},
  pdfkeywords={Machine Learning, realizable regression, interpolation, aggregation}
}

\theoremstyle{plain}
\newtheorem{theorem}{Theorem}[section]

\theoremstyle{definition}
\newtheorem{definition}[theorem]{Definition}

\theoremstyle{remark}

\title{The Interplay Between Interpolation and Aggregation in Regression:\\ Optimal Sample Complexity}

\author{
Mikael M{\o}ller H{\o}gsgaard\textsuperscript{1,2}
\quad
Kasper Green Larsen\textsuperscript{1}
\quad
Liang-Yu Zou\textsuperscript{1}\\
\textsuperscript{1}Department of Computer Science, Aarhus University, Aarhus, Denmark\\
\textsuperscript{2}Department of Statistics, University of Oxford, Oxford, United Kingdom\\
\texttt{hogsgaard@cs.au.dk}, \texttt{larsen@cs.au.dk}, \texttt{zou@cs.au.dk}
}
\date{}

\begin{document}

\maketitle

\begin{abstract}
This work investigates theoretically the interplay between interpolation and aggregation in regression.
We establish that the
$\gamma$-graph dimension characterizes learnability for a broad class of natural aggregation procedures.
Furthermore, we prove that an extremely simple aggregation procedure, combining three interpolating hypotheses via the median, is optimal among all these aggregation procedures, and is strictly more
powerful than proper learning.
Finally, we show that some hypothesis classes are learnable only by aggregating infinitely many hypotheses or by using non-interpolating aggregation rules (which may predict outside the range of their inputs), and any finite interpolating aggregation fails to achieve even trivial performance.

\end{abstract}
\section{Introduction}

\emph{Interpolation} of training data (or near-interpolation) is a recurring phenomenon in both classical machine learning and modern deep learning.
At the same time, \emph{aggregation} procedures, including bagging, random forests, mixture-of-experts, and voting schemes, are among the most successful practical tools for improving prediction.
In this paper, we investigate theoretically the interplay of these two features in the regression setting, addressing the following questions: when the hypothesis class is rich enough to interpolate the training data, can aggregation provably reduce error compared to a single predictor; if so, which complexity measure governs learnability for natural aggregation rules, and how do these procedures compare to general improper learning?

To answer these questions formally, we adopt the realizable regression framework of \cite{attias2023optimal}.
Given a hypothesis class $\cH\subseteq[0,1]^{\cX}$, this framework considers distributions $\cD$ over $\cX\times[0,1]$ for which a training sequence $\rS\sim\cD^{n}$ can be labeled by a hypothesis from $\cH$, modelling settings in which the class can interpolate the training data.
In the following we call a learning algorithm an \emph{interpolator} if it always returns a hypothesis in $\cH$ that matches all training labels (i.e., a proper learner)\footnote{\cite{attias2023optimal} call such an algorithm an ERM-algorithm; however, we choose to use the word interpolator to avoid confusion with an algorithm minimizing the empirical cutoff loss \cref{eq:cutofflossdefintion}, which does not necessarily interpolate the data}.
\cite{attias2023optimal} introduces and studies performance measured by the \emph{cutoff loss} with parameter $\gamma\in(0,1)$:
\begin{align}
\label{eq:cutofflossdefintion}
\ls_{\cD}^{\gamma}(h):=\p_{(x,y)\sim\cD}[|h(x)-y|>\gamma],
\end{align}
which quantifies the probability of making an error larger than $\gamma$.
Bounds on the cutoff loss control tail behavior of the predictor's deviation from the true label and therefore imply bounds on the $L_{p}$ loss by integration:
\[
\e_{(x,y)\sim \cD}[|h(x)-y|^{p}]
=\textstyle\int_{0}^{1}\ls_{\cD}^{\gamma^{1/p}}(h)\,d\gamma.
\]
Henceforth, we use the cutoff loss, to measure performance.

Building on the realizable-regression framework of \cite{attias2023optimal}, we show that the $\gamma$-graph dimension $d_{\gamma}$ (\cref{def:graphdimension}) governs the worst-case sample complexity of a broad class of natural aggregation methods, and that aggregation can strictly outperform any single proper learner.
At the same time, we prove that these aggregation procedures remain weaker than general learners, whose learnability is characterized by the $\gamma$-OIG dimension (\cref{def:OIGdimension}) in \cite{attias2023optimal}. Below we briefly state our main results.

\paragraph{Sample complexity lower bound for interpolator-based aggregation with proper rules.}
We first consider interpolator-based aggregation algorithms that partition the training data into possibly overlapping parts, apply an interpolator to each part, and combine the resulting hypotheses via a \emph{proper} aggregation rule, one that always outputs a value equal to one of its inputs (e.g., any order statistic such as the median when the number of inputs is odd).
We prove that for any hypothesis class with $\gamma$-graph dimension $d_{\gamma}$, there exist an interpolator and a realizable distribution such that any such interpolator-based algorithm requires sample complexity $\Omega(d_{\gamma}/\eps)$ to achieve expected cutoff loss $\eps$.
This strengthens Theorem~1 of \cite{attias2023optimal} in the following sense: their lower bound applies to a single interpolator, while ours shows that the same $\Omega(d_{\gamma}/\eps)$ barrier persists even when the learner may train arbitrarily many interpolators on possibly overlapping subsamples and combine their predictions through any proper aggregation rule.
\paragraph{Sample complexity lower bound for finite aggregation with interpolating rules.}
We then consider the class of: \emph{finite aggregation} algorithms that select finitely many hypotheses from $\cH$ (not necessarily via an interpolator) and combine them using an \emph{interpolating} aggregation rule, one whose output lies between the minimum and maximum of its inputs (e.g., the mean or any convex combination).
We construct hypothesis classes with $\gamma$-graph dimension $d_{\gamma}$ for which any such algorithm requires sample complexity $\Omega(d_{\gamma}/\eps)$, showing that the worst-case sample complexity of these algorithms is also governed by $d_{\gamma}$.
Furthermore, the constructed classes have $\gamma$-OIG dimension $3$, and are therefore learnable in $\tilde{O}(1/\eps)$ samples by a one-inclusion-graph-based algorithm of \cite{attias2023optimal}.
This establishes an arbitrarily large gap between finite interpolating aggregation and general learning algorithms sample complexity.

\paragraph{Finite interpolating aggregation cannot learn some learnable classes.}
The previous lower bound shows how the sample complexity explicitly scales with the $\gamma$-graph dimension when finite. The next result exhibit a hypothesis class with infinite $ \gamma $-graph dimension and $\gamma$-OIG dimension $3$ (hence learnable) for which \emph{any} finite aggregation algorithm with an interpolating aggregation rule cannot achieve cutoff loss better than $1-\eps$, regardless of sample size, which for $ \eps =o(\gamma)$, implies strictly worse performance than random guessing, which achieves cutoff loss $1-\Theta(\gamma)$.
This provides an example of a strict separation between learnability and learnability by finite interpolating aggregation: to achieve even \emph{non trivial} performance on these classes, learning algorithms must either aggregate infinitely many hypotheses or employ non-interpolating aggregation rules.
\paragraph{Sample complexity upper bound for median-of-three interpolators is optimal.}
While the learner of \cite{attias2023optimal} is almost information-theoretically optimal, it is not efficiently implementable. However, we show that a remarkably simple algorithm, median-of-three interpolators, taking the pointwise median of just three interpolators trained on independent samples, achieves expected cutoff loss $O(d_{\gamma}/n)$ with sample size $n$, implying a sample complexity of $O(d_{\gamma}/\eps)$, matching the above lower bounds up to constants and establishing optimality among such methods.

\paragraph{Sample complexity lower bound for proper learning requires more samples than aggregation.}
We prove that there exist hypothesis classes with $\gamma$-graph dimension $d_{\gamma}$ for which any \emph{proper} learning algorithm, one that outputs a hypothesis from $\cH$, requires sample complexity $\Omega((d_{\gamma}/\eps)\ln(1/\eps))$ to achieve expected cutoff loss $\eps$.
Combined with our $O(d_{\gamma}/\eps)$ upper bound for median-of-three interpolators, this demonstrates that aggregation, even of just three interpolators, can be strictly more powerful than any proper learner.

\section{Related Work}
Foundational results on regression include \cite{BartlettLW94,KearnsS94,Simon97,AlonBCH97,BartlettL24}. In particular, \cite{Simon97} showed that finiteness of the scaled Natarajan dimension is necessary for PAC learnability in realizable regression. \cite{BartlettLW94} showed that finite fat-shattering dimension (introduced by \cite{KearnsS94}) characterizes learnability when labels are corrupted by noise, and \cite{AlonBCH97} proved that finiteness of the fat-shattering dimension characterizes agnostic PAC learnability. \cite{BartlettL24} adapted the one-inclusion-graph algorithm (see \cite{HausslerLW94}) to regression and obtained expected error bounds for it.

More recently, \cite{attias2023optimal} characterized learnability of realizable regression in both the batch and online model. Concurrently, \cite{Aden-AliCSZ23} derived high-probability guarantees using one-inclusion-graph aggregation. \cite{DaskalakisG22} examined realizable regression in the online setting. \cite{hanneke19a} and \cite{attias24b} studied sample compression schemes for respectively realizable and agnostic regression. \cite{attias24a} studied the fat-shattering dimension of different aggregation schemes for regression.

Selected works demonstrating that aggregation can strictly outperform a single predictor include results in binary classification \cite{Hanneke16,Larsen23,Aden-AliCSZ23,HannekeLZ24,Moller25,AsilisHV25agnostic}, multiclass classification \cite{DanielySBS11,DanielyS14,BrukhimCDMY22,Aden-AliCSZ23,AsilisHV25multiproper}, and regression in settings different from those studied here \cite{Audibert07,LecueMendelson2009,MourtadaVZ22}.

A large literature studies generalization of interpolating models in the overparameterized regime, motivated by deep learning; representative works include \cite{BelkinMM18,JacotHG18,SoudryHNGS18,BelkinHMM19,BelkinRT19,10.1214/19-AOS1849,BartlettLLT20,Belkin21,HastieMRT22}.

\section{Summary of Results}\label{sec:summaryofresults}
In this section, we summarize our main results on aggregation of (interpolating) predictors under the cutoff loss in the realizable regression setting. Before stating our first result, we define the $\gamma$-graph dimension, a complexity measure introduced by \cite{attias2023optimal}. As we will see, it governs learnability for the aggregation methods we consider.
\begin{restatable}[$ \gamma $-graph shattering]{definition}{defgraphshattering}
\label{def:graphshattering}
A sequence $ (x_{1},\ldots,x_{d}) \in \cX^{d} $ is \emph{$\gamma$-graph shattered} by a hypothesis class $ \cH \subseteq [0,1]^{\cX} $ with witness $ h\in \cH $ if for any $ b \in \{0,1\}^{d} $ there exists a hypothesis $ h_{b} \in \cH $ such that:
\begin{itemize}
    \item for all $ i \in [d] $ with $ b_{i} = 0 $: $ h_{b}(x_{i}) = h(x_{i}) $, and
    \item for all $ i \in [d] $ with $ b_{i} = 1 $: $ |h_{b}(x_{i}) - h(x_{i})| > \gamma $.
\end{itemize}
\end{restatable}
\begin{restatable}[$ \gamma $-graph dimension]{definition}{defgraphdimension}
\label{def:graphdimension}
The \emph{$\gamma$-graph dimension} of a hypothesis class $ \cH \subseteq [0,1]^{\cX} $, denoted $ d_{\gamma} $, is the largest integer $ d $ such that there exists a sequence $ (x_{1},\ldots,x_{d}) \in \cX^{d} $ that is $\gamma$-graph shattered by $ \cH $ with some witness $ h\in \cH $ .
\end{restatable}
In words, the $\gamma$-graph dimension is the largest number of points on which the class can realize all patterns of either matching the witness or being $\gamma$-far from it.

With the $\gamma$-graph dimension defined, we begin by examining the natural approach of aggregating interpolators via proper aggregation rules. Concretely, we study algorithms that partition a training sequence into subsequences, apply an interpolator to each part, and combine the resulting hypotheses via a proper aggregation rule, i.e., a rule that always outputs one of its inputs.
Formally that is,
\begin{restatable}[Interpolator-Based Aggregation Algorithm]{definition}{defERMaggregationalgorithm}
\label{def:ERMaggregation_algorithm}
An \emph{interpolator-based aggregation algorithm} $\cA'$ is a mapping that takes as input an interpolator $\cA$ (see \cref{def:ERM_algorithm}) and a training sequence $S$ to produce a predictor, i.e., $\cA' \colon ((\cX \times [0,1])^* \to [0,1]^{\cX}) \times (\cX \times [0,1])^* \to [0,1]^{\cX}$.
Specifically, given $S$ and $\cA$, the algorithm $\cA'(S, \cA)$ proceeds as follows:
\begin{enumerate}
    \item Construct sub-training sequences $S_1, \ldots, S_m$ from $S$, where $m$ may depend on $S$ and each $(x,y) \in S_j$ satisfies $(x,y) \in S$. The sub-sequences need not be disjoint.
    \item Apply $\cA$ to each sub-sequence to obtain hypotheses $h_1, \ldots, h_m$.
    \item Return the predictor $x \mapsto r(h_1(x), \ldots, h_m(x),x)$, where $r \colon [0,1]^*\times \cX \to [0,1]$ is an aggregation rule.
\end{enumerate}
When $\cA$ is clear from context, we write $\cA'(S) = \cA'(S, \cA)$.
\end{restatable}
\begin{restatable}[Proper Aggregation Rule]{definition}{defProperAggregation}
\label{def:Proper_aggregation}
An aggregation rule $r \colon [0,1]^*\times \cX \to [0,1]$ is \emph{proper} if for every sequence $z_1, \ldots, z_m \in [0,1]$ and every $ x\in \cX $ , we have $r(z_1, \ldots, z_m, x) \in \{z_1, \ldots, z_m\}$.
\end{restatable}
Examples of proper aggregation rules include selecting the minimum, maximum, or any other order statistic. In particular, the median is proper when $m$ is odd.

Our first main result shows that any interpolator-based aggregation algorithm using a proper aggregation rule has worst-case sample complexity (worst case over choices of the interpolator and the realizable distribution) governed by the $\gamma$-graph dimension of the hypothesis class.

\begin{restatable}{theorem}{thmLowerboundERMaggregation}\label{thm:LowerboundERMaggregation}
For any hypothesis class $\cH \subseteq [0,1]^{\cX}$ with $\gamma$-graph dimension $d_{\gamma} \geq 2$ and any $0 < \eps < 1/4$, there exist an interpolator $\cA$ and a realizable distribution $\cD$ such that the following holds.
Let $\cA'$ be any interpolator-based aggregation algorithm using $\cA$ and a proper aggregation rule, and let $\rS \sim \cD^n$ with $n \leq d_{\gamma}/(32\eps)$. Then,
\begin{align}
    \e_{\rS \sim \cD^n}[\ls_{\cD}^{\gamma}(\cA'(\rS))] > \eps.
\end{align}
\end{restatable}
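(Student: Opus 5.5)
We would prove \cref{thm:LowerboundERMaggregation} with the classical ERM lower-bound construction, using a ``bad'' interpolator. Let $(x_0,x_1,\ldots,x_{d})$ with $d=d_\gamma$ be $\gamma$-graph shattered with witness $h$, and let $h_b$ be the shattering hypotheses. The distribution $\cD$ puts mass $1-8\eps\cdot\frac{d-1}{d}$ (roughly $1-8\eps$) on $x_0$ and mass $8\eps/d$ on each of $x_1,\ldots,x_{d-1}$. Every point $x_i$ is labeled $h(x_i)$, so $\cD$ is realizable by $h$.

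The adversarial interpolator $\cA$ works as follows. On a training sequence $S$ whose points all carry the labels of $h$, it returns $h_b$, where $b_i=0$ exactly when $x_i$ appears in $S$ (with $b_0=0$ always) and $b_i=1$ otherwise. The returned $h_b$ agrees with $h$ on every point of $S$, so $\cA$ is an interpolator; on other inputs $\cA$ returns any consistent hypothesis, if one exists. The key property is that every $h_b$ is $\gamma$-far from $h$ at each unseen $x_i$.

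Now fix any $\cA'$ that uses $\cA$ and a proper rule $r$. Each sub-sequence $S_j$ is contained in $\rS$, so any point $x_i$ absent from $\rS$ is absent from every $S_j$. Hence every $h_j$ satisfies $|h_j(x_i)-h(x_i)|>\gamma$. Because $r$ is proper, its output at $x_i$ equals one of the $h_j(x_i)$ and is therefore $\gamma$-far from the label $h(x_i)$. This holds for all $m$ and all choices of sub-sequences. This is the main conceptual step: properness together with the subset structure forces the aggregate to err on every unseen point. One subtle point is that the rule $r$ may depend on $x$, but properness holds pointwise, so this does not matter.

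It remains to compute the expected loss. We have
\[
\e[\ls^\gamma_\cD(\cA'(\rS))]\ge \sum_{i=1}^{d-1}\frac{8\eps}{d}\Bigl(1-\frac{8\eps}{d}\Bigr)^n .
\]
With $n\le d/(32\eps)$ and $8\eps/d\le 1/2$ (true since $\eps<1/4$ and $d\ge2$), we get $(1-8\eps/d)^n\ge e^{-2\cdot 8\eps n/d}\ge e^{-1/2}$. The total is then at least $8\eps\frac{d-1}{d}e^{-1/2}\ge 4\eps\cdot 0.6>\eps$, using $d\ge 2$. The constants have enough slack, and the weight $8\eps$ could be adjusted if needed. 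This arithmetic is routine.
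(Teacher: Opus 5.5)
Your argument is the paper's: a distribution with one heavy point on a $\gamma$-graph-shattered set, and an adversarial interpolator that is $\gamma$-far from the witness on every unseen point. As in the paper, properness together with $S_j\subseteq\rS$ then forces the aggregate to err on every point absent from $\rS$. The only difference is the last step. You compute the expectation directly, $\e[\ls^{\gamma}_{\cD}(\cA'(\rS))]\ge\sum_{i}\cD(x_i)(1-\cD(x_i))^{n}$. The paper instead applies Markov's inequality to the number of light samples, getting an event of probability at least $1/2$ on which the loss exceeds $2\eps$; this also yields the constant-probability version proved in the appendix. Your route is shorter for the expectation bound alone.

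As written, your constants fail on part of the range $0<\eps<1/4$. The heavy mass $1-8\eps\frac{d-1}{d}$ is negative once $\eps>\frac{d}{8(d-1)}$ (e.g.\ $d=10$, $\eps=0.2$), so $\cD$ is not a distribution there. Your claim $8\eps/d\le 1/2$ is also false for $d=2$, $\eps>1/8$; this is harmless only because then $n\le d/(32\eps)<1/2$ forces $n=0$. The remark that the constants have slack needed checking, and it does hold with a smaller weight. Put mass $4\eps/d$ on each light point, i.e.\ total light mass $4\eps\frac{d-1}{d}<1$, essentially the paper's $4\eps$. Then $4\eps/d<1/2$, so $(1-4\eps/d)^{n}\ge e^{-8\eps n/d}\ge e^{-1/4}$, and the expected loss is at least $4\eps\frac{d-1}{d}e^{-1/4}\ge 2e^{-1/4}\eps>\eps$.

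Finally, index the shattered set as $(x_0,\ldots,x_{d-1})$. No shattered sequence of $d_\gamma+1$ points exists, and you only use $d$ points anyway.
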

\cref{thm:LowerboundERMaggregation} establishes that no interpolator-based aggregation algorithm using a proper aggregation rule can achieve sample complexity better than $\Omega(d_{\gamma}/\eps)$ in the worst case, where $d_{\gamma}$ is the $\gamma$-graph dimension of the hypothesis class. We remark that the interpolator-based aggregation algorithm is allowed to aggregate any number of hypotheses. This strengthens Theorem~1 of \cite{attias2023optimal}, which established a similar lower bound for a single interpolator (i.e., without aggregation).

Proper aggregation of interpolators has also been studied in the realizable multiclass classification setting by \cite{AsilisHV25multiproper}. Their Theorem~10 roughly shows that for a \emph{specific} family of multiclass hypothesis classes with graph dimension $d$, any proper aggregation of an interpolator has worst-case sample complexity at least $\Omega(d/\eps)$. Our result is stronger in that it applies to arbitrary hypothesis classes in the regression setting.

We now provide a short overview of the proof of \cref{thm:LowerboundERMaggregation}, with the full proof given in \cref{sec:lowerboundermaggregation}.

\paragraph{Proof overview \cref{thm:LowerboundERMaggregation}:}
The proof constructs a hard distribution $\cD$ concentrated on a $\gamma$-graph-shattered set of size $d_{\gamma}$ with witness $h$, with one point having mass $1-\Theta(\eps)$ and the remaining points each having mass $\Theta(\eps/(d_{\gamma}-1))$; all points are labeled by $h$. A worst-case interpolator $\cA$ is defined such that on a training sequence $S$, $\cA$ returns a hypothesis from $\cH$ that is $\gamma$-far from $h$ on all points not present in the training sequence $S$. This is possible because the point set is $\gamma$-graph shattered.

For a training sequence of size $n \leq d_{\gamma}/(32\eps)$, we bound the number of low-probability points observed in $\rS$. Using Markov's inequality, we show that with constant probability, $\Theta(d_{\gamma})$ such points are missing from $\rS$. Since each subsequence $S_i$ contains only points from $\rS$, the same points are also missing from each $S_i$, implying, by construction of the interpolator $ \cA $, that each corresponding hypothesis $h_i$ is $\gamma$-far from $h$ on those points. Because the aggregation rule is proper, the final predictor must agree with one of the hypotheses produced by $\cA$ and therefore is also $\gamma$-far from $h$ on the $ d_{\gamma} $  missing points, yielding loss $\Theta(d_{\gamma})\cdot\Theta(\eps/(d_{\gamma}-1))=\Theta(\eps)$, establishing \cref{thm:LowerboundERMaggregation}.

The distribution construction, with $1-\Theta(\eps)$ mass on one point and the remaining spread over the shattered set, is standard in learning theory and dates back to \cite{EhrenfeuchtHKV89} and is also used in \cite{attias2023optimal}. The key novelty lies in observing that when aggregated via a proper aggregation rule, the interpolator can be chosen in a worst-case manner so that errors align, even without knowing in advance which points will be missing from the data. The size of the resulting error region is controlled by the $\gamma$-graph dimension, showing that the lower bound scales with this complexity measure.

In the above lower bound, the aggregation rule is required to be proper, which is also a key property for the proof, as it ensures that the final predictor's errors align with those of the individual hypotheses. This raises the question of whether "improper" aggregation rules can circumvent this lower bound.

To address this question, we consider a natural relaxation: \emph{interpolating} aggregation rules, which output a value within the range of the input predictions rather than exactly one of them. We also generalize from finite interpolator-based aggregation algorithms to \emph{finite aggregation algorithms}, which select a finite number of hypotheses from the hypothesis class (not necessarily interpolators) and combine them using an aggregation rule. Formally that is:
\begin{restatable}[Finite Aggregation Algorithm]{definition}{deffinitemapalgorithm}
\label{def:finite_aggregation_algorithm}
Let $ \cH \subseteq [0,1]^{\cX} $ be a hypothesis class. A \emph{finite aggregation algorithm} $ \cA': (\cX\times[0,1])^{*} \mapsto [0,1]^{\cX} $ for $ \cH $  is defined as follows: Given a training sequence $ S $, of size $ n \in\mathbb{N}$ it selects at most $ m\leq m(n)< \infty $ hypotheses $ h_1,\ldots,h_m $ from a hypothesis class $ \cH $, and combines them with an aggregation rule $ r: [0,1]^{*}\times \cX\rightarrow [0,1] $ and returns $x \mapsto r(h_1(x),\ldots,h_m(x),x) $.
\end{restatable}
\begin{restatable}[Interpolating Aggregation Rule]{definition}{definterpolatingaggregation}
\label{def:interpolating_aggregation}
An aggregation rule $ r: [0,1]^{*}\times \cX\rightarrow [0,1] $ is called \emph{interpolating} if for any sequence $ z_1,\ldots,z_m\in [0,1] $ and any $ x\in \cX $, it satisfies $ r(z_1,\ldots,z_m,x)\in [\min_i z_i,\max_i z_i] $.
\end{restatable}
Interpolating aggregation rules capture a broad variety of ways to aggregate outputs, including natural choices such as the mean, the median, or any convex combination of the inputs. We note that every proper aggregation rule is interpolating, but the converse need not hold.

Furthermore, every interpolator-based aggregation algorithm (\cref{def:ERMaggregation_algorithm}), which always combines a finite number of hypotheses, is a finite aggregation algorithm, since the hypotheses $ h_1, \ldots, h_m $ produced by applying an interpolator to subsequences of the training data are elements of $ \cH $. Thus, finite aggregation algorithms strictly generalize interpolator-based aggregation algorithms.

Perhaps surprisingly, our next result shows that even this more flexible class of algorithms cannot escape the $\gamma$-graph dimension lower bound. Specifically, there exist hypothesis classes for which any finite aggregation algorithm using an interpolating aggregation rule still requires sample complexity governed by the $\gamma$-graph dimension.
\begin{restatable}{theorem}{LowerboundfiniteaggTheorem}\label{thm:Lowerboundfiniteagg}
    For any $ \gamma\in(0,1) $ and $ d_{\gamma}\geq 2 $, there exists a hypothesis class $ \cH\subseteq [0,1]^{\cX} $ with $ \gamma $-graph dimension $ d_{\gamma} $ and $ \gamma $-OIG-dimension at most $ 3 $, such that for any finite aggregation algorithm $ \cA' $ using an interpolating aggregation rule $ r $ and any $ 0<\eps<1/32 $, there exists a realizable distribution $ \cD $ over $ \cX\times[0,1] $ such that when $ \cA' $ is given a training sequence $ \rS\sim \cD^{n} $ of size $ n\leq d_{\gamma}/(\cplacelowerintersamplecomplexity\eps) $,
    \begin{align}
        \e_{\rS\sim \cD^{n}}[\ls_{\cD}^{\gamma}(\cA'(\rS))] >  \eps.
    \end{align}
  \end{restatable}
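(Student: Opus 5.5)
The plan is to combine a block construction, which keeps the $\gamma$-OIG dimension small, with the distribution from the proof of \cref{thm:LowerboundERMaggregation}. That distribution puts mass $1-\Theta(\eps)$ on one anchor point and $\Theta(\eps/d_\gamma)$ on each of $d_\gamma$ further points. With $n\le d_\gamma/(128\eps)$ samples, Markov's inequality shows that with constant probability $\Theta(d_\gamma)$ of the light points are missing from $\rS$. It then suffices to show that on each missing point the aggregated prediction errs by more than $\gamma$ with constant probability over a random choice of target. Since we may pick $\cD$ after seeing $\cA'$, we will average over a family of realizable targets and conclude that some fixed target is bad.

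\textbf{The class.} Take $\cX=\{x_0,x_1,\dots,x_{d_\gamma}\}$ and a witness $h^\star$ with value $1/2$ on every point. For each pattern $b\in\{0,1\}^{d_\gamma}$ and each ``key'' $k$ from an infinite index set, add a hypothesis $h_{b,k}$. It sets $h_{b,k}(x_i)=1/2$ when $b_i=0$, and when $b_i=1$ it sets $h_{b,k}(x_i)=v_{b,k,i}$ with $|v_{b,k,i}-1/2|>\gamma$. The sign (above or below $1/2$) and the exact value of each $v_{b,k,i}$ are chosen from $k$ in a way that looks random from the other coordinates. The anchor value $h_{b,k}(x_0)$ is an injective code of $(b,k)$ lying within a tiny window.
\begin{itemize}
\item \emph{Graph dimension.} Because the anchor value is non-constant, $\{x_1,\dots,x_{d_\gamma}\}$ is $\gamma$-graph shattered with witness $h^\star$. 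Checking that no set of size $d_\gamma+1$ is shattered should be routine, since every shattered set must avoid $x_0$.
\item \emph{OIG dimension.} Every sample containing $x_0$ determines the hypothesis. Hence any one-inclusion graph on a set that includes $x_0$ has tiny degrees, and we can argue the $\gamma$-OIG dimension is at most $3$, by the same style of argument \cite{attias2023optimal} use for their ``encoding'' examples.
\end{itemize}

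\textbf{The error argument.} Draw the target $h_{b,k}$ at random, with $b$ uniform and $k$ uniform over a large finite subfamily. Condition on the sample $\rS$, which reveals $(b,k)$ through $x_0$, and fix a missing light point $x_i$. The aggregator outputs $r(h_1(x_i),\dots,h_m(x_i),x_i)$, which lies in $[\min_j h_j(x_i),\max_j h_j(x_i)]$. This is exactly where the ``randomness'' of $v_{b,k,i}$ has to be used.

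\textbf{The main obstacle.} With full knowledge of $(b,k)$, the algorithm can pick a hypothesis in the class whose value at $x_i$ equals the true label and output it. So the encoding cannot simply reveal everything. I would have to modify the construction so that the anchor code reveals only the restriction of the target to the observed points, while labels at unobserved light points stay undetermined. Concretely, I would make the class a union over ``blocks'' with fresh keys per point. The value $v_{b,k,i}$ would depend on a key component $k_i$ that is visible only at $x_i$ itself. The anchor would encode only $b$ together with a key assigned to the anchor.

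Under that modification, the values $h_j(x_i)$ of the finitely many selected hypotheses are a finite set chosen independently of $k_i$. The true label $v_{b,k,i}$ is then drawn from a distribution over infinitely many values spread across both sides of $1/2$. I would arrange that distribution so the $\gamma$-neighbourhood of any fixed interval spanned by finitely many chosen values captures the true label with probability at most $1/2$. For example, use two clusters at roughly $1/2\pm(\gamma+\delta)$ and let the fresh key choose the side. Any interval that covers both clusters also covers $1/2$, but the interpolating rule still has to commit to one output independently of the side.

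The hard part is keeping the OIG dimension at $3$ once keys are point-local. I would first try to verify that any three points can be one-inclusion oriented with out-degree bounded by $1$.

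Summing the per-point error $\Theta(\eps/d_\gamma)\cdot\Theta(d_\gamma)$ with the constants $128$ and $1/32$ gives expected loss $>\eps$ for some fixed target, which yields the realizable $\cD$.
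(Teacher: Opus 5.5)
Your skeleton matches the paper's: a heavy anchor, light points of mass $\Theta(\eps/d_\gamma)$, Markov's inequality to leave $\Theta(d_\gamma)$ light points unseen, and averaging over targets. The gap is the source of error you propose. It cannot work, and the obstruction is not merely that the OIG bound is hard to verify. Your error argument never uses that $\cA'$ aggregates finitely many hypotheses or that $r$ is interpolating. Suppose that, given the sample, the label at an unseen $x_i$ with $b_i=1$ lies near $1/2+\gamma$ or near $1/2-\gamma$ (more than $2\gamma$ apart), each with probability $1/2$. Then \emph{every} predicted value errs by more than $\gamma$ there with probability at least $1/2$. You would get an $\Omega(d_\gamma/\eps)$ lower bound against all learners. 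For large $d_\gamma$ this contradicts the $\tilde O(1/\eps)$ upper bound of \cite{attias2023optimal} for classes of $\gamma$-OIG dimension at most $3$. Concretely, independent point-local keys make the restriction of $\cH$ to $\{x_1,\ldots,x_{d_\gamma}\}$ contain a product of pairs of values more than $\gamma$ apart. The one-inclusion graph on these points then contains a Boolean cube with $d_\gamma 2^{d_\gamma-1}$ edges on $2^{d_\gamma}$ vertices. So every orientation has a vertex of out-degree at least $d_\gamma/2>d_\gamma/3$, and the $\gamma$-OIG dimension is at least $d_\gamma$.

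The finite domain is a second obstruction. On $\cX=\{x_0,\ldots,x_{d_\gamma}\}$, consider any learner whose prediction at each point is the value there of some hypothesis, as with one-inclusion-graph predictors. It is itself a finite aggregation algorithm with a proper rule: choose one hypothesis per point and let $r(z_1,\ldots,z_{d_\gamma+1},x_j)=z_{j+1}$. So no separation from general learners is possible on a finite domain.

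The missing idea is to make the difficulty specific to committing to finitely many hypotheses, rather than to label uncertainty. The paper takes $\cX=\mathbb{N}$ and, for each $A$ with $|A|=d_\gamma$, lets $h_A$ be $0$ on $A$ and a value $\gamma_A\in(\gamma,1]$ unique to $A$ elsewhere. All labels are $0$, so the heavy point reveals nothing. The light support points are hidden uniformly in $[k_u]$, with $k_u$ of order $d_\gamma\max_{n'\le d_\gamma/(128\eps)}m(n')$. The at most $m$ selected hypotheses vanish on at most $m d_\gamma$ points. Every other point receives only values above $\gamma$, and interpolation keeps the output above $\gamma$ there. Each unseen support point is uniform on the rest of $[k_u]$ given the sample, so it avoids this zero set with probability at least $1/2$. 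A one-inclusion-graph learner escapes this because it can predict $0$ at any test point using a different hypothesis for each one. For the OIG bound, uniqueness of the $\gamma_A$ forces every hypothesis with two nonzero values on a set into singleton edges only. Orienting each edge toward its smallest value then gives out-degree at most $1$.
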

\cref{thm:Lowerboundfiniteagg} shows that even when allowing for improper aggregation rules and finite aggregation algorithms, there exist hypothesis classes for which the sample complexity is still lower bounded by $\Omega(d_{\gamma}/\eps)$, so the sample complexity is governed by the $\gamma$-graph dimension. Thus, the lower bound of \cref{thm:LowerboundERMaggregation} extends beyond interpolator-based aggregation with proper rules to a significantly broader class of algorithms. However, unlike \cref{thm:LowerboundERMaggregation}, this lower bound does not hold for all hypothesis classes.

The hypothesis classes constructed in \cref{thm:Lowerboundfiniteagg} have low $\gamma$-OIG dimension (at most $3$), which by Theorem 2 of \cite{attias2023optimal} implies that the hypothesis class is learnable with sample complexity $\tilde{O}(1/\eps)$ using an algorithm that is not a finite aggregation algorithm. Since \cref{thm:Lowerboundfiniteagg} holds for any $d_{\gamma} \in \mathbb{N}$, this separates the sample complexity of finite aggregation algorithms with interpolating aggregation rules from that of general learning algorithms in terms of the $\gamma$-graph dimension.

To be able to provide the proof overview of \cref{thm:Lowerboundfiniteagg}, we will now define the $\gamma$-OIG dimension, first introduced by \cite{attias2023optimal}. The definition of the $ \gamma $-OIG dimension is clearly motivated by the out-degree of the one-inclusion graph governing the number of errors the one-inclusion graph algorithm makes.
\begin{restatable}[$\gamma $-OIG dimension]{definition}{defOIGdimension}
\label{def:OIGdimension}
Let $ \cH \subseteq [0,1]^{\cX} $ be a hypothesis class and let $ S = \{x_1, \ldots, x_n\} \subseteq \cX $ be a finite set. The \emph{one-inclusion graph} induced by $ S $ and $ \cH $ is the hypergraph with vertex set $ V_{n} = \cH|_S = \{(h(x_1), \ldots, h(x_n)) : h \in \cH\} $ and hyperedge set $ E_{n} $ where a hyperedge $ (f, i) $ contains $ v \in V $ if $ v(x_j) = f(x_j) $ for all $ j \in [n] \setminus \{i\} $.

The \emph{$\gamma$-OIG dimension} of $ \cH $ is the largest integer $ k $ such that there exists a finite set $ S \subseteq \cX ^{k} $ with $ |S| = k $, such that there exists a finite subgraph $ (V,E) $  of the one-inclusion graph $ (V_{n},E_{n}) $  induced by $ S $, for which any orientation $ \sigma: E_{n}\mapsto V_{n} $,  has out-degree $ \max_{v\in V} |\{i\in[n]:|\sigma(e_{v,i})-v_{i}| > \gamma\}| $  strictly more than  $ n/3 $.
\end{restatable}
We now provide a brief overview of the proof of \cref{thm:Lowerboundfiniteagg}, with the full proof given in \cref{sec:Lowerboundfiniteagg}.

\paragraph{Proof overview \cref{thm:Lowerboundfiniteagg}:}
The proof constructs a hypothesis class $\cH$ with $\gamma$-graph dimension $d_{\gamma}$ and $\gamma$-OIG dimension at most $3$. The hypothesis class consists of hypotheses on the natural numbers, indexed by sets $A \subset \mathbb{N}$ with $|A| = d_{\gamma}$: each hypothesis $h_A$ satisfies $h_A(x) = 0$ if $x \in A$ and $h_A(x) = \gamma_A \in (\gamma, 1]$ otherwise, where $\gamma_A$ is a unique value associated with each hypothesis.

To see that the $\gamma$-graph dimension is exactly $d_{\gamma}$, observe first that any set $A$ of size $d_{\gamma}$ can be $\gamma$-graph shattered by taking $ h_{A} $ as the witness hypothesis (all zeros on $ A $) and including or excluding points from $A$ when shattering the set $ A $. Conversely, for any set of size $d_{\gamma}+1$, any witness hypothesis $h_A$ must output its unique value $\gamma_A$ on at least one point, call it $ x' $ . This uniqueness prevents the existence of a hypothesis $ h' $ that agrees with $h_A$ on the point $ x' $  while being $\gamma$-far on another point, as the former condition $ \gamma_{A}=h_A(x')=h'(x') $ implies $ h'=h_A $ by uniqueness of $ \gamma_{A} $.

To bound the $\gamma$-OIG dimension, we analyze the structure of edges in the OIG graph. Since a hypothesis in an edge $(f,i)$ must agree with $f$ outside coordinate $i$, we have: a hypothesis with more than two non-zero values appears only in singleton edges, giving it out-degree zero; a hypothesis with one non-zero value appears in at most one non-singleton edge, giving it out-degree at most one. The only hypothesis that could appear in many non-singleton edges is the all-zeros hypothesis. However, by orienting each edge toward the hypothesis with the smallest value in the direction of the edge, the all-zeros hypothesis has out-degree zero. This implies the $\gamma$-OIG dimension is at most $3$.

To construct the hard distribution $\cD$, we draw a random vector $\rA=(\rA_1,\ldots,\rA_{d_{\gamma}})$ by setting $\rA_1=1$ and sampling the remaining $d_{\gamma}-1$ entries without replacement from $\{2,\ldots,k_u\}$, where $k_u$ is chosen sufficiently large. For each realization of $\rA$, the distribution $\cD_{\rA}$ assigns mass $1-\Theta(\eps)$ to $\rA_1$ and mass $\Theta(\eps/(d_{\gamma}-1))$ to each of $\rA_2,\ldots,\rA_{d_{\gamma}}$, with all labels equal to $0$; this distribution is realizable by $h_{\{\rA\}}$. We use the following equivalent way to generate a sample from $\cD_{\rA}$: draw an index $\rt\in[d_{\gamma}]$, with $\rt=1$ having probability $1-\Theta(\eps)$ and every other index having probability $\Theta(\eps/(d_{\gamma}-1))$, and output $(\rA_{\rt},0)$. Thus an $n$-sample can be written as $\rS(\rA|\vec{\rt})=((\rA_{\rt_1},0),\ldots,(\rA_{\rt_n},0))$, where the index sequence $\vec{\rt}=(\rt_1,\ldots,\rt_n)$ is drawn independently of $\rA$. In this view, conditioning on the sample reveals only the entries of $\rA$ indexed by $\{\vec{\rt}\}$; the unrevealed entries of $\rA$ remain random over the unused part of $\{2,\ldots,k_u\}$. This separation between the randomness of $\rA$ and the randomness of the sampled indices is what lets us analyze the missing points from $\rA$ and the predictions of the finite aggregation algorithm $\cA'$ on them. We use the same coupling between a random support vector and an independent sequence of sampled indices in the proof sketches of \cref{thm:notlearnablefiniteagglargerror} and \cref{thm:Lowerboundproperlearner}.

More formally, by Markov's inequality, with constant probability, $\Theta(d_{\gamma})$ points from $\rA \setminus \{1\}$ are missing from the training sequence $\rS$. Since $k_u$ is chosen sufficiently large, conditioned on $\rS$, these missing points are effectively distributed uniformly at random over $\{2, \ldots, k_u\}$. Now, since $\cA'$ selects at most $\max_{n \leq d_{\gamma}/(128\eps)} m(n)$ hypotheses and each hypothesis outputs $0$ on only $d_{\gamma}$ points, there are at most $d_{\gamma} \cdot \max_{n \leq d_{\gamma}/(128\eps)} m(n)$ points where any selected hypothesis outputs $0$. On all other points, every selected hypothesis outputs a value strictly larger than $\gamma$, and by the interpolating property of $r$, so does the final predictor.

By choosing $k_u$ sufficiently large relative to $d_{\gamma} \cdot \max_{n \leq d_{\gamma}/(128\eps)} m(n)$, we ensure that with constant probability, almost all of the $\Theta(d_{\gamma})$ missing points from $\rA$ are misclassified by the final predictor. This yields a loss of at least $\Theta(d_{\gamma}) \cdot \Theta(\eps/(d_{\gamma}-1)) = \Theta(\eps)$.

The hypothesis set construction is inspired by the first Cantor Class construction in \cite{DanielySBS11,DanielyS14}, used to show lower bounds for proper learners in the multiclass setting. However, our construction differs in that the hypotheses ``share'' the same input space and what corresponds to the zeros in our setting is a unique value in theirs. The latter distinction is key: it allows us to establish lower bounds not just for proper learners (which output a single hypothesis in $ \cH $), but for the much larger family of finite aggregation algorithms using interpolating aggregation rules.

\cref{thm:Lowerboundfiniteagg} shows that for any $\gamma$-graph dimension $d_{\gamma}$, there exist hypothesis classes with $\gamma$-OIG dimension at most $3$ for which finite aggregation algorithms with interpolating aggregation rules require sample complexity $\Omega(d_{\gamma}/\eps)$. This creates an arbitrarily large gap between the sample complexity of such algorithms and that of general improper learning algorithms, which can achieve sample complexity $\tilde{O}(1/\eps)$ for these classes.

\cref{thm:Lowerboundfiniteagg} shows how the sample complexity of finite aggregation algorithms with interpolating aggregation rule scales with the $ \gamma $-graph dimension when finite. This raises the question: what happens as the $\gamma$-graph dimension grows to infinity while the $\gamma$-OIG dimension remains constant? Can finite aggregation algorithms at least achieve some non-trivial loss bound, even if suboptimal?

Our next result answers this question in the negative. We show that there exists a hypothesis class with $\gamma$-OIG dimension $3$ for which any finite aggregation algorithm with an interpolating aggregation rule cannot achieve cutoff loss better than $1-\eps$, regardless of the sample size. In other words, for such hypothesis classes, finite aggregation algorithms with interpolating aggregation rules fail to achieve even non-trivial expected cutoff loss.
\begin{restatable}{theorem}{notlearnablefiniteagglargerror}\label{thm:notlearnablefiniteagglargerror}
    For any $ 0<\gamma<1 $ and $ 0< \eps <1$, there exists a hypothesis class $ \cH $ with $ \gamma $-OIG dimension $ 3 $ such that for any finite aggregation learning algorithm $ \cA' $ with an interpolating aggregation rule and any $ n'\in\mathbb{N} $, there exists a realizable distribution $ \cD $ such that if $ \cA' $ receives $ n\leq n' $ i.i.d.\ samples $\rS\sim\cD^{n}$, it holds that
    \begin{align}
        \e_{\rS\sim\cD^{n}}[\ls^{\gamma}_{\cD}(\cA'(\rS))] \geq 1-\eps.
    \end{align}
  \end{restatable}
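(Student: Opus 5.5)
We plan to reuse the construction behind \cref{thm:Lowerboundfiniteagg}, but with infinitely large zero-sets. Let $\cX=\mathbb{N}$. For every set $A\subseteq\mathbb{N}$ in some family $\cF$ of sets (to be fixed below), let $h_A(x)=0$ for $x\in A$ and $h_A(x)=\gamma_A$ otherwise. Here $\gamma_A\in(\gamma,1]$ are distinct values, one for each $A$. Since $\cF$ may be uncountable, we need uncountably many distinct values, but the interval $(\gamma,1]$ has enough of them.

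The natural choice is to take $\cF$ to be the family of all finite sets $A\subseteq\mathbb{N}$. We will check whether the $\gamma$-OIG argument from the overview of \cref{thm:Lowerboundfiniteagg} goes through unchanged. That argument only used two facts. First, distinct hypotheses have distinct nonzero values. Second, the orientation toward the smallest value in the edge direction makes the all-zeros pattern have out-degree zero. Neither fact uses $|A|=d_\gamma$, so the bound of at most $3$ should still hold.

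We must also show that the $\gamma$-OIG dimension equals $3$ exactly. For this we exhibit a set of $3$ points and a finite subgraph that forces out-degree greater than $1$. Take all patterns with at most one nonzero coordinate, using two distinct nonzero hypotheses per coordinate. Each coordinate's edge then contains the zero pattern and two nonzero patterns that are more than $\gamma$ apart. If the statement's "$3$" turns out delicate, we can adjust the family (for instance, adding sets that contain three points of the witness region) to certify the lower bound of $3$. We expect this part to be a routine finite check.

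For the failure of learning, we proceed as follows. Fix $\cA'$, $n'$, $\eps$, and let $M=\max_{n\le n'}m(n)<\infty$. Choose $N$ large and $k_u\gg N M/\eps$. Draw a random set $\rA$ of $N$ distinct points from $[k_u]$, and let $\cD_{\rA}$ be uniform on $\rA$ with label $0$. This distribution is realizable by $h_{\rA}$.

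As in the coupling of \cref{thm:Lowerboundfiniteagg}, write a sample as indices $\vec{\rt}$ into $\rA$. The sample reveals at most $n'$ entries, so at least $N-n'$ points of $\rA$ are unseen. Conditioned on the sample, these unseen points are uniform over the unused part of $[k_u]$.

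The trouble is that the selected hypotheses $h_{B_1},\dots,h_{B_m}$ may now have zero-sets $B_j$ of unbounded size. This means we cannot bound the union of zero-sets by $M d_\gamma$. This is the main obstacle.

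To fix it, we restrict $\cF$ to sets of a bounded size and let that size depend on $\eps$. Take all sets of size exactly $d$, where $d\ge n'/\eps'$. However, $d$ cannot depend on $n'$, since the class is fixed before $n'$ is given. Instead, we take $\cF=\bigcup_{d}\{A:|A|=d\}$, with the zero-sets placed in disjoint blocks $X_d$ of $\mathbb{N}$ (each countably infinite). Any hypothesis from a different block is nonzero on all of $X_d$.

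Given $n'$, choose a block with $d\ge 2n'/\eps$, and draw $\rA$ of size $d$ inside $X_d$, restricted to $[k_u]\cap X_d$. Every selected hypothesis zeros at most $d$ points of $X_d$, so the union of zero-sets inside the block has size at most $Md$. With $k_u\gg Md/\eps$, each unseen point of $\rA$ lands in this union with probability at most $\eps/2$ in expectation.

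On points outside the union, every input to $r$ exceeds $\gamma$, so the interpolating rule also outputs a value exceeding $\gamma$, and the loss at that point is $1$. Therefore
\[
\e[\ls^\gamma_{\cD}(\cA'(\rS))]\ \ge\ \frac{d-n'}{d}\Bigl(1-\frac{\eps}{2}\Bigr)\ \ge\ 1-\eps.
\]

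Finally, we must re-verify that the block structure keeps the $\gamma$-OIG dimension at $3$. The orientation argument still applies, because all nonzero values remain distinct.
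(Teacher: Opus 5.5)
Your failure-of-learning argument is correct and is essentially the paper's proof. You confine zero-sets of a fixed size to disjoint blocks, so hypotheses from other blocks contribute no zeros there. You draw a uniformly random support of that size inside a large finite window of one block. Conditioning on the revealed entries fixes the selected hypotheses while each unseen support point stays uniform. The interpolating rule then forces outputs $>\gamma$ off the union of at most $Md$ zero points. The paper uses finite blocks $\{(k,x):x\le k\}$, $k=i^2$, with zero-sets of size $\sqrt{k}$, so a single parameter $k_u$ makes both $n'/\sqrt{k_u}$ and $M/\sqrt{k_u}$ small. Your countably infinite blocks indexed by $d$, with a separate window size $k_u\gg Md/\eps$, decouple these two quantities and work equally well. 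You should still finish with the averaging step that turns the random $\rA$ into one fixed distribution.

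The step that fails is your claim that the $\gamma$-OIG dimension is exactly $3$. In your proposed three-point subgraph, orient every edge toward the all-zeros pattern. That vertex then has out-degree $0$, and each vertex with a single nonzero coordinate has out-degree $1$, which is not strictly more than $3/3$. A better subgraph or extra sets of the same form will not fix this. Your own upper-bound orientation applies to any class of the form ``$0$ on $A$, a value $\gamma_A$ unique to $A$ elsewhere''. For every set of $n\ge 3$ points it gives maximum out-degree at most $1\le n/3$. So under the strict-inequality definition used here, no set of size $3$ or more can be a witness, and the dimension is at most $2$. Separately, two values in $(\gamma,1]$ can differ by more than $\gamma$ only when $\gamma<1/2$. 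The paper proves only that the $\gamma$-OIG dimension is \emph{at most} $3$, and its appendix restatement says exactly that. This upper bound is all the learnability contrast needs, so drop the exactness claim rather than try to certify it.
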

\cref{thm:notlearnablefiniteagglargerror} establishes that there exist hypothesis classes with constant $\gamma$-OIG dimension, and hence learnable with $\tilde{O}(1/\eps)$ samples by the algorithm of Theorem 2 in \cite{attias2023optimal}, for which any finite aggregation algorithm with an interpolating aggregation rule fails to achieve non-trivial expected cutoff loss, regardless of sample size. Furthermore for $ \eps=o(\gamma) $ this is strictly worse than random guessing (choose a random point in $ \{2i\gamma\}_{i=1}^{1/(2\gamma)} $) which is $ 1-\Theta(\gamma) $. This demonstrates a fundamental limitation of such algorithms: to achieve even \emph{non trivial} performance on these classes, learning algorithms must either aggregate an infinite number of hypotheses or employ non-interpolating aggregation rules.

Results of a similar flavor have been established in the realizable multiclass classification setting by \cite{AsilisHV25multiproper}. Their Theorem~13 uses a construction inspired by the first Cantor class of \cite{DanielySBS11,DanielyS14}. Our proof technique is inspired by theirs but differs in two ways: it is adapted to the regression setting with cutoff loss, and it applies to interpolating (rather than proper) aggregation rules. We note that interpolating aggregation rules are not well-defined in the multiclass setting, while they are natural in regression. Furthermore in their setting, since the label space is infinite, random guessing can yield arbitrarily low error, so their lower bound does not imply failure to achieve non-trivial error.

We now provide a brief overview of the proof of \cref{thm:notlearnablefiniteagglargerror}, with the full proof given in \cref{sec:notlearnablefiniteagglargerror}.
\paragraph{Proof overview \cref{thm:notlearnablefiniteagglargerror}:}
We construct a hypothesis class $\cH$ with $\gamma$-OIG dimension $3$ over a ``split'' input space $\cX = \cup_{i\in\mathbb{N}}\{(k,x): k=i^2,  x\leq k \}$.
For any $k=i^2$ with $i\in \mathbb{N}$ and each $A \subseteq [k]$ with $|A| = \sqrt{k}$, we define the hypothesis $h_{k,A}$ be $0$ on points $ (k,x) $  with $x\in A$ and a unique value $\gamma_{k,A} \in (\gamma, 1]$ elsewhere.
We let the hypothesis class be $\cH = \bigcup_{i=1}^{\infty} \{h_{k,A} : k=i^2, A \subseteq [k], |A| = \sqrt{k}\}$.

The $\gamma$-OIG dimension is at most $3$ by similar reasoning as in the proof of \cref{thm:Lowerboundfiniteagg}. Given any point set, each hypothesis with two or more non-zero values appears only in singleton edges, giving it out-degree zero. Each hypothesis with one non-zero value appears in at most one non-singleton edge, giving it out-degree at most one. The only hypothesis that could appear in many non-singleton edges is the all-zeros hypothesis. However, by orienting each edge toward the hypothesis with the smallest value in that direction, which is the all-zeros hypothesis when present, its out-degree is zero. This implies the $\gamma$-OIG dimension is at most $3$.

We now choose $k_{u}$, which determines the effective support size for our hard distribution, sufficiently large relative to $n'$ and $\max_{n \leq n'} m(n)$.
For each $A \subseteq [k_{u}]$ with $|A| = \sqrt{k_{u}}$, we define a realizable distribution $\cD_{A}$ that assigns uniform mass to points in $\{(k_{u},i) : i\in A\}$ and zero mass elsewhere, with all these points labeled $0$. This distribution is realizable by $h_{k_{u},A}$. The hard distribution $\cD_{\rA}$ is then obtained by drawing $\rA$ uniformly at random from all such sets.
As in the proof overview of \cref{thm:Lowerboundfiniteagg}, we may equivalently draw a random vector enumerating $\rA$ and then draw an independent sequence of indices whose entries select the points observed in the training sample.

By choosing $k_{u} = \omega(n'^2)$, the training sequence $\rS$ observes only $o(\sqrt{k_{u}})$ points from $\rA$. Conditioned on $\rS$, the remaining points in $\rA$ are effectively uniformly distributed over $[k_{u}]$. Since the finite aggregation algorithm $\cA'$ selects at most $\max_{n \leq n'} m(n)$ hypotheses, and each selected hypothesis outputs $0$ on only $\sqrt{k_{u}}$ points, there are at most $\sqrt{k_{u}} \cdot \max_{n \leq n'} m(n)$ points among $(k_{u},1),\ldots,(k_{u},k_{u})$ where any selected hypothesis outputs $0$. On all other points, every selected hypothesis outputs a value strictly larger than $\gamma$, and by the interpolating property of $r$, so does the final predictor. Therefore, on the remaining points in $\rA$, the error rate is roughly $(k_{u} - \sqrt{k_{u}} \max_{n \leq n'} m(n)) / k_{u}$, which for $k_{u}$ sufficiently large is at least $1 - \eps$.

In light of these lower bounds, and especially \cref{thm:notlearnablefiniteagglargerror}, it may seem that the right approach to realizable regression is to run the algorithm of Theorem~2 in \cite{attias2023optimal}. This algorithm achieves sample complexity $\tilde{O}(d_{\gamma,\mathrm{OIG}}/\eps)$ for classes with $\gamma$-OIG dimension $d_{\gamma,\mathrm{OIG}}$, and \cite{attias2023optimal} showed that this complexity measure characterizes learnability in realizable regression: a class is learnable if and only if its $\gamma$-OIG dimension is finite. Moreover, they showed that this sample complexity is nearly optimal. However, while the algorithm is information-theoretically close to optimal, it relies on an extension of the one-inclusion graph to regression and is inherently not efficiently implementable, and far from what is done in practice.

In contrast, finite aggregation, for instance, aggregating empirical risk minimizers/interpolators, is a workhorse of practical machine learning. \cref{thm:LowerboundERMaggregation} and \cref{thm:Lowerboundfiniteagg} establish that the sample complexity of such aggregation techniques, whether through proper or interpolating aggregation of finitely many hypotheses, is governed by the $\gamma$-graph dimension rather than the $\gamma$-OIG dimension.
This raises a natural question: does there exist a simple and practical aggregation algorithm that achieves sample complexity $O(d_{\gamma}/\eps)$, matching the lower bounds of \cref{thm:LowerboundERMaggregation} and \cref{thm:Lowerboundfiniteagg} up to constant factors? Such an algorithm would witness the tightness of the lower bounds and establish optimality among all such aggregation procedures.

Our next result answers this question affirmatively. We show that a remarkably simple algorithm, taking the pointwise median of just three interpolators trained on independent samples, achieves optimality among all such aggregation procedures.
\begin{restatable}{theorem}{thmMedianOfThree}\label{thm:medianofthree}
Let $\gamma \in (0,1)$, let $\cH$ be a hypothesis class with $\gamma$-graph dimension $d_{\gamma}$, and let $\cD$ be a distribution over $\cX \times [0,1]$ realizable by $\cH$.
Let $\cA$ be any interpolator, and let $\rS_1, \rS_2, \rS_3 \sim \cD^n$ be three independent samples.
Define the median-of-three interpolator aggregation algorithm $M$ as
\[
M(x) = \median(\cA(\rS_1)(x), \cA(\rS_2)(x), \cA(\rS_3)(x)).
\]
Then
\[
\e_{\rS_1, \rS_2, \rS_3\sim \cD^{n}}[\ls_{\cD}^{\gamma}(M)] = O( d_{\gamma}/n ).
\]
\end{restatable}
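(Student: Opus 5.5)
Write $\rh_j=\cA(\rS_j)$ for $j=1,2,3$ and let $h^\star\in\cH$ be a realizer of the labels (on the support of $\cD$, $y=h^\star(x)$ almost surely). If $|M(x)-y|>\gamma$, then at least two of the three values $\rh_j(x)$ lie on the same side of $y$ at distance more than $\gamma$: the median is $\gamma$-far from $y$ only if two of the inputs are, and those two lie on the same side as the median. So
\[
\ls^{\gamma}_{\cD}(M)\le \sum_{j<k}\p_{(x,y)\sim\cD}\bigl[|\rh_j(x)-y|>\gamma,\ |\rh_k(x)-y|>\gamma\bigr].
\]
Because $\rS_j$ and $\rS_k$ are independent, each pairwise term in expectation becomes $\e_{(x,y)}\bigl[q(x,y)^2\bigr]$, where $q(x,y):=\p_{\rS\sim\cD^n}[|\cA(\rS)(x)-y|>\gamma]$. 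It therefore suffices to show
\[
\e_{(x,y)\sim\cD}\bigl[q(x,y)^2\bigr]=O(d_\gamma/n).
\]
The square is what buys the improvement over a single interpolator, which can suffer $\Theta(d_\gamma\log(1/\eps)/n)$ loss.

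To bound $\e[q^2]$, I would use a leave-one-out or exchangeability argument on a sample of size $n+1$. Write $\e[q(x,y)^2]=\p[\text{both } \cA(\rS)(x)\text{ and }\cA(\rS')(x)\text{ are }\gamma\text{-far from } y]$, with $\rS,\rS'\sim\cD^n$ independent and $(x,y)\sim\cD$ independent of both. Then symmetrize: draw $\rS,\rS'$ and the test point together as a larger sample of size about $2n+1$. By exchangeability, the probability equals the expected fraction of points $z$ in a combined pool $T$ such that $z$ is a "bad" point for two interpolators. Here a bad point is one not in the hypothesis's training set, but on which the hypothesis is $\gamma$-far from the truth. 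Cleaner still: condition on the multiset $T$ of $2n$ distinct points, which are $\gamma$-graph-shattered-relevant with witness $h^\star$ restricted to $T$. The hypotheses $\cA(\rS)$ with $\rS\subset T$ agree with $h^\star$ on $\rS$ and differ by more than $\gamma$ only off $\rS$. For fixed $T$, the key combinatorial quantity is then bounded using $d_\gamma$. The set $E_{\rS}=\{z\in T:|\cA(\rS)(z)-h^\star(z)|>\gamma\}$ is disjoint from $\rS$.

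The main obstacle is the combinatorial core: showing that for a uniformly random split, the expected number of points of $T$ lying in $E_{\rS}\cap E_{\rS'}$ (for two independent subsamples) is $O(d_\gamma)$. Equivalently, after normalization, the probability is $O(d_\gamma/n)$. Two directions look available.

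The first is to adapt the known optimal PAC argument for majority-of-three ERMs in binary classification (Aden-Ali et al.\ / Larsen-style). The reduction to binary classification uses the class $\{x\mapsto \ind[|h(x)-h^\star(x)|>\gamma]: h\in\cH\}$, whose VC dimension, for points where membership is decided relative to $h^\star$, is essentially controlled by the $\gamma$-graph dimension with witness $h^\star$. The one subtlety is that graph shattering requires exact agreement $h_b(x_i)=h(x_i)$ on zeros, and this is exactly what interpolation gives, since $\cA(\rS)=h^\star$ on $\rS$. So the error-indicator class restricted to consistent hypotheses has VC dimension at most $d_\gamma$.

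I would then invoke the analysis showing that for a VC class of dimension $d$, with $\rh_1,\rh_2,\rh_3$ consistent on independent samples, $\e[\p(\text{two err})]=O(d/n)$. If this cannot be cited directly, I would reprove it by a peeling argument over regions $R_i$ where $q\in(2^{-i-1},2^{-i}]$. On $R_i$, a consistent hypothesis errs on a $\ge 2^{-i-1}$ fraction with probability controlled by a uniform convergence bound, of the form $\p(\text{error set mass in } R_i \ge t)\lesssim \exp(-nt)$ times growth function. Combining this with the $q^2\le 2^{-i}q$ weighting and summing the geometric series gives $O(d_\gamma/n)$ without the logarithm.
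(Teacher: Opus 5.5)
\textbf{There is a genuine gap.} Your first step matches the paper: the median is $\gamma$-far only if two of the three interpolators are, and independence turns each pairwise term into $\e_{(x,y)}[q(x,y)^2]$. A peeling over level sets of $q$ is also what the paper does next. The gap is the ingredient that is supposed to make the peeling sum converge.

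You claim that $\{x\mapsto\ind[|h(x)-h^\star(x)|>\gamma]: h\in\cH\}$, restricted to hypotheses consistent with the sample, has VC dimension at most $d_\gamma$. That would let you invoke the binary majority-of-three theorem, or a Sauer--Shelah growth function in your peeling. The claim is false. $\gamma$-graph shattering requires \emph{exact} agreement with the witness on the $0$-coordinates of every pattern. Interpolation forces exact agreement only on the training points. On the points you want to shatter, a consistent hypothesis may satisfy $0<|h(x)-h^\star(x)|\le\gamma$, which your indicator records as $0$.

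Here is a concrete counterexample. Take $\gamma>1/2$, points $x_0,x_1,\dots,x_k$, and $h^\star\equiv 0$. For each $B\subseteq\{x_1,\dots,x_k\}$, let $g_B$ be $0$ at $x_0$, $1$ on $B$, and a value $\delta_B\in(1-\gamma,\gamma]$ on the remaining points, with the $\delta_B$ pairwise distinct. Then $d_\gamma=1$: only the value pair $\{0,1\}$ is $\gamma$-far, $h^\star$ is the only hypothesis vanishing off $x_0$, and each $\delta_B$ identifies $g_B$. Yet every $g_B$ interpolates any sample supported on $x_0$, and their indicators shatter $\{x_1,\dots,x_k\}$.

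What $d_\gamma$ actually bounds is the VC dimension of the \emph{partial} class with labels $0$ ($h=y$), $\star$ ($0<|h-y|\le\gamma$) and $1$ ($|h-y|>\gamma$). Your indicator class is one disambiguation of it. Alon et al.\ show that disambiguations of partial classes can be far richer than the partial VC dimension suggests, even of infinite VC dimension when it is $1$. Your exchangeability sketch leaves exactly this ``combinatorial core'' unproved as well.

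\textbf{What is missing.} You need a single-interpolator tail bound that holds for \emph{every} realizable distribution, in particular the conditionals $\cD_{R_i}$, and whose logarithm depends on $n/d_\gamma$ rather than on $n$. The paper proves $\ls^\gamma_{\cD}(\cA(\rS))\le 8(d_\gamma\Ln^2(2en/d_\gamma)+\ln(2/\delta))/n$. It uses the symmetrization/permutation argument of Attias et al., counting patterns on the double sample through a disambiguation of the partial class. It then sharpens Alon et al.'s disambiguation bound to $\ln|\overline{\cH}|\le 2d\Ln^2(en/d)$, applying $\binom{n}{\le k}\le(en/k)^k$ both to the initial shattering strength and to the number of update paths.

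In the peeling, let $p_i=\p[(x,y)\in R_i]$. Then $2^{-i}<\e_{\cD_{R_i}}[q]=\e_{\rS}[\ls^\gamma_{\cD_{R_i}}(\cA(\rS))]=O(d_\gamma\Ln^2(enp_i/d_\gamma)/(np_i))$, since by Chernoff at least $np_i/2$ of the sample lands in $R_i$ with high probability. This forces $p_i=O(i^2 2^i d_\gamma/n)$ and hence $\sum_i 2^{-2i}p_i=O(d_\gamma/n)$.

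The $n/d_\gamma$ dependence is essential. With the previously known $\ln^2 n$ bound, the same computation gives only $p_i=O(2^i d_\gamma(\ln^2 d_\gamma+i^2)/n)$, an extra $\ln^2 d_\gamma$ factor. Without a replacement for this lemma, your argument does not reach $O(d_\gamma/n)$.
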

\cref{thm:medianofthree} shows that the simple median-of-three interpolator aggregation algorithm achieves expected cutoff loss scaling as $O(d_{\gamma}/n)$. Setting this equal to $\eps$ and solving for $n$ yields a sample complexity of $O(d_{\gamma}/\eps)$ to achieve expected cutoff loss $\eps$. This matches the lower bounds of \cref{thm:LowerboundERMaggregation} and \cref{thm:Lowerboundfiniteagg} up to constant factors, establishing optimality of this algorithm and the tightness of the lower bounds.

A result similar to \cref{thm:medianofthree} was first established by \cite{Aden-AliHLZ24} in the context of realizable binary classification. They showed that taking the pointwise majority vote of three interpolators achieves expected $0$-$1$ loss $O(d/n)$, where $d$ is the VC dimension of the hypothesis class. This result was later extended to the weak-to-strong learning setting \cite{HogsgaardLM24,HogsgaardL25} and the multiclass classification setting \cite{AsilisHV25multiproper}.

The proof of \cref{thm:medianofthree} is provided in \cref{sec:upperbounds}.
We now provide a brief overview of the proof of \cref{thm:medianofthree}.

\paragraph{Proof overview \cref{thm:medianofthree}:}
The proof analyzes the error of the median-of-three interpolator aggregation algorithm by considering the event that the median prediction at a random point $\rx$ differs from the true label $\ry$ by more than $\gamma$. This can happen only if at least two of the three interpolators make predictions that differ from the true label by more than $\gamma$. By a union bound, the expected $\gamma$-cutoff loss $\ls_{\cD}^{\gamma}(M)$ is at most $3$ times the probability that two specific interpolators are both $\gamma$-far from the true label:
\[
3\e_{\rS_{1},\rS_{2}\sim\cD^{n}}[\p_{\rx,\ry}[|\cA(\rS_{1})(\rx)-\ry|>\gamma, |\cA(\rS_{2})(\rx)-\ry|>\gamma]],
\]
where we have used that $\rS_{1}, \rS_{2},$ and $\rS_{3}$ are identically distributed.
Using the independence of $\rS_1$ and $\rS_2$, and exchanging the order of expectation and probability, we can rewrite this as
$
3\e_{\rx,\ry}[\p_{\rS\sim\cD^{n}}[|\cA(\rS)(\rx)-\ry|>\gamma]^{2}].
$
Partitioning $ \cX\times [0,1] $ into sets $
R_{i}$ eqaul to $\{(x,y) : \p_{\rS\sim\cD^{n}}[|\cA(\rS)(x)-y|>\gamma] \in (2^{-i},2^{-i+1}] \}$
we can upper bound the above expectation by
\[
3\textstyle\sum_{i=1}^{\infty}\p_{\rx,\ry}[(\rx,\ry)\in R_{i}] \cdot 2^{-2i+2}.
\]
We show that $p_{i}:=\p_{\rx,\ry}[(\rx,\ry)\in R_{i}]$ is $O(i^{2}2^{i}d_{\gamma}/n)$, which is sufficiently fast to ensure that the sum converges to $O(d_{\gamma}/n)$.
To bound $p_{i}$, let $\cD_{R_{i}}$ denote the distribution over $\cX \times [0,1]$ conditioned on $(\rx,\ry)\in R_{i}$.
By definition of $R_{i}$,
\[
2^{-i} <
\e_{\rx,\ry\sim \cD_{R_{i}}}[\p_{\rS\sim\cD^{n}}[|\cA(\rS)(\rx)-\ry|>\gamma]].
\]
Exchanging the order of expectation, this equals
\[
\e_{\rS\sim\cD^{n}}[\ls_{\cD_{R_{i}}}^{\gamma}(\cA(\rS))].
\]
Using an upper bound on the interpolator cutoff loss in terms of the $\gamma$-graph dimension, we can bound this quantity by $O(d_{\gamma}\ln((np_{i})/d_{\gamma})/(np_{i}))$, yielding the relation
\[
 2^{-i}=O(d_{\gamma}\ln((np_{i})/d_{\gamma})/(np_{i})) .
\]
Solving for $p_{i}$ gives the upper bound
\[
p_{i} = O(i^{2}2^{i}d_{\gamma}/n).
\]
The main novelty in this proof lies in a new error bound on the cutoff loss for the interpolator in terms of the $\gamma$-graph dimension, which is sharp enough to yield the desired bound on $p_{i}$. Specifically, we show that: for any hypothesis class with $\gamma$-graph dimension $d_{\gamma}$, interpolator $\cA$, and  distribution $\cD$, with probability at least $1-\delta$ over $\rS \sim \cD^n$,
\[
\ls_{\cD}^{\gamma}(\cA(\rS)) = O\big( (d_{\gamma}\ln^{2}(n/d_{\gamma})+\ln(1/\delta))/n\big).
\]
This is a strict improvement over the previous best-known bound of Theorem 1 in \cite{attias2023optimal}, which established the same guarantee but with $\ln^{2}(n)$ instead of $\ln^{2}(n/d_{\gamma})$.
The difference may seem minor, but it is crucial for obtaining the desired bound on $p_{i}$ in the proof of \cref{thm:medianofthree} and obtaining optimality of the algorithm. Had we used the previous bound from Theorem 1 of \cite{attias2023optimal}\footnote{Theorem 1 in \cite{attias2023optimal} has a small typo in the statement from reading the proof carefully, the error decays as $O((d_{\gamma}\ln^{2}(n)+\ln(1/\delta))/n)$, which is what we stated above.}, we would have obtained
\[
2^{-i} =O( d_{\gamma}\ln^{2}(np_{i})/(np_{i})),
\]
implying the weaker bound $p_{i} = O(2^{i}d_{\gamma}(\ln^{2}(d_{\gamma})+i^{2})/n)$, resulting in a superfluous $ \ln^{2}{(d_{\gamma})} $-factor in the bound, so being insufficient to ensure the optimal $O(d_{\gamma}/n)$ bound on the expected cutoff loss.

Improving the interpolator upper bound to depend on $\ln^{2}(n/d_{\gamma})$ rather than $\ln^{2}(n)$ is therefore a key technical contribution of this work. This improvement requires refining a key step in the proof of \cite{attias2023optimal}, which uses a bound on the disambiguation size of a partial concept class from Theorem 12 of \cite{alon2022theory}. We observe that an improved disambiguation size bound yields the improved interpolator bound stated above, and we provide such an improvement to Theorem 12 of \cite{alon2022theory}.

\cref{thm:medianofthree}, combined with \cref{thm:LowerboundERMaggregation} and \cref{thm:Lowerboundfiniteagg}, shows optimality of the median-of-three interpolator aggregation algorithm up to constant factors among a broad class of aggregation procedure, and tightness of the lower bounds. With the median-of-three interpolator algorithm being "close" to proper, a natural question arises: can proper learning algorithms, those that output a hypothesis from $\cH$, also achieve this optimal sample complexity, or are aggregation algorithms (even simple) inherently more powerful?

Our final result answers this question by establishing a lower bound on the sample complexity of any proper learner in terms of the $\gamma$-graph dimension that is strictly larger than both the lower bounds in \cref{thm:LowerboundERMaggregation} and \cref{thm:Lowerboundfiniteagg} and the matching upper bound in \cref{thm:medianofthree}. Together, these bounds demonstrate that aggregation, even when combining only three interpolating hypotheses, can achieve strictly better sample complexity than any proper learning algorithm in the realizable regression setting.
\begin{restatable}{theorem}{LowerboundproperlearnerTheorem}\label{thm:Lowerboundproperlearner}
    For any $ 0<\gamma<1 $, and $ d_{\gamma}\geq 2 $, there exists a hypothesis class $ \cH $ with $ \gamma $-graph dimension $ d_{\gamma} $ such that for any proper learning algorithm $ \cA $, and for any $ 0<\eps<\frac{1}{64 e} $, there exists a realizable distribution $ \cD $ such that if $ \cA $ receives $ n \leq \frac{d_{\gamma}}{32\eps} \ln{(\frac{1}{64 e \eps})} $ i.i.d. samples $ \rS\sim \cD^{n}$ from $ \cD $, it holds that
    \begin{align}
    \e_{\rS\sim\cD^{n}}[\ls^{\gamma}_{\cD}(\cA(\rS))] \geq 4\eps/3.
    \end{align}
\end{restatable}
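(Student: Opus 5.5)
We will prove \cref{thm:Lowerboundproperlearner} with a Cantor-type class in the spirit of the one in the proof of \cref{thm:Lowerboundfiniteagg}, now over a \emph{finite} domain. Take $\cX=[k]$ with $k$ to be chosen. For every $A\subseteq[k]$ with $|A|=d_{\gamma}$ let $h_A(x)=0$ for $x\in A$ and $h_A(x)=\gamma_A$ otherwise, where the values $\gamma_A\in(\gamma,1]$ are pairwise distinct. The argument in the overview of \cref{thm:Lowerboundfiniteagg} shows that this class has $\gamma$-graph dimension exactly $d_{\gamma}$. A proper learner must output some $h_{A'}$, and $h_{A'}$ is $\gamma$-far from the label $0$ at every point outside $A'$. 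Hence on a distribution supported on points of $\rA$ with label $0$, the learner errs on every point of $\rA\setminus A'$, and since $|A'|=d_{\gamma}$ it can be correct on at most $d_{\gamma}$ points in total. This is the same mechanism as the classical $\Omega((d/\eps)\ln(1/\eps))$ lower bound for proper learners of \cite{DanielySBS11,DanielyS14}. The extra $\ln(1/\eps)$ will come from making the support of the hard distribution much larger than $d_{\gamma}$, so that the learner cannot cover the unseen support points.

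\textbf{Hard distribution.} Set $N\approx 1/\eps$ (more precisely $N=\Theta(1/\eps)$ so that, with the chosen constants, unseen heavy points remain). Let $\rA$ consist of one point $\rA_1=1$ together with $d_{\gamma}N/2$ further random points (ignoring integrality), drawn without replacement from $\{2,\ldots,k\}$ with $k$ huge. The distribution $\cD_{\rA}$ puts mass $1-c\eps d_{\gamma}\cdot$(appropriate) on $\rA_1$ and mass $p=\Theta(\eps)\cdot(2/d_{\gamma})\cdot\ldots$ on each other point; more cleanly, each of the $M=d_{\gamma}/(c'\eps)$ light points gets mass $c''\eps^2\cdot$ so that the total light mass is $\Theta(\eps)$... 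Concretely, I would choose $M$ light points each of mass $q$ with $Mq=16\eps/3$ and $q=64e\eps\cdot\eps/d_\gamma$-scale, tuned so that $nq\le \frac{1}{2}\ln(1/(64e\eps))$. All labels are $0$.

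There is a catch: this distribution is realizable only if the full support $\{\rA\}$ lies in some $A$ of size $d_{\gamma}$, so the support cannot exceed $d_{\gamma}$ points. I would therefore modify the class so that realizability holds while $d_\gamma$ is preserved. Use the split-domain trick: $\cX=[d_{\gamma}]\times[k]$, with each hypothesis choosing one zero point in every block $\{i\}\times[k]$ and taking a unique value $>\gamma$ elsewhere. Each block behaves like a one-dimensional Cantor class, and the graph dimension is $d_{\gamma}$ by the same uniqueness argument as before. In block $i$, the distribution places mass $\approx \eps/d_{\gamma}$-scale on $1/\eps$-many candidate points with \emph{random} weights... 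This still needs realizability by a single zero per block, so the light points in a block must all coincide. I would then instead realize the log factor through the coupon-collector effect over \emph{many distributions}: a prior over $\cD$ under which the target zero in each block is one of $\approx 1/\eps$ unseen points. The intended lower bound is: with $n\le\frac{d_\gamma}{32\eps}\ln\frac{1}{64e\eps}$, a constant fraction of blocks have their light point unseen; a proper learner commits to one zero per block, and guessing among the remaining candidates fails with high probability.

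\textbf{Main obstacle.} The hard step is reconciling three requirements: realizability, which forces at most one zero per block, or $d_\gamma$ zeros overall; properness, which forces the output to be $\gamma$-far off its own zero set; and the need for many candidate targets in order to obtain the logarithm. The final bookkeeping, in the style of Markov's inequality from \cref{thm:LowerboundERMaggregation} to count unseen points and then multiplying by the per-point mass to get loss $\ge 4\eps/3$, is routine once the construction is fixed. I would try first to adapt the \cite{DanielySBS11} second-Cantor-class argument verbatim, replacing labels with unique $\gamma$-separated values.
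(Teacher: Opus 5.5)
There is a genuine gap. Your proposal never reaches a working construction, and the classes it settles on provably cannot give the logarithm.

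\textbf{Why your classes fail.} In both classes you consider, every hypothesis has at most $d_\gamma$ zeros, and any value $\gamma_A$ it takes identifies it uniquely. (The two classes are: a single block where each hypothesis vanishes on exactly $d_\gamma$ points, and $d_\gamma$ blocks with one zero each.) Hence a realizable $\cD$ has at most $d_\gamma$ zero-labelled support points, and any observed non-zero label reveals the target.

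Consider the proper learner that outputs the revealed target if there is one, and otherwise any hypothesis vanishing on the observed points. It errs only on unseen support. So, with $q$ the non-zero-labelled mass, its expected loss is at most $\sum_x p_x(1-p_x)^n + q(1-q)^n \le (d_\gamma+1)/(en)$. At $n=\frac{d_\gamma}{32\eps}\ln\frac{1}{64e\eps}$ this is $O(\eps/\ln(1/\eps))$, which is below $4\eps/3$ for small $\eps$, whatever prior over targets or block structure you use.

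The logarithm requires a realizable support of size about $d_\gamma/\eps$, and hypotheses with at most $d_\gamma$ zeros cannot realize that. This, not bookkeeping, is the obstacle you named but did not resolve. Separately, $\cH$ must be fixed before $\eps$, so a domain $[k]$ with $k$ (or $N\approx 1/\eps$) depending on $\eps$ is not allowed.

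\textbf{The missing idea.} Swap the roles of zeros and non-zeros. On the domain $\bigcup_{k\ge d_\gamma-1}\{(k,x):x\le k\}$, let $h_{k,A}$ vanish on $\{k\}\times([k]\setminus A)$, where $|A|=d_\gamma-1$, and take a unique value $\gamma_{k,A}\in(\gamma,1]$ everywhere else. Taking the union over all $k$ makes $\cH$ independent of $\eps$.

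The graph dimension is exactly $d_\gamma$. On one side, $d_\gamma$ zero points of one block are shattered, with the all-far pattern supplied by a hypothesis from another block. On the other side, a witness for $d_\gamma+1$ points must vanish on all of them by uniqueness. That forces a single block, in which only $d_\gamma-1$ points can be made far.

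The lower bound then runs as follows.
\begin{itemize}
\item The uniform distribution on $k_u-d_\gamma+1$ label-$0$ points of block $k_u=\lceil d_\gamma/(16\eps)\rceil$ is realizable, and each point has mass at least $16\eps/d_\gamma$.
\item A coupon-collector estimate (mean and variance of a sum of geometric variables, then Chebyshev) shows that, for $n$ as in the theorem, with probability at least $1/2$ at least $d_\gamma$ support points remain unseen.
\item Given the sample, the unseen support points and the $d_\gamma-1$ excluded points are exchangeable. A proper output $h_{k',A'}$ is $\gamma$-far on all of $A'$, or on the whole block if $k'\neq k_u$.
\item A short case analysis then gives at least $d_\gamma/6$ misclassified support points in expectation. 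The expected loss is therefore at least $\tfrac12\cdot\tfrac{d_\gamma}{6}\cdot\tfrac{16\eps}{d_\gamma}=\tfrac{4\eps}{3}$.
\end{itemize}
This is the coupon-collector mechanism of \cite{AuerO07,BousquetHMZ20}, not a Cantor-class argument.
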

\cref{thm:Lowerboundproperlearner} shows that for any $\gamma$-graph dimension $d_{\gamma}$, there exist hypothesis classes for which any proper learning algorithm requires sample complexity at least $\Omega(d_{\gamma}\log(1/\eps)/\eps)$ to achieve expected cutoff loss $\eps$. This lower bound is strictly larger than both the lower bounds in \cref{thm:LowerboundERMaggregation} and \cref{thm:Lowerboundfiniteagg}, which are $\Omega(d_{\gamma}/\eps)$, and the matching upper bound in \cref{thm:medianofthree}, which is $O(d_{\gamma}/\eps)$. Thus, aggregation, even when combining only three interpolating hypotheses, can achieve strictly better sample complexity than any proper learning algorithm in the realizable regression setting.

We now provide a brief overview of the proof of \cref{thm:Lowerboundproperlearner}, with the full proof given in \cref{sec:Lowerboundproperlearner}.

\paragraph{Proof overview \cref{thm:Lowerboundproperlearner}:}
The proof constructs a hypothesis class $\cH$ with $\gamma$-graph dimension $d_{\gamma}$, over a ``split'' input space $\cX = \bigcup_{k=d_\gamma-1}^{\infty} \cX_k$, where each $\cX_{k} = \{(k,x) : x \leq k\}$ for $k \geq d_{\gamma}-1$. For each $ k $ and $ A\subseteq [k] $ with $ |A|=d_{\gamma}-1 $, the hypothesis $ h_{k,A} $ is defined to output $ 0 $ on points $ (k,x)$, $x\in [k]\backslash A $ and a unique value $ \gamma_{k,A}\in(\gamma,1] $ elsewhere. The overall hypothesis class is $ \cH = \bigcup_{k=d_{\gamma}-1}^{\infty} \{h_{k,A} : A\subseteq [k], |A|=d_{\gamma}-1\} $. To see that the $\gamma$-graph dimension is exactly $d_{\gamma}$, we first observe that the point set $\{(2d_{\gamma},1),\ldots,(2d_{\gamma},d_{\gamma})\}$ with witness hypothesis $ h_{2d_{\gamma},\{d_{\gamma}+2,\ldots,2d_{\gamma}\}} $ (which is all zero on the point set) is $\gamma$-graph shattered. This can be seen by for any subset $ B\subseteq \{(2d_{\gamma},1),\ldots,(2d_{\gamma},d_{\gamma})\} $ of size at most $ d_{\gamma}-1 $ there exists a hypothesis $ h_{k,B\cup B'} $, with $ B'\subseteq \{(2d_{\gamma},d_{\gamma}+1),\ldots,(2d_{\gamma},2d_{\gamma})\} $ such that $ |B\cup B'|=d_{\gamma}-1 $, such that if $ (k,x)\in B $ then $ h_{k,B\cup B'}(k,x)=\gamma_{k,B\cup B'}>\gamma $ so strictly more than $ \gamma $ away from $ h_{2d_{\gamma},\{d_{\gamma}+1,\ldots,2d_{\gamma}\}} $  and else $ 0 $ implying that it is equal to $ h_{2d_{\gamma},\{d_{\gamma}+1,\ldots,2d_{\gamma}\}} $ outside $ B $. Furthermore any hypothesis $ h_{k,A} $ with $ k\not=2d_{\gamma} $ would be more than $ \gamma $ away from $ h_{2d_{\gamma},\{d_{\gamma}+1,\ldots,2d_{\gamma}\}} $ on the whole point set. Concluding that the point set can be $ \gamma $-graph shattered. Conversely, for any set of size $d_{\gamma}+1$, any witness hypothesis $h_{k,A}$ outputting a unique value $\gamma_{k,A}$ on one of these points, call it $ x' $ , is preventing the existence of another hypothesis $ h' $  that agrees with $h_{k,A}$ on this point $ x' $  while being $\gamma$-far on another point, as the condition $ h'(x')=h_{k,A}(x')=\gamma_{k,A} $ implies $ h'=h_{k,A} $ by uniqueness of value $ \gamma_{k,A} $ for the hypothesis $ h_{k,A} $. Thus, the only other possible witness hypotheses $ h_{k,A} $ are all zero on the $ d_{\gamma}+1 $ points, which implies that all the points $ (k_{1},x_{1}),\ldots,(k_{d_{\gamma}+1},x_{d_{\gamma}+1}) $ satisfy $ k=k_{1}=\ldots=k_{d_{\gamma}+1} $. However, for this witness it is not possible to find a hypothesis that differs on $ d_{\gamma} $ points and agrees on the last one, since any hypothesis $ h_{k',A'} $ with $ k'=k $ is strictly larger than $ \gamma $ on only $ |A'|=d_{\gamma}-1 $ points in $ (k,1),\ldots,(k,k) $, so it can differ from $ h_{k,A} $ on at most $ d_{\gamma}-1 $ points, and $ h_{k',A'} $ with $ k'\not=k $ is strictly larger than $ \gamma $ for all points in $ (k,1),\ldots,(k,k) $. Thus, we conclude that no set of $ d_{\gamma}+1 $ points can be $ \gamma $-graph shattered, and hence the $ \gamma $-graph dimension is exactly $ d_{\gamma} $ as claimed.

To construct the hard distribution $ \cD $, we first carefully choose $ k_{u} $, the effective universe size, sufficiently large relative to $ \eps $, namely $ k_{u} = \Theta(d_{\gamma}/\eps) $. For any $ A\subseteq [k_{u}] $ with $ |A|=k_{u}-d_{\gamma}+1 $, we define a realizable distribution $ \cD_{A} $ that assigns uniform mass $ \Theta(1/k_{u})=\Theta(\eps/d_{\gamma}) $ to points in $ \{(k_{u},i) : i\in A\} $ and zero mass elsewhere, with the points being labelled $ 0 $. That is, the distribution only reveals zeros to the learner and is realizable by $ h_{k_{u},A^c} $, where $A^c=[k_u]\backslash A$. The hard distribution $ \cD_{\rA} $ is then obtained by drawing $ \rA $ uniformly at random from all such sets. Again, as in the proof overview of \cref{thm:Lowerboundfiniteagg}, we may view the sample as generated from a random vector enumerating $\rA$ together with an independent sequence of sampled indices. The first observation is that by a coupon collector argument, after drawing $ O(d_{\gamma}\ln{(1/\eps )}/\eps) $ samples from the distribution $ \cD_{\rA} $, with constant probability at least $ \Omega(d_{\gamma}) $ points from $ \{(k_{u},i) : i\in \rA\} $ are missing from the training sequence $ \rS $. Conditioned on $ \rS $, these points can ``intuitively" be thought of as distributed uniformly at random over the points not in $ \rS $. Furthermore, given $ \rS $, the proper learning algorithm $ \cA $ has selected a hypothesis $ h_{k_{u},A'} $ for some $ A' \subseteq [k_{u}] $ with $ |A'|=d_{\gamma}-1 $. Since at least $ \Omega(d_{\gamma}) $ points are missing from $ \rS $, we can show that with constant probability, at least $ \Omega(d_{\gamma}) $ of these missing points lie in $ A' $, meaning that the hypothesis $ h_{k_{u},A'} $ outputs a value strictly larger than $ \gamma $ on these points and thus misclassifies them. Since each of these points has mass $ \Theta(\eps/d_{\gamma}) $ under $ \cD_{\rA} $, this yields a total loss of at least $ \Omega(d_{\gamma}) \cdot \Theta(\eps/d_{\gamma}) = \Omega(\eps) $ as claimed.

The idea in this proof, using a coupon collector argument to show that proper learners incur an extra $\ln(1/\eps)$ factor in their sample complexity, originates from \cite{AuerO07} in the context of binary classification. However, their result applied only to a specific worst case proper learning algorithm, whereas our proof establishes the lower bound for any proper learning algorithm. The latter was also achieved for binary classification by \cite{BousquetHMZ20}.

\section*{Acknowledgements}
While this work was carried out,
Mikael M{\o}ller H{\o}gsgaard, Kasper Green Larsen and Liang-Yu Zou were supported by the European Union (ERC, TUCLA, 101125203).
Views and opinions expressed are however those of the author(s) only and do not necessarily reflect those of the European Union or the European Research Council.
Neither the European Union nor the granting authority can be held responsible for them. Furthermore, Mikael M{\o}ller H{\o}gsgaard was supported by an Internationalisation Fellowship from the Carlsberg Foundation.

\bibliographystyle{icml2026}
\bibliography{references.bib}

@inproceedings{alon2022theory,
  author       = {Noga Alon and
                  Steve Hanneke and
                  Ron Holzman and
                  Shay Moran},
  title        = {A Theory of {PAC} Learnability of Partial Concept Classes},
  booktitle    = {62nd {IEEE} Annual Symposium on Foundations of Computer Science, {FOCS}
                  2021, Denver, CO, USA, February 7-10, 2022},
  pages        = {658--671},
  publisher    = {{IEEE}},
  year         = {2021},
  url          = {https://doi.org/10.1109/FOCS52979.2021.00070},
  doi          = {10.1109/FOCS52979.2021.00070},
  bibsource    = {dblp computer science bibliography, https://dblp.org}
}

@inproceedings{attias2023optimal,
  author       = {Idan Attias and
                  Steve Hanneke and
                  Alkis Kalavasis and
                  Amin Karbasi and
                  Grigoris Velegkas},
  editor       = {Alice Oh and
                  Tristan Naumann and
                  Amir Globerson and
                  Kate Saenko and
                  Moritz Hardt and
                  Sergey Levine},
  title        = {Optimal Learners for Realizable Regression: {PAC} Learning and Online
                  Learning},
  booktitle    = {Advances in Neural Information Processing Systems 36: Annual Conference
                  on Neural Information Processing Systems 2023, NeurIPS 2023, New Orleans,
                  LA, USA, December 10 - 16, 2023},
  year         = {2023},
  bibsource    = {dblp computer science bibliography, https://dblp.org}
}

@book{Shalev-Shwartz_Ben-David_2014,
  author       = {Shai Shalev{-}Shwartz and
                  Shai Ben{-}David},
  title        = {Understanding Machine Learning - From Theory to Algorithms},
  publisher    = {Cambridge University Press},
  year         = {2014},
  isbn         = {978-1-10-705713-5},
  bibsource    = {dblp computer science bibliography, https://dblp.org}
}

@inproceedings{AsilisHV25multiproper,
  author       = {Julian Asilis and
                  Mikael M{\o}ller H{\o}gsgaard and
                  Grigoris Velegkas},
  editor       = {Gautam Kamath and
                  Po{-}Ling Loh},
  title        = {Understanding Aggregations of Proper Learners in Multiclass Classification},
  booktitle    = {International Conference on Algorithmic Learning Theory, 24-27 February
                  2025, Politecnico di Milano, Milan, Italy},
  series       = {Proceedings of Machine Learning Research},
  volume       = {272},
  pages        = {89--111},
  publisher    = {{PMLR}},
  year         = {2025},
  url          = {https://proceedings.mlr.press/v272/asilis25a.html},
  bibsource    = {dblp computer science bibliography, https://dblp.org}
}

@inproceedings{DanielyS14,
  author       = {Amit Daniely and
                  Shai Shalev{-}Shwartz},
  editor       = {Maria{-}Florina Balcan and
                  Vitaly Feldman and
                  Csaba Szepesv{\'{a}}ri},
  title        = {Optimal learners for multiclass problems},
  booktitle    = {Proceedings of The 27th Conference on Learning Theory, {COLT} 2014,
                  Barcelona, Spain, June 13-15, 2014},
  series       = {{JMLR} Workshop and Conference Proceedings},
  volume       = {35},
  pages        = {287--316},
  publisher    = {JMLR.org},
  year         = {2014},
  url          = {http://proceedings.mlr.press/v35/daniely14b.html},
  bibsource    = {dblp computer science bibliography, https://dblp.org}
}

@inproceedings{DanielySBS11,
  author       = {Amit Daniely and
                  Sivan Sabato and
                  Shai Ben{-}David and
                  Shai Shalev{-}Shwartz},
  editor       = {Sham M. Kakade and
                  Ulrike von Luxburg},
  title        = {Multiclass Learnability and the {ERM} principle},
  booktitle    = {{COLT} 2011 - The 24th Annual Conference on Learning Theory, June
                  9-11, 2011, Budapest, Hungary},
  series       = {{JMLR} Proceedings},
  volume       = {19},
  pages        = {207--232},
  publisher    = {JMLR.org},
  year         = {2011},
  url          = {http://proceedings.mlr.press/v19/daniely11a/daniely11a.pdf},
  bibsource    = {dblp computer science bibliography, https://dblp.org}
}

@inproceedings{Aden-AliHLZ24,
  author       = {Ishaq Aden{-}Ali and
                  Mikael M{\o}ller H{\o}gsgaard and
                  Kasper Green Larsen and
                  Nikita Zhivotovskiy},
  editor       = {Shipra Agrawal and
                  Aaron Roth},
  title        = {Majority-of-Three: The Simplest Optimal Learner?},
  booktitle    = {The Thirty Seventh Annual Conference on Learning Theory, June 30 -
                  July 3, 2023, Edmonton, Canada},
  series       = {Proceedings of Machine Learning Research},
  volume       = {247},
  pages        = {22--45},
  publisher    = {{PMLR}},
  year         = {2024},
  url          = {https://proceedings.mlr.press/v247/aden-ali24a.html},
  bibsource    = {dblp computer science bibliography, https://dblp.org}
}

@inproceedings{HogsgaardL25,
  author       = {Mikael M{\o}ller H{\o}gsgaard and
                  Kasper Green Larsen},
  editor       = {Nika Haghtalab and
                  Ankur Moitra},
  title        = {Improved Margin Generalization Bounds for Voting Classifiers},
  booktitle    = {The Thirty Eighth Annual Conference on Learning Theory, 30-4 July
                  2025, Lyon, France},
  series       = {Proceedings of Machine Learning Research},
  volume       = {291},
  pages        = {2822--2855},
  publisher    = {{PMLR}},
  year         = {2025},
  url          = {https://proceedings.mlr.press/v291/hogsgaard-moller25a.html},
  bibsource    = {dblp computer science bibliography, https://dblp.org}
}

@inproceedings{HogsgaardLM24,
  author       = {Mikael M{\o}ller H{\o}gsgaard and
                  Kasper Green Larsen and
                  Markus Engelund Mathiasen},
  editor       = {Amir Globersons and
                  Lester Mackey and
                  Danielle Belgrave and
                  Angela Fan and
                  Ulrich Paquet and
                  Jakub M. Tomczak and
                  Cheng Zhang},
  title        = {The Many Faces of Optimal Weak-to-Strong Learning},
  booktitle    = {Advances in Neural Information Processing Systems 38: Annual Conference
                  on Neural Information Processing Systems 2024, NeurIPS 2024, Vancouver,
                  BC, Canada, December 10 - 15, 2024},
  year         = {2024},
  bibsource    = {dblp computer science bibliography, https://dblp.org}
}

@article{Hanneke16,
  author       = {Steve Hanneke},
  title        = {The Optimal Sample Complexity of {PAC} Learning},
  journal      = {J. Mach. Learn. Res.},
  volume       = {17},
  pages        = {38:1--38:15},
  year         = {2016},
  url          = {https://jmlr.org/papers/v17/15-389.html},
  bibsource    = {dblp computer science bibliography, https://dblp.org}
}

@article{AuerO07,
  author       = {Peter Auer and
                  Ronald Ortner},
  title        = {A new {PAC} bound for intersection-closed concept classes},
  journal      = {Mach. Learn.},
  volume       = {66},
  number       = {2-3},
  pages        = {151--163},
  year         = {2007},
  url          = {https://doi.org/10.1007/s10994-006-8638-3},
  doi          = {10.1007/S10994-006-8638-3},
  bibsource    = {dblp computer science bibliography, https://dblp.org}
}

@inproceedings{BousquetHMZ20,
  author       = {Olivier Bousquet and
                  Steve Hanneke and
                  Shay Moran and
                  Nikita Zhivotovskiy},
  editor       = {Jacob D. Abernethy and
                  Shivani Agarwal},
  title        = {Proper Learning, Helly Number, and an Optimal {SVM} Bound},
  booktitle    = {Conference on Learning Theory, {COLT} 2020, 9-12 July 2020, Virtual
                  Event [Graz, Austria]},
  series       = {Proceedings of Machine Learning Research},
  volume       = {125},
  pages        = {582--609},
  publisher    = {{PMLR}},
  year         = {2020},
  url          = {http://proceedings.mlr.press/v125/bousquet20a.html},
  bibsource    = {dblp computer science bibliography, https://dblp.org}
}

@inproceedings{BartlettLW94,
  author       = {Peter L. Bartlett and
                  Philip M. Long and
                  Robert C. Williamson},
  editor       = {Manfred K. Warmuth},
  title        = {Fat-Shattering and the Learnability of Real-Valued Functions},
  booktitle    = {Proceedings of the Seventh Annual {ACM} Conference on Computational
                  Learning Theory, {COLT} 1994, New Brunswick, NJ, USA, July 12-15,
                  1994},
  pages        = {299--310},
  publisher    = {{ACM}},
  year         = {1994},
  url          = {https://doi.org/10.1145/180139.181158},
  doi          = {10.1145/180139.181158},
  bibsource    = {dblp computer science bibliography, https://dblp.org}
}

@article{Simon97,
  author       = {Hans Ulrich Simon},
  title        = {Bounds on the Number of Examples Needed for Learning Functions},
  journal      = {{SIAM} J. Comput.},
  volume       = {26},
  number       = {3},
  pages        = {751--763},
  year         = {1997},
  url          = {https://doi.org/10.1137/S0097539793259185},
  doi          = {10.1137/S0097539793259185},
  bibsource    = {dblp computer science bibliography, https://dblp.org}
}

@article{AlonBCH97,
  author       = {Noga Alon and
                  Shai Ben{-}David and
                  Nicol{\`{o}} Cesa{-}Bianchi and
                  David Haussler},
  title        = {Scale-sensitive dimensions, uniform convergence, and learnability},
  journal      = {J. {ACM}},
  volume       = {44},
  number       = {4},
  pages        = {615--631},
  year         = {1997},
  url          = {https://doi.org/10.1145/263867.263927},
  doi          = {10.1145/263867.263927},
  bibsource    = {dblp computer science bibliography, https://dblp.org}
}

@article{BartlettL24,
  author       = {Peter L. Bartlett and
                  Philip M. Long},
  title        = {Corrigendum to "Prediction, learning, uniform convergence, and
                  scale-sensitive dimensions" {[J.} Comput. Syst. Sci. 56 {(2)}
                  {(1998)} 174-190]},
  journal      = {J. Comput. Syst. Sci.},
  volume       = {140},
  pages        = {103465},
  year         = {2024},
  url          = {https://doi.org/10.1016/j.jcss.2023.103465},
  doi          = {10.1016/J.JCSS.2023.103465},
  bibsource    = {dblp computer science bibliography, https://dblp.org}
}

@inproceedings{DaskalakisG22,
  author       = {Constantinos Daskalakis and
                  Noah Golowich},
  editor       = {Stefano Leonardi and
                  Anupam Gupta},
  title        = {Fast rates for nonparametric online learning: from realizability to
                  learning in games},
  booktitle    = {{STOC} '22: 54th Annual {ACM} {SIGACT} Symposium on Theory of Computing,
                  Rome, Italy, June 20 - 24, 2022},
  pages        = {846--859},
  publisher    = {{ACM}},
  year         = {2022},
  url          = {https://doi.org/10.1145/3519935.3519950},
  doi          = {10.1145/3519935.3519950},
  bibsource    = {dblp computer science bibliography, https://dblp.org}
}

@article{KearnsS94,
  author       = {Michael J. Kearns and
                  Robert E. Schapire},
  title        = {Efficient Distribution-Free Learning of Probabilistic Concepts},
  journal      = {J. Comput. Syst. Sci.},
  volume       = {48},
  number       = {3},
  pages        = {464--497},
  year         = {1994},
  url          = {https://doi.org/10.1016/S0022-0000(05)80062-5},
  doi          = {10.1016/S0022-0000(05)80062-5},
  bibsource    = {dblp computer science bibliography, https://dblp.org}
}

@article{HausslerLW94,
  author       = {David Haussler and
                  Nick Littlestone and
                  Manfred K. Warmuth},
  title        = {Predicting {\textbackslash}0,1{\textbackslash}-Functions on Randomly
                  Drawn Points},
  journal      = {Inf. Comput.},
  volume       = {115},
  number       = {2},
  pages        = {248--292},
  year         = {1994},
  url          = {https://doi.org/10.1006/inco.1994.1097},
  doi          = {10.1006/INCO.1994.1097},
  bibsource    = {dblp computer science bibliography, https://dblp.org}
}

@inproceedings{Larsen23,
  author       = {Kasper Green Larsen},
  editor       = {Gergely Neu and
                  Lorenzo Rosasco},
  title        = {Bagging is an Optimal {PAC} Learner},
  booktitle    = {The Thirty Sixth Annual Conference on Learning Theory, {COLT} 2023,
                  12-15 July 2023, Bangalore, India},
  series       = {Proceedings of Machine Learning Research},
  volume       = {195},
  pages        = {450--468},
  publisher    = {{PMLR}},
  year         = {2023},
  url          = {https://proceedings.mlr.press/v195/larsen23a.html},
  bibsource    = {dblp computer science bibliography, https://dblp.org}
}

@inproceedings{Aden-AliCSZ23,
  author       = {Ishaq Aden{-}Ali and
                  Yeshwanth Cherapanamjeri and
                  Abhishek Shetty and
                  Nikita Zhivotovskiy},
  title        = {Optimal {PAC} Bounds without Uniform Convergence},
  booktitle    = {64th {IEEE} Annual Symposium on Foundations of Computer Science, {FOCS}
                  2023, Santa Cruz, CA, USA, November 6-9, 2023},
  pages        = {1203--1223},
  publisher    = {{IEEE}},
  year         = {2023},
  url          = {https://doi.org/10.1109/FOCS57990.2023.00071},
  doi          = {10.1109/FOCS57990.2023.00071},
  bibsource    = {dblp computer science bibliography, https://dblp.org}
}

@inproceedings{Moller25,
  author       = {Mikael M{\o}ller H{\o}gsgaard},
  editor       = {Gautam Kamath and
                  Po{-}Ling Loh},
  title        = {Efficient Optimal {PAC} Learning},
  booktitle    = {International Conference on Algorithmic Learning Theory, 24-27 February
                  2025, Politecnico di Milano, Milan, Italy},
  series       = {Proceedings of Machine Learning Research},
  volume       = {272},
  pages        = {578--580},
  publisher    = {{PMLR}},
  year         = {2025},
  url          = {https://proceedings.mlr.press/v272/hogsgaard-moller25a.html},
  bibsource    = {dblp computer science bibliography, https://dblp.org}
}

@article{AsilisHV25agnostic,
  author       = {Julian Asilis and
                  Mikael M{\o}ller H{\o}gsgaard and
                  Grigoris Velegkas},
  title        = {On Agnostic {PAC} Learning in the Small Error Regime},
  journal      = {CoRR},
  volume       = {abs/2502.09496},
  year         = {2025},
  url          = {https://doi.org/10.48550/arXiv.2502.09496},
  doi          = {10.48550/ARXIV.2502.09496},
  eprinttype    = {arXiv},
  eprint       = {2502.09496},
  bibsource    = {dblp computer science bibliography, https://dblp.org}
}

@inproceedings{HannekeLZ24,
  author       = {Steve Hanneke and
                  Kasper Green Larsen and
                  Nikita Zhivotovskiy},
  title        = {Revisiting Agnostic {PAC} Learning},
  booktitle    = {65th {IEEE} Annual Symposium on Foundations of Computer Science, {FOCS}
                  2024, Chicago, IL, USA, October 27-30, 2024},
  pages        = {1968--1982},
  publisher    = {{IEEE}},
  year         = {2024},
  url          = {https://doi.org/10.1109/FOCS61266.2024.00118},
  doi          = {10.1109/FOCS61266.2024.00118},
  bibsource    = {dblp computer science bibliography, https://dblp.org}
}

@inproceedings{BrukhimCDMY22,
  author       = {Nataly Brukhim and
                  Daniel Carmon and
                  Irit Dinur and
                  Shay Moran and
                  Amir Yehudayoff},
  title        = {A Characterization of Multiclass Learnability},
  booktitle    = {63rd {IEEE} Annual Symposium on Foundations of Computer Science, {FOCS}
                  2022, Denver, CO, USA, October 31 - November 3, 2022},
  pages        = {943--955},
  publisher    = {{IEEE}},
  year         = {2022},
  url          = {https://doi.org/10.1109/FOCS54457.2022.00093},
  doi          = {10.1109/FOCS54457.2022.00093},
  bibsource    = {dblp computer science bibliography, https://dblp.org}
}

@article{BartlettLLT20,
author = {Peter L. Bartlett  and Philip M. Long  and Gábor Lugosi  and Alexander Tsigler },
title = {Benign overfitting in linear regression},
journal = {Proceedings of the National Academy of Sciences},
volume = {117},
number = {48},
pages = {30063-30070},
year = {2020},
doi = {10.1073/pnas.1907378117},
URL = {https://www.pnas.org/doi/abs/10.1073/pnas.1907378117},
eprint = {https://www.pnas.org/doi/pdf/10.1073/pnas.1907378117}}

@article{HastieMRT22,
author = {Trevor Hastie and Andrea Montanari and Saharon Rosset and Ryan J. Tibshirani},
title = {{Surprises in high-dimensional ridgeless least squares interpolation}},
volume = {50},
journal = {The Annals of Statistics},
number = {2},
publisher = {Institute of Mathematical Statistics},
pages = {949 -- 986},
year = {2022},
doi = {10.1214/21-AOS2133},
URL = {https://doi.org/10.1214/21-AOS2133}
}

@article{BelkinHMM19,
author = {Mikhail Belkin  and Daniel Hsu  and Siyuan Ma  and Soumik Mandal },
title = {Reconciling modern machine-learning practice and the classical bias–variance trade-off},
journal = {Proceedings of the National Academy of Sciences},
volume = {116},
number = {32},
pages = {15849-15854},
year = {2019},
doi = {10.1073/pnas.1903070116},
URL = {https://www.pnas.org/doi/abs/10.1073/pnas.1903070116},
eprint = {https://www.pnas.org/doi/pdf/10.1073/pnas.1903070116},
}

@inproceedings{BelkinMM18,
  author       = {Mikhail Belkin and
                  Siyuan Ma and
                  Soumik Mandal},
  editor       = {Jennifer G. Dy and
                  Andreas Krause},
  title        = {To Understand Deep Learning We Need to Understand Kernel Learning},
  booktitle    = {Proceedings of the 35th International Conference on Machine Learning,
                  {ICML} 2018, Stockholmsm{\"{a}}ssan, Stockholm, Sweden, July
                  10-15, 2018},
  series       = {Proceedings of Machine Learning Research},
  volume       = {80},
  pages        = {540--548},
  publisher    = {{PMLR}},
  year         = {2018},
  url          = {http://proceedings.mlr.press/v80/belkin18a.html},
  bibsource    = {dblp computer science bibliography, https://dblp.org}
}

@inproceedings{BelkinRT19,
  author       = {Mikhail Belkin and
                  Alexander Rakhlin and
                  Alexandre B. Tsybakov},
  editor       = {Kamalika Chaudhuri and
                  Masashi Sugiyama},
  title        = {Does data interpolation contradict statistical optimality?},
  booktitle    = {The 22nd International Conference on Artificial Intelligence and Statistics,
                  {AISTATS} 2019, 16-18 April 2019, Naha, Okinawa, Japan},
  series       = {Proceedings of Machine Learning Research},
  volume       = {89},
  pages        = {1611--1619},
  publisher    = {{PMLR}},
  year         = {2019},
  url          = {http://proceedings.mlr.press/v89/belkin19a.html},
  bibsource    = {dblp computer science bibliography, https://dblp.org}
}

@article{Belkin21,
  author       = {Mikhail Belkin},
  title        = {Fit without fear: remarkable mathematical phenomena of deep learning
                  through the prism of interpolation},
  journal      = {Acta Numer.},
  volume       = {30},
  pages        = {203--248},
  year         = {2021},
  url          = {https://doi.org/10.1017/S0962492921000039},
  doi          = {10.1017/S0962492921000039},
  bibsource    = {dblp computer science bibliography, https://dblp.org}
}

@article{10.1214/19-AOS1849,
author = {Tengyuan Liang and Alexander Rakhlin},
title = {{Just interpolate: Kernel “Ridgeless” regression can generalize}},
volume = {48},
journal = {The Annals of Statistics},
number = {3},
publisher = {Institute of Mathematical Statistics},
pages = {1329 -- 1347},
year = {2020},
doi = {10.1214/19-AOS1849},
URL = {https://doi.org/10.1214/19-AOS1849}
}

@inproceedings{JacotHG18,
  author       = {Arthur Jacot and
                  Cl{\'{e}}ment Hongler and
                  Franck Gabriel},
  editor       = {Samy Bengio and
                  Hanna M. Wallach and
                  Hugo Larochelle and
                  Kristen Grauman and
                  Nicol{\`{o}} Cesa{-}Bianchi and
                  Roman Garnett},
  title        = {Neural Tangent Kernel: Convergence and Generalization in Neural Networks},
  booktitle    = {Advances in Neural Information Processing Systems 31: Annual Conference
                  on Neural Information Processing Systems 2018, NeurIPS 2018, December
                  3-8, 2018, Montr{\'{e}}al, Canada},
  pages        = {8580--8589},
  year         = {2018},
  bibsource    = {dblp computer science bibliography, https://dblp.org}
}

@article{SoudryHNGS18,
  author       = {Daniel Soudry and
                  Elad Hoffer and
                  Mor Shpigel Nacson and
                  Suriya Gunasekar and
                  Nathan Srebro},
  title        = {The Implicit Bias of Gradient Descent on Separable Data},
  journal      = {J. Mach. Learn. Res.},
  volume       = {19},
  pages        = {70:1--70:57},
  year         = {2018},
  url          = {https://jmlr.org/papers/v19/18-188.html},
  bibsource    = {dblp computer science bibliography, https://dblp.org}
}

@article{MourtadaVZ22,
  author    = {Jaouad Mourtada and Tomas Vaškevičius and Nikita Zhivotovskiy},
  title     = {Distribution-Free Robust Linear Regression},
  journal   = {Mathematical Statistics and Learning},
  year      = {2022},
  doi       = {10.48550/arXiv.2102.12919}
}

@inproceedings{ Audibert07,
  author       = {Jean{-}Yves Audibert},
  editor       = {John C. Platt and
                  Daphne Koller and
                  Yoram Singer and
                  Sam T. Roweis},
  title        = {Progressive mixture rules are deviation suboptimal},
  booktitle    = {Advances in Neural Information Processing Systems 20, Proceedings
                  of the Twenty-First Annual Conference on Neural Information Processing
                  Systems, Vancouver, British Columbia, Canada, December 3-6, 2007},
  pages        = {41--48},
  publisher    = {Curran Associates, Inc.},
  year         = {2007},
  bibsource    = {dblp computer science bibliography, https://dblp.org}
}

@article{LecueMendelson2009,
  author    = {Lecué, Guillaume and Mendelson, Shahar},
  title     = {Aggregation via empirical risk minimization},
  journal   = {Probability Theory and Related Fields},
  volume    = {145},
  number    = {3--4},
  pages     = {591--613},
  year      = {2009},
}

@article{EhrenfeuchtHKV89,
  author       = {Andrzej Ehrenfeucht and
                  David Haussler and
                  Michael J. Kearns and
                  Leslie G. Valiant},
  title        = {A General Lower Bound on the Number of Examples Needed for Learning},
  journal      = {Inf. Comput.},
  volume       = {82},
  number       = {3},
  pages        = {247--261},
  year         = {1989},
  url          = {https://doi.org/10.1016/0890-5401(89)90002-3},
  doi          = {10.1016/0890-5401(89)90002-3},
  bibsource    = {dblp computer science bibliography, https://dblp.org}
}

@book{anthony2009neural,
  title={Neural network learning: Theoretical foundations},
  author={Anthony, Martin and Bartlett, Peter L},
  year={2009},
  publisher={cambridge university press}
}

@InProceedings{hanneke19a,
  title = 	 {Sample Compression for Real-Valued Learners},
  author =       {Hanneke, Steve and Kontorovich, Aryeh and Sadigurschi, Menachem},
  booktitle = 	 {Proceedings of the 30th International Conference on Algorithmic Learning Theory},
  pages = 	 {466--488},
  year = 	 {2019},
  editor = 	 {Garivier, Aurélien and Kale, Satyen},
  volume = 	 {98},
  series = 	 {Proceedings of Machine Learning Research},
  month = 	 {22--24 Mar},
  publisher =    {PMLR},
  url = 	 {https://proceedings.mlr.press/v98/hanneke19a.html}
}

@InProceedings{attias24b,
  title = 	 {Agnostic Sample Compression Schemes for Regression},
  author =       {Attias, Idan and Hanneke, Steve and Kontorovich, Aryeh and Sadigurschi, Menachem},
  booktitle = 	 {Proceedings of the 41st International Conference on Machine Learning},
  pages = 	 {2069--2085},
  year = 	 {2024},
  editor = 	 {Salakhutdinov, Ruslan and Kolter, Zico and Heller, Katherine and Weller, Adrian and Oliver, Nuria and Scarlett, Jonathan and Berkenkamp, Felix},
  volume = 	 {235},
  series = 	 {Proceedings of Machine Learning Research},
  month = 	 {21--27 Jul},
  publisher =    {PMLR},
  url = 	 {https://proceedings.mlr.press/v235/attias24b.html}
}

@article{attias24a,
  author  = {Idan Attias and Aryeh Kontorovich},
  title   = {Fat-Shattering Dimension of k-fold Aggregations},
  journal = {Journal of Machine Learning Research},
  year    = {2024},
  volume  = {25},
  number  = {144},
  pages   = {1--29},
  url     = {http://jmlr.org/papers/v25/21-1193.html}
}

\newpage
\appendix

\section{Definitions}
In this section we collect the definitions used throughout the paper.

We define the truncated natural logarithm as $\Ln(x)\coloneqq \max\{2,\ln(x)\}$.

For any set $ \cX $ we let $ \cX^*=\cup_{i=1}^{\infty} \cX^{i}  $, denote all finite sequences over $ \cX $.

For a vector or sequence $ x=(x_{1},\ldots,x_{n}) $ we write $ \{x\}=\{x_{1},\ldots,x_{n}\} $ for the set of the elements in the vector or sequence $ x $.

We say that a training sequence $ S=((x_{1},y_{1}),\ldots (x_n,y_n)) $ is \emph{realizable} by a hypothesis class $ \cH $, if there exists a hypothesis $ h \in \cH $ such that $ h(x_i) = y_i $ for all $ i=1,\ldots,n $.

A distribution $ \cD $ over $ \cX \times [0,1] $ is \emph{realizable} by a hypothesis class $ \cH \subseteq [0,1]^{\cX} $ if there exists a hypothesis $ h^\star \in \cH $ such that $ h^\star(x) = y $ for all $ (x,y) $ in the support of $ \cD $.

For any $ \gamma \in (0,1) $, distribution $ \cD $ over $ \cX \times [0,1] $, and hypothesis $ h: \cX \to [0,1] $, the \emph{cutoff loss} (or \emph{$\gamma$-cutoff loss}) of $ h $ with respect to $ \cD $ is defined as
\[
\ls_{\cD}^{\gamma}(h) = \p_{(x,y) \sim \cD}\bigl[|h(x) - y| > \gamma\bigr].
\]

For three numbers $ z_{1},z_{2},z_{3} $ we  define $ median(z_{1},z_{2},z_{3}) $ as the middle value among the three, with ties broken arbitrarily.

\begin{definition}[Interpolator]
\label{def:ERM_algorithm}
An interpolator $ \cA:(\cX\times [0,1])^* \to \cH $ for a hypothesis class $ \cH $ is a learning algorithm that, given a realizable training sequence $ \rS=((x_1,y_1),\ldots,(x_n,y_n)) $, returns a hypothesis $ \cA(\rS)\in \cH $ such that $\cA(\rS)(x_{i})=y_{i} $ for all $ (x_{i},y_{i})\in \rS $.
\end{definition}

\begin{definition}[Proper Learning Algorithm]
\label{def:Proper_learning_algorithm}
A \emph{proper learning algorithm} $ \cA:(\cX\times [0,1])^* \to \cH $ for a hypothesis class $ \cH \subseteq [0,1]^\cX $ is a learning algorithm that, given a training sequence $ \rS=((x_1,y_1),\ldots,(x_n,y_n)) $, returns a hypothesis $ \cA(\rS)\in \cH $.
\end{definition}

\defgraphshattering*

\defgraphdimension*

\defERMaggregationalgorithm*

\defProperAggregation*

\deffinitemapalgorithm*

\definterpolatingaggregation*

\defOIGdimension*

\begin{definition}[Partial Concept Class]\label{def:partialconceptclass}
A \emph{partial concept class} over a domain $ \cX $ is a set $ \cH \subseteq \{0,1,\ast\}^{\cX} $, where $ \ast $ represents an undefined label. Its VC-dimension is defined as the largest integer $ d $ such that there exists a set of $ d $ points in $ \cX $ that is shattered by $ \cH $,  where a set $ \{x_{1},\ldots,x_{d}\} $  is shattered by a partial concept class if for every $ b_{1},\ldots,b_{d} \in \{0,1\} $, there exists $ h \in \cH $ such that $ h(x_{i}) = b_{i} $ for all $ i=1,\ldots,d $.
\end{definition}

\begin{definition}[Disambiguation]
\label{def:disambiguation}
Let $\cH \subseteq \{0,1,\ast\}^{\cX}$ be a partial concept class.
A total concept class $\bar{\cH} \subseteq \{0,1\}^{\cX}$ is called a
\emph{disambiguation} of $\cH$ if for every $h \in \cH$ there exists
$\bar h \in \bar{\cH}$ such that
\[
\bar h(x) = h(x) \quad \text{for all } x \in \cX \text{ with } h(x) \neq \ast .
\]
\end{definition}

\newpage
\section{Upper Bounds}\label{sec:upperbounds}

\subsection{Proof of \cref{thm:medianofthree}}\label{sec:Median-Of-Three}

In this section, we present the proof of \cref{thm:medianofthree}, which provides an upper bound on the sample complexity of the median-of-three algorithm in the expectation regime. The argument adapted ideas to the regression setting from the work of \cite{Aden-AliHLZ24}, which showed how combining three empirical risk minimizers in realizable binary classification to get optimal sample complexity (in expectation). To the end of using the ideas of \cite{Aden-AliHLZ24}, it is important, as described in the \cref{sec:summaryofresults}, to have a sufficiently strong bound on the cutoff loss of a single interpolator. The previous bounds on the cutoff loss for a single interpolator by \cite{attias2023optimal} is not sufficient for our prove, and thus it is also crucial that we in this section improve the upper bound for the cutoff loss for a single interpolator in the realizable regression setting, stated in \cref{lem:single-interpolator}. Now we restate \cref{thm:medianofthree} for convenience.

\thmMedianOfThree*

To prove \cref{thm:medianofthree}, we first assume the following lemma, which we later prove.

\begin{restatable}{lemma}{lemBoundSubset}
\label{lem:bound-subset}
Let $\gamma \in (0,1)$, let $\cH$ be a hypothesis class with $\gamma$-graph dimension $d_{\gamma}$, and let $\cD$ be a distribution over $\cX \times [0,1]$ realizable by $\cH$.
Let $\cA$ be any interpolator, and let $\rS\sim \cD^n$ be a sample of size $n$. Let $R$ be a subset of $\cX\times [0,1]$ with $\p_{\rx,\ry}[(\rx,\ry) \in R] \neq 0$. Let $ \cD_{R} $ be the conditional distribution of $ \cD $ over $ R $.
Then
\[
    \e_{\rS\sim\cD^{n}}[\cL_{\cD_R}^\gamma(\cA(\rS))] \leq 42\frac{d_\gamma\Ln^2(en\p_{\rx,\ry}[(\rx,\ry) \in R]/d_\gamma)}{n\p_{\rx,\ry}[(\rx,\ry) \in R]},
\]
where $\Ln(x)\coloneqq \max\{2,\ln(x)\}$.
\end{restatable}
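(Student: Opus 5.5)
The plan is to reduce the conditional statement to an unconditional, high-probability bound for a single interpolator, and then integrate. Write $p:=\p_{\rx,\ry}[(\rx,\ry)\in R]$. The sample $\rS\sim\cD^n$ contains a random number $\rN$ of points lying in $R$, with $\rN\sim\mathrm{Bin}(n,p)$. Conditioned on $\rN=k$, these $k$ points are i.i.d. from $\cD_R$. The remaining $n-k$ points are i.i.d. from $\cD_{R^c}$ and independent of them.

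The key first ingredient is the improved single-interpolator bound announced in \cref{sec:summaryofresults}, which I will state as \cref{lem:single-interpolator}. It says that for any realizable distribution $\cD'$ and any hypothesis $h$ in $\cH$ that is consistent with $k$ i.i.d. samples from $\cD'$, with probability at least $1-\delta$,
\[
\cL^\gamma_{\cD'}(h)\le C\frac{d_\gamma\Ln^2(ek/d_\gamma)+\ln(1/\delta)}{k}.
\]
The proof follows \cite{attias2023optimal}. Reduce to a partial concept class: encode $x\mapsto \ind[|h(x)-y|>\gamma]$ relative to the realizing labels, with $\ast$ marking undefined values. Then apply a double-sample / compression argument using the disambiguation-size bound of \cite{alon2022theory}, replacing their Theorem~12 by the sharper bound $(k/d_\gamma)^{O(d_\gamma\Ln(k/d_\gamma))}$.

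The crucial feature is that this bound must be \emph{uniform over all consistent hypotheses}, not just over the output of a fixed learner. $\cA(\rS)$ interpolates the whole sample, so in particular it is a hypothesis of $\cH$ consistent with the $\rN$ points falling in $R$. These points are labeled by the realizing $h^\star$, and $\cD_R$ is itself realizable by $h^\star$. The other $n-\rN$ points only restrict which consistent hypothesis is returned, and a uniform bound is insensitive to that. So, conditioned on $\rN=k$ and on the points outside $R$, the uniform bound applied to $\cD_R$ with $k$ samples gives, with probability $1-\delta$,
\[
\cL^\gamma_{\cD_R}(\cA(\rS))\le C\frac{d_\gamma\Ln^2(ek/d_\gamma)+\ln(1/\delta)}{k}.
\]

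Next, integrate over $\delta$. Since the loss is at most $1$, this yields
\[
\e\bigl[\cL^\gamma_{\cD_R}(\cA(\rS))\mid \rN=k\bigr]\le \min\Bigl\{1,\ C'\frac{d_\gamma\Ln^2(ek/d_\gamma)}{k}\Bigr\}.
\]

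Finally, take the expectation over $\rN$. If $np\le 2d_\gamma$ (or any constant multiple), the right-hand side of the lemma is already at least $1$ once the constant $42$ absorbs things, so there is nothing to prove. Otherwise, split on the event $\rN\ge np/2$:
\begin{itemize}
\item On this event, the function $k\mapsto d_\gamma\Ln^2(ek/d_\gamma)/k$ is essentially decreasing for $k\gtrsim d_\gamma$, and $\Ln^2(enp/(2d_\gamma))\le\Ln^2(enp/d_\gamma)$. This gives a contribution of $O\bigl(d_\gamma\Ln^2(enp/d_\gamma)/(np)\bigr)$.
\item On the complement, a Chernoff bound gives $\p[\rN<np/2]\le e^{-np/8}$, and $e^{-np/8}\le O(d_\gamma/(np))$ since $np\ge d_\gamma$.
\end{itemize}

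The main obstacle is the first ingredient: obtaining the $\Ln^2(k/d_\gamma)$ dependence, rather than $\ln^2 k$, uniformly over consistent hypotheses. This requires the improved disambiguation bound for partial classes of VC dimension $d_\gamma$. I would attempt it by redoing the Alon–Hanneke–Holzman–Moran argument (via Sauer–Shelah-type counting on the sample, with compression of size $O(d_\gamma\Ln(k/d_\gamma))$) while tracking the $k/d_\gamma$ ratio throughout. The remaining steps are routine, though the monotonicity and constant bookkeeping needed to reach exactly $42$ requires care.
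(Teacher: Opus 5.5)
Your proposal is correct and follows the paper's proof essentially step for step. The shared steps are: the trivial regime $np=O(d_\gamma)$; conditioning on the number of sample points that fall in $R$; a Chernoff bound for the event that fewer than $np/2$ fall there; tail integration of the improved single-interpolator bound (\cref{lem:single-interpolator}, which the paper proves via symmetrization and the sharpened disambiguation bound \cref{lem:disambiguation}); and monotonicity of $x\mapsto\Ln^2(ex)/x$, with the constants working out to at most $42$. Your explicit observation that the single-interpolator bound holds uniformly over all hypotheses consistent with the in-$R$ points is exactly what justifies the paper's more informal step of viewing $\cA(\rS)$ as an interpolator trained on the $m$ samples from $\cD_R$.
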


We now prove \cref{thm:medianofthree} using \cref{lem:bound-subset}.
\begin{proof}[Proof of \cref{thm:medianofthree}]
We observe that the median-of-three aggregator $M$ makes an error on a point $(\rx,\ry)$ only if at least two of the three hypotheses $\cA(\rS_1)$, $\cA(\rS_2)$, and $\cA(\rS_3)$ make an error on $(\rx,\ry)$. Besides, the three samples $\rS_1$, $\rS_2$, and $\rS_3$ are independent and identically distributed, which implies that for any pair $\rS_i,\rS_j$ with $i<j$, the probability that both hypotheses make an error on $(\rx,\ry)$ is the same. Therefore, we have
\begin{align*}
    \e_{\rS_{1},\rS_{2},\rS_{3}\sim\cD^{n}}\left[\ls_{\cD}^{\gamma}(M)\right]
    &= \e_{\rS_{1},\rS_{2},\rS_{3}\sim\cD^{n}}\left[ \p_{\rx,\ry}\left[|M(\rx)-\ry|>\gamma\right] \right] \\
    &\leq \sum_{1\leq i<j\leq 3}\e_{\rS_{i},\rS_{j}\sim\cD^{n}}\left[ \p_{\rx,\ry}\left[|\cA(\rS_i)(\rx)-\ry|>\gamma , |\cA(\rS_j)(\rx)-\ry|>\gamma\right] \right] \\
    &\leq \binom{3}{2} \e_{\rS_1,\rS_2\sim\cD^{n}}\left[ \p_{\rx,\ry}\left[|\cA(\rS_1)(\rx)-\ry|>\gamma,|\cA(\rS_2)(\rx)-\ry|>\gamma\right] \right]
\end{align*}

Next, we change the order of expectation and use the independence between $\rS_1$ and $\rS_2$ to obtain
\begin{align*}
    \e_{\rS_{1},\rS_{2},\rS_{3}\sim\cD^{n}}\left[\ls_{\cD}^{\gamma}(M)\right]
    &\leq 3 \e_{\rx,\ry}\left[ \p_{\rS_1,\rS_2\sim\cD^{n}}\left[|\cA(\rS_1)(\rx)-\ry|>\gamma,|\cA(\rS_2)(\rx)-\ry|>\gamma\right] \right]\\
    &= 3 \e_{\rx,\ry}\left[ (\p_{\rS\sim\cD^{n}}\left[|\cA(\rS)(\rx)-\ry|>\gamma\right])^2 \right]
\end{align*}

For any $(x,y)\in \cX\times [0,1]$, define $p_{x,y}=\p_{\rS\sim\cD^{n}}\left[|\cA(\rS)(x)-y|>\gamma\right]$. Then we partition $\cX\times [0,1]$ into sets
\[
R_i=\{(x,y)\in\cX\times[0,1]:p_{x,y}\in \left( 2^{-i},2^{-i+1} \right]\},
\]
where $i\in \mathbb{N}$. By the law of total probability, we have
\begin{align*}
    \e_{\rS_{1},\rS_{2},\rS_{3}\sim\cD^{n}}\left[\ls_{\cD}^{\gamma}(M)\right]
    &\leq 3 \e_{\rx,\ry}\left[ p_{\rx,\ry}^2 \right]\\
    &=3 \sum_{i=1}^\infty \e_{\rx,\ry}\left[ p_{\rx,\ry}^2 | (\rx,\ry)\in R_i \right]\p_{\rx,\ry}[(\rx,\ry)\in R_i]\\
    &\leq 3 \sum_{i=1}^\infty 2^{-2i+2} \p_{\rx,\ry}[(\rx,\ry)\in R_i]\\
    &=12\sum_{i=1}^\infty 2^{-2i} \p_{\rx,\ry}[(\rx,\ry)\in R_i]
\end{align*}

Suppose for any integer $i\geq 1$, $\p_{\rx,\ry}[(\rx,\ry) \in R_i] \leq C_1 \cdot \frac{d_\gamma i^2 2^i}{n}$, where $C_1>e^2$ is a universal constant. Then,
\begin{align*}
    \e_{\rS_{1},\rS_{2},\rS_{3}\sim\cD^{n}}\left[\ls_{\cD}^{\gamma}(M)\right]
    &\leq 12\sum_{i=1}^\infty 2^{-2i} \cdot C_1 \cdot \frac{d_\gamma i^2 2^i}{n}\\
    &\leq \frac{12C_1 d_\gamma}{n} \sum_{i=1}^\infty \frac{i^2}{2^i}\\
    &= \frac{72C_1 d_\gamma}{n},
\end{align*}
which would complete the proof. What remains is to prove that there exists a universal constant $C_1>e^2$ such that for any $i\geq1$,
\[
    \p_{\rx,\ry}[(\rx,\ry) \in R_i] \leq C_1 \cdot \frac{d_\gamma i^2 2^i}{n}.
\]

We prove this by contradiction. Suppose that there exists some $i$ such that
$\p_{\rx,\ry}[(\rx,\ry) \in R_i] > C_1 \cdot \frac{d_\gamma i^2 2^i}{n}$. Now, let $\cD_{R_i}$ be the conditional distribution of $\cD$ over $R_i$. By the definition of $R_i$, we have
\begin{equation}
    \label{eq:Ri-lower-bound}
    \e_{(\rx,\ry)\sim \cD_{R_i}}\big[\p_{\rS\sim \cD^{n}}[|\cA(\rS)(\rx)-\ry|>\gamma]\big]=\e_{(\rx,\ry)\sim \cD_{R_i}}[p_{\rx,\ry}] > 2^{-i}.
\end{equation}

On the other hand, exchanging the order of the expectation yields
\begin{align*}
    \e_{(\rx,\ry)\sim \cD_{R_i}}\big[\p_{\rS\sim \cD^{n}}[|\cA(\rS)(\rx)-\ry|>\gamma]\big]
    = \e_{\rS\sim \cD^{n}}\big[\p_{(\rx,\ry)\sim \cD_{R_i}}[|\cA(\rS)(\rx)-\ry|>\gamma]\big]
    = \e_{\rS\sim \cD^{n}}\big[\cL_{\cD_{R_i}}^\gamma(\cA(\rS))\big].
\end{align*}

Now we can apply \cref{lem:bound-subset}. What's more, using the fact that the function $x \mapsto \Ln^2(ex)/x$ is decreasing for $x > 0$ together with the bound
$n\p_{\rx,\ry}[(\rx,\ry) \in R_i]/d_\gamma > C_1 i^2 2^i$, we obtain
\begin{equation}
\label{eq:Ri-upper-bound}
    \e_\rS\left[ \mathcal{L}^\gamma_{\cD_{R_i}}(\cA(\rS))\right]
    \leq
    \frac{42 d_\gamma \Ln^2\big(en\p_{\rx,\ry}[(\rx,\ry) \in R_i]/d_\gamma\big)}
    {n\p_{\rx,\ry}[(\rx,\ry) \in R_i]}
    \leq
    \frac{42 \Ln^2\big(C_1 e i^2 2^i\big)}
    {C_1 i^2 2^i}.
\end{equation}

Combining the lower bound (\ref{eq:Ri-lower-bound}) and the upper bound (\ref{eq:Ri-upper-bound}), we obtain
\[
2^{-i}
\leq
\frac{42 \Ln^2\big(C_1 e i^2 2^i\big)}
    {C_1 i^2 2^i}.
\]
 Multiplying by $ 2^{i} $ on both sides, using that $ \Ln(C_{1}ei^{2}2^{i})= \ln(C_{1}ei^{2}2^{i})=  \Ln(C_1 e i^2) + i \ln(2) $, as $ C_{1}\geq e^{2} $ and $ i\geq 1 $, and that for $ a,b>0 $ we have $ (a+b)^{2}\leq 2(a^{2}+b^{2}) $ we get
  \[
1
\leq
\frac{42 \big(\Ln(C_{1}ei^{2}2^{i})\big)^2 }{C_1 i^2}
\leq
84 \cdot \left( \frac{\Ln^2(C_1 e i^2)}{C_1 i^2} + \frac{\ln^{2}{(2)}}{C_1} \right).
\]
As $x \mapsto \Ln^2(ex)/x$ is decreasing for $x>0$ and $ C_{1}i^{2}\geq C_{1} $, we have
\[
1
\leq
84 \cdot \left( \frac{\Ln^2(C_1 e )}{C_1 } + \frac{\ln^{2}{(2)}}{C_1} \right),
\]
where the right-hand side is strictly less than $1$ for $C_{1}$ sufficiently large, yielding a contradiction.
\end{proof}

What remains is to prove \cref{lem:bound-subset}. To this end, we provide the following lemma which presents a tighter bound on the cutoff loss for a single interpolator compared to Theorem 1 of \cite{attias2023optimal}.

\begin{restatable}{lemma}{lemERMBound}\label{lem:single-interpolator}

Let $ \gamma \in(0,1) $.
Let $ \cH $ be a hypothesis class with $ \gamma $-graph dimension $ d_{\gamma} $. Let $ \cD $ be a distribution over $ \cX\times[0,1] $ that is realizable by $ \cH $. Let $ \rS $ be a sample of size $ n $ drawn i.i.d. from $ \cD $. Then for any interpolator $ \cA $ and any $ 0<\delta<1 $ it holds with probability at least $ 1-\delta $ over $ \rS $ that
\[
 \ls_{\cD}^{\gamma}(\cA(\rS)) \leq 8\frac{d_{\gamma}\Ln^{2}{\left(2en/d_{\gamma} \right)}+\ln{\left(2/\delta \right)}}{n}.
\]
\end{restatable}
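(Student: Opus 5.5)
The plan follows the route of Theorem~1 of \cite{attias2023optimal}, namely a reduction to partial concept classes, and changes only the disambiguation step. Let $h^\star\in\cH$ realize $\cD$. Consider the partial concept class $\cF=\{f_h : h\in\cH\}$ defined on $\cX\times[0,1]$ by
\[
f_h(x,y)=0 \text{ if } h(x)=y,\qquad f_h(x,y)=1 \text{ if } |h(x)-y|>\gamma,\qquad f_h(x,y)=\ast \text{ otherwise}.
\]
The first step is to check that $\cF$ has VC dimension at most $d_\gamma$. A shattered set $\{(x_i,y_i)\}$ must contain the all-zeros pattern, so some $h$ satisfies $h(x_i)=y_i$ for every $i$. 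This $h$ serves as a witness, and the remaining patterns then give exactly $\gamma$-graph shattering of $(x_1,\dots,x_d)$.

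The interpolator $\cA(\rS)$ has $f_{\cA(\rS)}=0$ on every point of $\rS$. Moreover, the event $|\cA(\rS)(x)-y|>\gamma$ on a support point of $\cD$ (where $y=h^\star(x)$) is exactly the event $f_{\cA(\rS)}(x,y)=1$. It therefore suffices to bound the error of an arbitrary consistent partial concept, under the realizable target labelling everything $0$.

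The second step is symmetrization with a ghost sample of size $n$. Condition on the $2n$ points. It is enough to bound the number of patterns that members of $\cF$ induce on the $2n$ points, restricted to patterns that are $0$ on the first half. Following \cite{attias2023optimal} and \cite{alon2022theory}, I would replace $\cF$ restricted to these $2n$ points by a disambiguation $\bar\cF$, which is a total class. If $|\bar\cF|\le (2n/d_\gamma)^{c\,d_\gamma\Ln(2en/d_\gamma)}$, the standard double-sample argument (a union bound over $\bar\cF$, together with the swap probability $2^{-\epsilon n/2}$ for a concept with at least $\epsilon n/2$ ones on the ghost sample and none on the training sample) yields
\[
\epsilon = O\!\left(\frac{\log|\bar\cF|+\ln(2/\delta)}{n}\right)= O\!\left(\frac{d_\gamma\Ln^2(2en/d_\gamma)+\ln(2/\delta)}{n}\right).
\]
Tracking the constants in this argument should give the factor $8$.

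The main obstacle is the improved disambiguation bound. Theorem~12 of \cite{alon2022theory} gives a disambiguation of size $m^{O(d\log m)}$ on $m$ points, whereas the statement needs $(m/d)^{O(d\log(m/d))}$. I would reprove their argument, which derives the disambiguation from a compression or online (SOA) mistake bound. First, bound the Littlestone-type mistake count of the partial class on $m$ points by $O(d\log(m/d))$, using a halving or 1-inclusion style argument in which each mistake halves a Sauer-type count $\sum_{i\le d}\binom{m}{i}\le (em/d)^d$. Second, encode each consistent total labelling by the sequence of mistake positions. This gives at most $\binom{m}{\le k}\le (em/k)^{k}$ labellings with $k=O(d\log(em/d))$. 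The result is $\log|\bar\cF| = O(d\,\Ln^2(em/d))$, since $\log(em/k)\le \log(em/d)$. Handling the partial-class version of Sauer's lemma, where the count must be over patterns realizable on defined coordinates, is where I expect the care to be needed.
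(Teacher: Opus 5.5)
Your route is the paper's. It uses symmetrization with a ghost sample, random swaps, and the partial class with values $0/\ast/1$, whose VC dimension is at most $d_\gamma$ by exactly your argument. It then takes a disambiguation on the at most $2n$ points and applies a union bound with swap probability $2^{-n\eps/2}$. The improvement likewise comes from encoding each disambiguated concept by the positions of its at most $k$ updates and using $\binom{m}{\le k}\le (em/k)^k$. With $\ln|\bar{\cF}|\le 2d_\gamma\Ln^2(2en/d_\gamma)$ the constant $8$ does come out.

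The gap is in the one step that carries the improvement. You leave it open and point toward a statement that is false: there is no Sauer lemma for patterns of a partial class. The class $\{1,\ast\}^{[m]}$ has VC dimension $0$, since no concept takes the value $0$ and so no singleton is shattered. Yet its members have $2^m$ distinct restrictions to their defined coordinates, so the logarithm of any such pattern count only gives a mistake bound of order $m$.

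The potential that works is the number of shattered subsets, $s(\cH')=|\{T\subseteq\cX: T \text{ shattered by } \cH'\}|$, and it needs no Sauer lemma. Every shattered set has size at most $d$, so $s(\cH)\le\binom{m}{\le d}\le (em/d)^d$ trivially. It is also superadditive under restriction: $s(\cH')\ge s(\cH'_{(x,0)})+s(\cH'_{(x,1)})$ with $\cH'_{(x,y)}=\{h\in\cH': h(x)=y\}$. This holds because neither restriction shatters a set containing $x$, and a set $T$ shattered by both makes both $T$ and $T\cup\{x\}$ shattered by $\cH'$.

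The procedure is then as follows. Predict $M_{\cH'}(x)=\argmax_{y} s(\cH'_{(x,y)})$. Pass to $\cH'_{(x,h(x))}$ only when $h(x)\neq\ast$ disagrees with the prediction; otherwise set $\bar h(x)=M_{\cH'}(x)$. Each update at least halves $s$, while $h\in\cH'$ keeps $s\ge 1$, giving at most $d\log_2(em/d)\le 2d\ln(em/d)$ updates.

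Finally, $\binom{m}{\le k}\le(em/k)^k$ requires $k\le m$. Small domains (the paper takes $m<6d$) must therefore be handled separately by the trivial bound $2^m$.
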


We defer the proof of \cref{lem:single-interpolator} to \cref{{sec:Single-interpolator}} and now prove \cref{lem:bound-subset}. For convenience, we restate \cref{lem:bound-subset} here.

\lemBoundSubset*

\begin{proof}[Proof of \cref{lem:bound-subset}]

First, consider the case $n\p_{\rx,\ry}[(\rx,\ry) \in R] \leq 4d_\gamma$. In this case, the claim follows immediately since
$\mathcal{L}^\gamma_{\cD_R}(\cA(\rS)) \leq 1$, and by the definition of $\Ln(\cdot)$,
\[
\frac{42 d_\gamma \Ln^2\big(en\p_{\rx,\ry}[(\rx,\ry) \in R]/d_\gamma\big)}{n\p_{\rx,\ry}[(\rx,\ry) \in R]}
\geq \frac{168 d_\gamma }{n\p_{\rx,\ry}[(\rx,\ry) \in R]}
> 1 .
\]

In the following, we assume that $n\p_{\rx,\ry}[(\rx,\ry) \in R] > 4d_\gamma$. For any $m \in \mathbb{N}$, define the event
\[
E_m = \big\{ |\{ i \in [n] : (\rx_i,\ry_i)\in R \}| = m \big\},
\]
where $(\rx_i,\ry_i)$ is the $i$-th point of $\rS$. And let
\[
E = \bigcup_{m \ge n\p_{\rx,\ry}[(\rx,\ry) \in R]/2} E_m .
\]

Define $X_i=\mathbb{I}\{(\rx_i,\ry_i)\in R\}$ for $i\in [n]$ and $X=\sum_i X_i$. Then $\e_\rS[X]=n\p_{\rx,\ry}[(\rx,\ry) \in R]$ and $\p_\rS[E]=\p[X\geq \e[X]/2]$. By a Chernoff bound, we obtain
\[\p_{\rS}[\bar{E}] = \p_\rS[X < \e[X]/2]
\leq \exp\left( -\frac{n\p_{\rx,\ry}[(\rx,\ry) \in R]}{8} \right)
\leq \frac{8}{n\p_{\rx,\ry}[(\rx,\ry) \in R]},
\]
where the last inequality uses the fact that $e^{-x} \leq \frac{1}{x}$ for any $x > 0$. By the law of total probability,
\begin{align}
\label{eq:total_prob}
\e_\rS\left[ \mathcal{L}^\gamma_{\cD_R}(\cA(\rS)) \right]
&=
\sum_{m\geq n\p_{\rx,\ry}[(\rx,\ry) \in R]/2} \e_\rS\left[ \mathcal{L}^\gamma_{\cD_R}(\cA(\rS)) \mid E_m \right]\cdot \p_{\rS}[E_m]
+ \e_\rS\left[ \mathcal{L}^\gamma_{\cD_R}(\cA(\rS)) \mid \bar{E} \right] \cdot \p_{\rS}[\bar{E}] \notag\\
&\leq
\sum_{m\geq n\p_{\rx,\ry}[(\rx,\ry) \in R]/2} \e_\rS\left[ \mathcal{L}^\gamma_{\cD_R}(\cA(\rS)) \mid E_m \right]\cdot \p_{\rS}[E_m]
+ \frac{8}{n\p_{\rx,\ry}[(\rx,\ry) \in R]} .
\end{align}

By the above, it suffices to show that for any $m \ge n\p_{\rx,\ry}[(\rx,\ry) \in R]/2$,
\begin{equation}
\label{eq:bound_E_m}
\e_\rS\left[ \mathcal{L}^\gamma_{\cD_R}(\cA(\rS)) \mid E_m \right]
\le
\frac{34 d_\gamma \Ln^2\big(en\p_{\rx,\ry}[(\rx,\ry) \in R]/d_\gamma\big)}
{n\p_{\rx,\ry}[(\rx,\ry) \in R]}.
\end{equation}
We now prove Inequality (\ref{eq:bound_E_m}). By the non-negativity of $\mathcal{L}^\gamma_{\cD_R}(\cA(\rS))$,
\begin{align}
\label{eq:bound_E_m_integral}
\e_\rS\left[ \mathcal{L}^\gamma_{\cD_R}(\cA(\rS)) \mid E_m \right]
&= \int_0^{\infty}
\p_\rS\left[ \mathcal{L}^\gamma_{\cD_R}(\cA(\rS)) > x \mid E_m \right] dx \notag
\\
&\le
\frac{16d_\gamma \Ln^2(em/d_\gamma)}{m}
+ \int_{16d_\gamma \Ln^2(em/d_\gamma)/m}^{1}
\p_\rS\left[ \mathcal{L}^\gamma_{\cD_R}(\cA(\rS)) > x \mid E_m \right] dx .
\end{align}

Conditioned on $E_m$, if we consider only these $m$ samples, $\cA(\rS)$ can be viewed as the returned hypothesis of an interpolator trained on $m$ i.i.d. samples from $\cD_R$. Thus, we can apply \cref{lem:single-interpolator} to control its error with respect to $\cD_R$.
For simplicity, we use the following weaker bound: with probability at least $1-\delta$,
\[
\ls_{\cD_R}^{\gamma}(\cA(\rS))
\leq
16\max\bigg\{ \frac{d_{\gamma}\Ln^{2}{\left(2em/d_{\gamma} \right)}}{m}, \frac{\Ln{\left(2/\delta \right)}}{m} \bigg\}.
\]

For any $x \geq 16d_\gamma \Ln^2(2em/d_\gamma)/m$, setting $\delta(x) = 2e^{-mx/16}$, we have $\ln(2/\delta(x))=mx/16\geq 2$,
which implies that $\frac{16\Ln(2/\delta(x))}{m}=\frac{16\ln(2/\delta(x))}{m}=x$, and thus
\begin{align*}
\p_\rS\left[
\cL^\gamma_{\cD_R}(\cA(\rS)) > x \mid E_m
\right]
&=
\p_\rS\left[
\cL^\gamma_{\cD_R}(\cA(\rS)) >
\frac{16\Ln(2/\delta(x))}{m}
\mid E_m
\right] \\
&=
\p_\rS\left[
\cL^\gamma_{\cD_R}(\cA(\rS)) >
16\max\bigg\{ \frac{d_{\gamma}\Ln^{2}{\left(2em/d_{\gamma} \right)}}{m}, \frac{\Ln{\left(2/\delta(x) \right)}}{m} \bigg\}
\mid E_m
\right]
\end{align*}
where the last equality uses our choice of $x$. This is bounded by the weaker error bound stated above, and hence the probability is at most $\delta(x)$. Therefore,
\begin{align*}
\int_{16d_\gamma \Ln^2(2em/d_\gamma)/m}^{1}
\p_\rS\left[
\cL^\gamma_{\cD_R}(\cA(\rS)) > x \mid E_m
\right] dx
&\leq
\int_{16d_\gamma \Ln^2(em/d_\gamma)/m}^{1}
\delta(x) dx
\\
&=
\int_{16d_\gamma \Ln^2(2em/d_\gamma)/m}^{1}
2e^{-mx/16} dx
\\
&\le
\frac{32 e^{-d_\gamma \Ln^2(2em/d_\gamma)}}{m}
\\
&\le
\frac{d_\gamma \Ln^2(2em/d_\gamma)}{m}.
\end{align*}
Here, the last inequality uses the fact that $d_\gamma \Ln^2(2em/d_\gamma) \ge 4$. Substituting this into Inequality (\ref{eq:bound_E_m_integral}) and using that the function
$x \mapsto \Ln^2(ex)/x$ is decreasing for $x > 0$, together with $m \ge n\p_{\rx,\ry}[(\rx,\ry) \in R]/2 >0$, we conclude that
\[
\e_\rS\left[ \mathcal{L}^\gamma_{\cD_R}(\cA(\rS)) \mid E_m \right]
\le
\frac{17 d_\gamma \Ln^2(2em/d_\gamma)}{m}
<
\frac{34 d_\gamma \Ln^2\big(en\p_{\rx,\ry}[(\rx,\ry) \in R]/d_\gamma\big)}
{n\p_{\rx,\ry}[(\rx,\ry) \in R]},
\]
as claimed.

\end{proof}

\subsection{Proof of \cref{lem:single-interpolator}}\label{sec:Single-interpolator}

The best known upper bound of the cutoff loss for single interpolator is given by Theorem 1\footnote{Theorem 1 in \cite{attias2023optimal} has a small typo in the statement from reading the proof carefully, the error decays as $O((d_{\gamma}\ln^{2}(n)+\ln(1/\delta))/n)$, which is what we stated above.} in \cite{attias2023optimal}.
Informally speaking, it yields that: for any hypothesis class $\cH$ with $\gamma$-graph dimension $d_\gamma$ and any distribution $\cD$ over $\cX\times [0,1]$ realized by $\cH$, let $\rS$ be an i.i.d. sample of size $n$ drawn from $\cD$, then with probability at least $1-\delta$, the cutoff loss of the hypothesis returned by interpolator  $\cA$ satisfies
\[
\cL_\cD^\gamma(\cA(\rS)) \le O\left(\frac{d_\gamma\ln^2n+\ln(1/\delta)}{n}\right)
.
\]

In \cref{lem:single-interpolator}, we improve the above bound by replacing the dependence on $\ln^2 n$ with $\Ln^2 (n/d_\gamma)$. For convenience, we restate the lemma here.

\lemERMBound*

Our proof of \cref{lem:single-interpolator} follows the same structure as that of Theorem 1 in \cite{attias2023optimal}, which consists of three steps: symmetrization, permutation, and reduction to a finite class. Similar results are also established in the proof of Theorem 4.3 in \cite{anthony2009neural}. The novelty of these steps is due to \cite{attias2023optimal} and \cite{anthony2009neural}. Our insight to the improvement comes from showing a tighter bound on the size of the finite hypothesis class reduced to. Specifically, we provide a tighter bound on the size of a disambiguation of a partial concept class (see \cref{def:partialconceptclass} and \cref{def:disambiguation}), which controls the size of the finite hypothesis class reduced to. We, now state and prove the lemma regarding the size of a disambiguation.

\begin{restatable}{lemma}{lemdisambiguation}
\label{lem:disambiguation}
Let $\cH \subseteq \{0,1,\ast\}^{\cX}$ be a partial concept class of VC-dimension $d\ge 1$,
where $\cX$ is a finite domain with $|\cX| = n\ge 1$.
Then there exists a disambiguation $\overline{\cH}$ of $\cH$ such that
\[
\ln |\overline{\cH}| \le 2d \, \Ln^2\left(\frac{en}{d}\right).
\]
\end{restatable}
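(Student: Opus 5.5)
We plan to follow the proof of Theorem 12 of \cite{alon2022theory}, which goes through the Sauer--Shelah--Perles lemma applied to a greedy/online construction (via the Standard Optimal Algorithm or a compression-type argument). The goal is to track the parameters so that the bound is $2d\,\Ln^2(en/d)$ instead of a bound with $\ln^2 n$. Recall the route in \cite{alon2022theory}. A partial class of VC dimension $d$ on $n$ points has a disambiguation obtained by running an online mistake-bounded learner with few mistakes. Alternatively, one can cover each realizable pattern by a short ``description''. The number of descriptions then bounds $|\overline{\cH}|$.

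Concretely, we first reduce to a class of small Littlestone-type complexity on the finite domain. Fix an ordering $x_1,\dots,x_n$ of $\cX$. For each $h\in\cH$, consider the sequence of points in its support, labeled by $h$. Run the halving-type learner that predicts, on $x_t$, the majority label among partial concepts in $\cH$ consistent with the mistakes so far. The key claim is that this learner makes at most $M=O(d\,\Ln(en/d))$ mistakes on any $h$. The total-class argument fails for partial classes, so this needs care. We will use the argument of \cite{alon2022theory}: on any set of $t$ points, the number of consistent patterns restricted to the defined coordinates is controlled by Sauer--Shelah, at most $(et/d)^d$. Hence each mistake that halves the version space (measured through projections onto the $n$ points) costs a factor $2$. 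This gives $M\le \log_2((en/d)^d)\le 2d\ln(en/d)$ mistakes.

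Given the mistake bound, we build $\overline{\cH}$ by enumeration. Every $h$ is determined, on its defined coordinates, by the learner's run together with the set of at most $M$ mistake positions. The completed total function is defined as the learner's prediction everywhere, with predictions flipped at the mistake positions. So $|\overline{\cH}|\le\sum_{k\le M}\binom{n}{k}\le (en/M)^M$. Taking logs gives $\ln|\overline{\cH}|\le M\ln(en/M)$. With $M\approx 2d\,\Ln(en/d)$ and $M\ge d$, this yields $\ln(en/M)\le \ln(en/d)\le\Ln(en/d)$, so $\ln|\overline{\cH}|\le 2d\,\Ln^2(en/d)$. The improvement over \cite{alon2022theory} is precisely using $\binom{n}{\le M}\le (en/M)^M$ instead of $n^M$, and the Sauer bound $(en/d)^d$ instead of $n^d$. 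Both replacements turn $\ln n$ into $\ln(en/d)$.

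The main obstacle is the mistake bound for partial classes. Sauer--Shelah does not hold in general for partial concept classes: this is the Alon--Hanneke--Holzman--Moran phenomenon, where disambiguations can require superpolynomial size. So the halving potential must be defined carefully. We would first try the potential used in \cite{alon2022theory}: the number of subsets of $\cX$ shattered by the current version space (a Pajor-type count). This is at most $\sum_{i\le d}\binom{n}{i}\le(en/d)^d$, and it at least halves on each majority mistake when predicting by the larger shattered-set count. Verifying this halving property for partial concepts is the step requiring the most care. If it only yields a constant-factor loss (e.g.\ $\log_2$ versus $\ln$), the constant $2$ in the statement may need the slightly tighter accounting $\log_2 e\cdot\ln$ absorbed via $\Ln\ge 2$.
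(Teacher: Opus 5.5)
Your proposal is correct and follows essentially the same route as the paper. The paper uses exactly the shattered-set count $s(\cH')$ as the potential, predicts $\argmax_{y} s(\cH'_{(x,y)})$, and restricts $\cH'$ only at mistakes on defined points, so $\overline{h}$ is determined by the set of mistake positions; it then bounds the number of mistakes by $\log_2\binom{n}{\le d}\le d\log_2(en/d)\le 2d\ln(en/d)$ and applies $\binom{n}{\le M}\le (en/M)^M$ with $M=2d\ln(en/d)$. The halving step you flag as delicate is immediate from $s(\cH')\ge s(\cH'_{(x,0)})+s(\cH'_{(x,1)})$, which holds verbatim for partial classes (each set shattered by both restrictions yields, after adding $x$, an extra set shattered by $\cH'$). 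Beyond that, drop your first-paragraph claim that Sauer--Shelah bounds the number of consistent patterns, which you yourself note is false for partial classes, and handle small $n$ (the paper takes $n<6d$) by the trivial bound $|\overline{\cH}|\le 2^n$, since $\binom{n}{\le k}\le(en/k)^k$ is only valid for $k\le n$.
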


\Cref{lem:disambiguation} strengthens Theorem 12 in \cite{alon2022theory}, which proves that $\ln|\overline{\cH}|$ is bounded by $O(d\ln^2 n)$. Our proof follows the original proof technique. However, through a more careful analysis, we use a sharper bound on the binomial sum, which leads to the improved result.
\begin{proof}[Proof of \cref{lem:disambiguation}]
    In the case that $ n<d $, we can simply take the disambiguation $\overline{\cH}$ as the set of all total concepts on $\cX$, which gives $ |\overline{\cH}|\leq 2^{n} $. Since $ n<d $, we have $ \ln |\overline{\cH}|\leq n\ln2 < d\ln2 < 2d\Ln^{2}(en/d) $, concluding the proof in this case.

    In the case that $d\leq n<6d$, we still bound the size of disambiguation trivially as $2^n$. Notice that for $1\leq x<6$, it holds that $2\ln^2 (ex) > x\ln2$. Setting $x=n/d$, we have $\ln |\overline{\cH}|\leq n\ln2 < 2d\ln^2 ( en/d )$, concluding the proof in this case.

    In the following, we assume that $ n\geq 6d $.
    Let $\cX=\{x_1,\dots,x_n\}$. For any partial concept class $\cH'\subseteq \cH$, define the shattering strength $s(\cH')$ as the number of subsets of $\cX$ that are shattered\footnote{We remark that the empty set is also considered shattered as long as the hypothesis class is not the empty set - this implies that $s(\cH)=1$, means any point in $\cX$ is either labelled as something with in $\{0,*\}$ or $\{*,1\}$ for all hypothesis in $\cH$. This further means that given $s(\cH)=1$, no more updates will be made by the following disambiguation algorithm. When the hypothesis class $\cH=\emptyset $ is the empty set we define $s(\emptyset)=0$.}
    by $\cH'$, i.e.,
    \[
    s(\cH')=|\{S\subseteq \cX: S \text{ is shattered by } \cH'\}|.
    \]
    Since $\cH$ has VC-dimension $d$, the VC-dimension of $\cH'$ is at most $d$, yielding $s(\cH')\leq \binom{n}{\leq d}$. For $(x,y)\in \cX\times\{0,1\}$, define the restriction of $\cH'$ on $(x,y)$ as
    \[
    \cH'_{(x,y)}=\{h'\in \cH': h'(x)=y\}.
    \]
    Then, it holds that $s(\cH')\geq s(\cH'_{(x,0)})+s(\cH'_{(x,1)})$. This is because, for any $S\subseteq \cX$, if $S$ is shattered by both $\cH'_{(x,0)}$ and $\cH'_{(x,1)}$, then $S$ and $S\cup\{x\}$ are shattered by $\cH'$ (and $\cH'_{(x,0)}$ and $\cH'_{(x,1)}$ can not shatter $S\cup\{x\}$ so it is not double counted); if $S$ is shattered by only one of $\cH'_{(x,0)}$ and $\cH'_{(x,1)}$, then $S$ is also shattered by $\cH'$; if $S$ isn't shattered by $\cH'_{(x,0)}$ or $\cH'_{(x,1)}$, then it is still possible that $\cH'$ can shatter $S$.

    Also, for $\cH'\subseteq \cH$ and $x\in \cX$, let $M_{\cH'}(x)\in \{0,1\}$ be the label that maximizes the shattering strength of the restriction, i.e.,
    \[
    M_{\cH'}(x) = \argmax_{y \in \{0,1\}} s(\cH'_{(x,y)}),
    \]
    breaking ties always towards $ 1 $(or a arbitrarily way).

    Now, we construct the disambiguation $\overline{\cH}$ of $\cH$. For each $h\in\cH$, we write the entries of the disambiguation $\overline{h}$ iteratively. We start with $\cH'=\cH$, for $x$ moving from $x_1$ to $x_n$, if $h(x)\neq\ast$ and $h(x)= 1-M_{\cH'}(x)$, we set $\overline{h}(x)=h(x)$ and update $\cH'=\cH'_{(x,h(x))}$; otherwise, we set $\overline{h}(x)=M_{\cH'}(x)$ and keep $\cH'$ unchanged. We repeat the process until all entries are written.

    Since $ s(\cH'_{(x,M_{\cH'}(x))})\geq s(\cH'_{(x,1-M_{\cH'}(x))}) $ we have that
    \[s(\cH')\geq s(\cH'_{(x,0)})+s(\cH'_{(x,1)}) =s(\cH'_{(x,M_{\cH'}(x))})+s(\cH'_{(x,1-M_{\cH'}(x))})\geq 2 s(\cH'_{(x,1-M_{\cH'}(x))}),
    \]
    implying that each time we update $\cH'$, the shattering strength is at least halved. Moreover, the hypothesis $h$ which we plan to disambiguate is always in $\cH'$, which implies $\cH'$ is always non-empty. Because the VC dimension of $\cH$ is $d$, the initial shattering strength is at most $\binom{n}{\leq d}$. For each $h\in\cH$, define $u(h)$ as the number of updates of $\cH'$. As the shattering strength was at least halved each time an update is made we have $u(h) \le \log_2(s(\cH))\le \log_2\big(\binom{n}{\leq d}\big)$.
    Instead of using the bound $\binom{n}{\le d}\leq 2n^d$ as in the original proof, we apply the sharper inequality $\binom{n}{\leq d}\leq (\frac{en}{d})^d$, which holds for all integers $n \ge d \ge 1$.
    Thus, the number of updates $u(h)$ satisfies\footnote{We take $0\log_2(\frac{en}{0})=0\ln(\frac{en}{0})=0$ in the following dealing with the degenerated case $d=0$, which implies that $|\Bar{\cH}|=1$, stopping the calculations at \cref{eq:disambiguation1}.}
    \[
    u(h) \le \log_2\big(\binom{n}{\leq d}\big)\leq d\log_2 \big( \frac{en}{d} \big) \leq 2d\ln \big( \frac{en}{d} \big).
    \]
    By the construction of the disambiguation process, update paths with updates in the same places will give the same disambiguation of a hypothesis. Thus, the number of update paths bounds the size of the disambiguation. The number of ways to place the updates, bounds the number of update paths, and is at most
   \begin{align}\label{eq:disambiguation1}
     |\overline{\cH}|\leq \binom{n}{\leq u(h)}\leq \binom{n}{\leq 2d\ln \big( \frac{en}{d} \big)}.
    \end{align}
    Because $n\geq 6d$, which implies $n\geq 2d\ln(\frac{en}{d})$, we can use the sharper bound on the binomial sum again to obtain
    \[
     |\overline{\cH}|\leq \binom{n}{\leq 2d\ln \big( \frac{en}{d} \big)}\leq \big(\frac{en}{2d\ln \big( \frac{en}{d} \big)}\big)^{2d\ln ( en/d )},
    \]
    which solves to
    \[
        \ln |\overline{\cH}|
        \leq 2d\ln(\frac{en}{d})\cdot\big[\ln\big(\frac{en}{d}\big)-\ln\left(2\ln(\frac{en}{d})\right)\big]
        \leq 2d\ln^2 ( \frac{en}{d} ).
    \]

    Combining the three cases concludes the proof.
\end{proof}

With \cref{lem:disambiguation} at hand, we are now ready to prove \cref{lem:single-interpolator}, before doing so we introduce some notation.
In the following, for a training sequence $ S=((x_{1},y_{1}),\ldots,(x_{n},y_{n})) $ and a classifier $ h\in [0,1]^{\cX}, $ let $ \ls_{S}^{\not=}(h)=\frac{1}{n}\sum_{i=1}^{n}\ind\{h(x_{i})\neq y_{i}\} $ and $ \ls_{S}^{\gamma}(h)=\frac{1}{n}\sum_{i=1}^{n}\ind\{|h(x_{i})-y_{i}|>\gamma\} $.

\begin{proof}[Proof of \cref{lem:single-interpolator}]
    Let $ \eps(n,d_\gamma,\delta)= 8\frac{d_\gamma\Ln^{2}(2en/d_\gamma)+\ln{(2/\delta)}}{n}$. We want to show that for any interpolator $\cA$, $\p_{\rS\sim\cD^{n}}[\ls_\cD^\gamma(\cA(\rS))> \eps(n,d_\gamma,\delta)]\leq \delta$. If $ \eps(n,d_\gamma,\delta)\geq 1 $, the statement trivially holds since the cutoff loss is always upper bounded by $ 1 $. In the following, we assume that $ \eps(n,d_\gamma,\delta)<1 $.

    By the definition of interpolator, $\cA(\rS)$ satisfies $ \ls_{\rS}^{\not=}(\cA(\rS))=0 $. The probability $\cA$ returns a bad hypothesis is thus upper bounded by the probability that there exists a hypothesis $ h\in \cH $ that interpolates the training set $ \rS $ and has cutoff loss more than $ \eps(n,d_\gamma,\delta) $, i.e.,
    \begin{align}
        \label{eq:single-exist}
        \p_{\rS\sim\cD^{n}}[\ls_\cD^\gamma(\cA(\rS))> \eps(n,d_\gamma,\delta)]
        \leq
        \p_{\rS\sim\cD^{n}}[\exists h\in \cH: \ls_{\rS}^{\not=}(h)=0, \ls_{\cD}^{\gamma}(h)>\varepsilon(n,d_{\gamma},\delta)].
    \end{align}

    \paragraph*{Symmetrization.} To bound the right-hand side probability, we consider the probability of another event: there exists a hypothesis that interpolates the first sample of size $n$ while performing poorly on an independent ghost sample of the same size, i.e.,
    \[
    \p_{\rS_{0},\rS_{1}\sim\cD^{n}}[\exists h\in \cH: \cL_{\rS_{0}}^{\not=}(h)=0, \cL_{\rS_{1}}^{\gamma}(h)>\varepsilon(n,d_{\gamma},\delta)/2],
    \]
    where $ \rS_{0} $ and $ \rS_{1} $ are i.i.d. samples of size $ n $ drawn from $ \cD $. We notice that for any realization $ S_{0} $  of $ \rS_{0} $ such that there exists $ h'\in \cH $ with $ \ls_{S_{0}}^{\not=}(h')=0 $ and $ \ls_{\cD}^{\gamma}(h')>\eps(n,d_\gamma,\delta) $, it holds that
    \begin{align}\label{eq:single-symmetrization-1}
     \p_{\rS_{1}\sim\cD^{n}}[\exists h\in \cH: \cL_{S_{0}}^{\not=}(h)=0, \cL_{\rS_{1}}^{\gamma}(h)>\varepsilon(n,d_{\gamma},\delta)/2]
     \geq
     \p_{\rS_{1}\sim\cD^{n}}[ \cL_{\rS_{1}}^{\gamma}(h')>\varepsilon(n,d_{\gamma},\delta)/2]
    \end{align}
    where the inequality holds since the event on the left-hand side contains the event on the right-hand side. Moreover, by the multiplicative Chernoff bound, we have
    \begin{align}\label{eq:single-symmetrization-2}
      \p_{\rS_{1}\sim\cD^{n}}[ \cL_{\rS_{1}}^{\gamma}(h')\leq \eps(n,d_\gamma,\delta)/2]
      \leq \p_{\rS_{1}\sim\cD^{n}}[ \cL_{\rS_{1}}^{\gamma}(h')\leq \ls_{\cD}^{\gamma}(h')/2]
      &\leq  \exp{(-n \ls_{\cD}^{\gamma}(h')/8)}
      \\
      &\leq  \exp{(-n \varepsilon(n,d_{\gamma},\delta)/8)}\nonumber
      \\
      &\leq \delta/2\leq 1/2,
    \end{align}
    where the third inequality holds by the definition of $\eps(n,d_\gamma,\delta)$. Now combining \cref{eq:single-symmetrization-1,eq:single-symmetrization-2}, we have
    \[
        \p_{\rS_{1}\sim\cD^{n}}[\exists h\in \cH: \cL_{S_{0}}^{\not=}(h)=0, \cL_{\rS_{1}}^{\gamma}(h)>\varepsilon(n,d_{\gamma},\delta)/2]
        \geq 1/2.
    \]
    for any realization $ S_{0} $  of $ \rS_{0} $ such that there exists $ h'\in \cH $ with $ \ls_{S_{0}}(h')=0 $ and $ \ls_{\cD}^{\gamma}(h')>\eps(n,d_\gamma,\delta) $. This implies
    \begin{align*}
        &\ind\{\exists h\in \cH: \cL_{\rS_{0}}^{\not=}(h)=0, \cL_{\cD}^{\gamma}(h)>\varepsilon(n,d_{\gamma},\delta)\} \\
        &\leq
        2\p_{\rS_{1}\sim\cD^{n}}[\exists h\in \cH: \cL_{\rS_{0}}^{\not=}(h)=0, \cL_{\rS_{1}}^{\gamma}(h)>\varepsilon(n,d_{\gamma},\delta)/2]
        \ind\{\exists h\in \cH: \cL_{\rS_{0}}^{\not=}(h)=0, \cL_{\cD}^{\gamma}(h)>\varepsilon(n,d_{\gamma},\delta)\}
    \end{align*}
Now using this relation and combining with \cref{eq:single-exist}, we get that
    \begin{align*}
    &\p_{\rS\sim\cD^{n}}[\ls_\cD^\gamma(\cA(\rS))> \eps(n,d_\gamma,\delta)]
    \\
     &\leq
     \p_{\rS_{0}\sim\cD^{n}}[\exists h\in \cH: \cL_{\rS_{0}}^{\not=}(h)=0, \cL_{\cD}^{\gamma}(h)>\varepsilon(n,d_{\gamma},\delta)]
     \\
     &=\e_{\rS_{0}\sim\cD^{n}}[\ind\{\exists h\in \cH: \cL_{\rS_{0}}^{\not=}(h)=0, \cL_{\cD}^{\gamma}(h)>\varepsilon(n,d_{\gamma},\delta)\}]
     \\
     &\leq
     \e_{\rS_{0}}[ 2\p_{\rS_{1}\sim\cD^{n}}[\exists h\in \cH: \cL_{\rS_{0}}^{\not=}(h)=0, \cL_{\rS_{1}}^{\gamma}(h)>\varepsilon(n,d_{\gamma},\delta)/2]
     \ind\{\exists h\in \cH: \cL_{\rS_{0}}^{= }(h)=0, \cL_{\cD}^{\gamma}(h)>\varepsilon(n,d_{\gamma},\delta)\}]
     \\
     &\leq
     2\p_{\rS_{0},\rS_{1}\sim\cD^{n}}[\exists h\in \cH: \cL_{\rS_{0}}^{\not=}(h)=0, \cL_{\rS_{1}}^{\gamma}(h)>\varepsilon(n,d_{\gamma},\delta)/2].
    \end{align*}
    Thus, by this relation it suffices to bound the probability on the right-hand side by  $ \delta/2 $ as we will do in the following.

    \paragraph*{Permutation.} For $\rS_0, \rS_1$ defined above and any permutation $ \sigma\in\{0,1\}^{n} $, let $ \rS_{\sigma}=(\rS_{\sigma_{1},1},\ldots,\rS_{\sigma_{n},n}) $ and $ \rS_{1-\sigma}=(\rS_{1-\sigma_{1},1},\ldots,\rS_{1-\sigma_{n},n}) $. Observe that $ \rS_{\sigma},\rS_{1-\sigma} $ are distributed as $ \rS_{0},\rS_{1} $. This implies
    \begin{align*}
      &\p_{\rS_{0},\rS_{1}\sim\cD^{n}}[\exists h\in \cH: \cL_{\rS_{0}}^{\not=}(h)=0, \cL_{\rS_{1}}^{\gamma}(h)>\varepsilon(n,d_{\gamma},\delta)/2]
      \\
      =
        &\p_{\rS_{0},\rS_{1}\sim\cD^{n}}[\exists h\in \cH: \cL_{\rS_{\sigma}}^{\not=}(h)=0, \cL_{\rS_{1-\sigma}}^{\gamma}(h)>\varepsilon(n,d_{\gamma},\delta)/2].
    \end{align*}
Now let $ \boldsymbol{\sigma}=(\boldsymbol{\sigma}_{1},\ldots,\boldsymbol{\sigma}_{n}) $ be i.i.d. Bernoulli random variables independent of $ \rS_{0},\rS_{1} $, i.e. $ \p[\boldsymbol{\sigma}_{i}=0]=\p[\boldsymbol{\sigma}_{i}=1] = 1/2 $ for all $ i\in[n] $. We then have that
    \begin{align*}
      &\p_{\rS_{0},\rS_{1}\sim\cD^{n}}[\exists h\in \cH: \cL_{\rS_{0}}^{\not=}(h)=0, \cL_{\rS_{1}}^{\gamma}(h)>\varepsilon(n,d_{\gamma},\delta)/2]
      \\
      &=
        \e_{\boldsymbol{\sigma}}\left[\p_{\rS_{0},\rS_{1}\sim\cD^{n}}[\exists h\in \cH: \cL_{\rS_{\boldsymbol{\sigma}}}^{\not=}(h)=0, \cL_{\rS_{1-\boldsymbol{\sigma}}}^{\gamma}(h)>\varepsilon(n,d_{\gamma},\delta)/2]\right]
      \\
      &=
        \e_{\rS_{0},\rS_{1}\sim\cD^{n}}\left[\p_{\boldsymbol{\sigma}}[\exists h\in \cH: \cL_{\rS_{\boldsymbol{\sigma}}}^{\not=}(h)=0, \cL_{\rS_{1-\boldsymbol{\sigma}}}^{\gamma}(h)>\varepsilon(n,d_{\gamma},\delta)/2]\right]
        \\
        &\leq \sup_{S_{0},S_{1}\in (\cX\times[0,1])^{n}} \p_{\boldsymbol{\sigma}}\left[\exists h\in \cH: \cL_{S_{\boldsymbol{\sigma}}}^{\not=}(h)=0, \cL_{S_{1-\boldsymbol{\sigma}}}^{\gamma}(h)>\varepsilon(n,d_{\gamma},\delta)/2\right]
    \end{align*}
    where the first equality holds by the distributional equivalence of $ (\rS_{0},\rS_{1}) $ and $ (\rS_{\sigma},\rS_{1-\sigma}) $ for any $ \sigma\in\{0,1\}^n $, the second equality holds by independence of $ \boldsymbol{\sigma} $, $ \rS_{0} $ and $ \rS_{1} $ and shifting order of expectation, and the last inequality holds by the definition of supremum. By the above, it suffices to show that for any fixed $ (S_{0},S_{1})\in (\cX\times[0,1])^{2n} $ it holds that
    \begin{align*}
        \p_{\boldsymbol{\sigma}}\left[\exists h\in \cH: \cL_{S_{\boldsymbol{\sigma}}}^{\not=}(h)=0, \cL_{S_{1-\boldsymbol{\sigma}}}^{\gamma}(h)>\varepsilon(n,d_{\gamma},\delta)/2\right]
        \leq \delta/2.
    \end{align*}
    \paragraph*{Reduction to a finite class.} Let $ S_{0}=((x_{0,1},y_{0,1}),\ldots,(x_{0,n},y_{0,n})),S_{1}=((x_{1,1},y_{1,1}),\ldots,(x_{1,n},y_{1,n}))\newline \in (\cX\times[0,1])^{n} $ be fixed from now on. And for any $\sigma=(\sigma_1,\ldots,\sigma_n)\in\{0,1\}^n$, let $ S_{\sigma}=( (x_{\sigma_{1},1},y_{\sigma_{1},1}),\ldots, (x_{\sigma_{n},n},y_{\sigma_{n},n}) ) $ and $ S_{1-\sigma}=( (x_{1-\sigma_{1},1},y_{1-\sigma_{1},1}),\ldots,(x_{1-\sigma_{n},n},y_{1-\sigma_{n},n}) ) $.

    Now for any $ h\in \cH $, define a partial concept $ h': S_{0}\cup S_{1} \to \{0,1,\star\} $ on $ S_{0}\cup S_{1} $ as follows: for any $ j\in\{0,1\}$ and $ i\in [n] $
\[
    h'(x_{j,i},y_{j,i}) =
    \begin{cases}
        0 & \text{if } h(x_{j,i}) = y_{j,i}  \\
        \star & \text{if } 0 < |h(x_{j,i})-y_{j,i}| \leq \gamma \\
        1 & \text{if } \gamma < |h(x_{j,i})-y_{j,i}|
    \end{cases}
\]
and let $ \cH'=\cup_{h\in\cH} \{h'\} $. We notice that by construction the VC-dimension of the partial concept class $\cH'  $ is at most the $ \gamma $-graph dimension of $ \cH $, i.e., $ d_{\gamma} $. Observing that $\cH'$ is over a finite domain $ S_{0}\cup S_{1}$ of size $ m\leq 2n $, we use \cref{lem:disambiguation} to construct a disambiguation $\overline{\cH'}$ of $\cH'$, which satisfies
\[
\ln |\overline{\cH'}| \le 2\text{VC}(\cH')\Ln^2\left(\frac{em}{\text{VC}(\cH')}\right)
\le
2d_{\gamma}\Ln^2\left(\frac{2en}{d_{\gamma}}\right)
\]
where, we have used the fact that function $2x\Ln^2(ey/x)$ is non-decreasing with respect to both $ x\geq 1 $ and $ y\geq 1$.

We notice that for any $h\in \cH$, if $h$ interpolates $ S_{\boldsymbol{\sigma}} $  and is off by more than $\gamma$ on at least $n\varepsilon/2$ of the points in $ S_{1-\boldsymbol{\sigma}} $, the corresponding partial concept $h'\in\cH'$ is 0 on the points in $ S_{\boldsymbol{\sigma}} $ and is $1$ on the same $n\varepsilon/2$ of the points in $ S_{1-\boldsymbol{\sigma}} $. Moreover, the corresponding disambiguated concept $\overline{h'}\in\overline{\cH'}$ is 0 on the first $n$ points and is $1$ on those same $n\varepsilon/2$ of the remaining points. This implies that the event $ \{\exists h\in \cH: \cL_{S_{\boldsymbol{\sigma}}}^{\not=}(h)=0, \cL_{S_{1-\boldsymbol{\sigma}}}^{\gamma}(h)>\varepsilon(n,d_{\gamma},\delta)/2\} $ is contained in the event
\[
   \big\{\exists \overline{h'} \in \overline{\cH'} :
        \sum_{i=1}^{n} \ind\{\overline{h'}(x_{\boldsymbol{\sigma}_{i},i},y_{\boldsymbol{\sigma}_{i},i})=1\}=0,
        \sum_{i=1}^{n} \ind\{\overline{h'}(x_{1-\boldsymbol{\sigma}_{i},i},y_{1-\boldsymbol{\sigma}_{i},i})=1\} \geq n\varepsilon(n,d_{\gamma},\delta)/2
    \big\}.
\]
Using this we have that
\begin{align*}
    &\p_{\boldsymbol{\sigma}}\left[\exists h\in \cH: \cL_{S_{\boldsymbol{\sigma}}}^{\not=}(h)=0, \cL_{S_{1-\boldsymbol{\sigma}}}^{\gamma}(h)>\varepsilon(n,d_{\gamma},\delta)/2\right]
    \\
    &\leq
    \p_{\boldsymbol{\sigma}}\left[\exists \overline{h'} \in \overline{\cH'} :
        \sum_{i=1}^{n} \ind\{\overline{h'}(x_{\boldsymbol{\sigma}_{i},i},y_{\boldsymbol{\sigma}_{i},i})=1\}=0,
        \sum_{i=1}^{n} \ind\{\overline{h'}(x_{1-\boldsymbol{\sigma}_{i},i},y_{1-\boldsymbol{\sigma}_{i},i})=1\} \geq n\varepsilon(n,d_{\gamma},\delta)/2
    \right]
    \\
    &\leq
    \sum_{\overline{h'}\in\overline{\cH'}} \p_{\boldsymbol{\sigma}}\left[
        \sum_{i=1}^{n} \ind\{\overline{h'}(x_{\boldsymbol{\sigma}_{i},i},y_{\boldsymbol{\sigma}_{i},i})=1\}=0,
        \sum_{i=1}^{n} \ind\{\overline{h'}(x_{1-\boldsymbol{\sigma}_{i},i},y_{1-\boldsymbol{\sigma}_{i},i})=1\} \geq n\varepsilon(n,d_{\gamma},\delta)/2
    \right]
\end{align*}
where the last inequality follows from the union bound.  Now, fix any $ \overline{h'}\in \overline{\cH'} $, we bound the probability inside the summation, i.e.,
\[
\p_{\boldsymbol{\sigma}}\left[
        \sum_{i=1}^{n} \ind\{\overline{h'}(x_{\boldsymbol{\sigma}_{i},i},y_{\boldsymbol{\sigma}_{i},i})=1\}=0,
        \sum_{i=1}^{n} \ind\{\overline{h'}(x_{1-\boldsymbol{\sigma}_{i},i},y_{1-\boldsymbol{\sigma}_{i},i})=1\} \geq n\varepsilon(n,d_{\gamma},\delta)/2
    \right].
\]
For this probability to be non zero, it must be the case that among the $n$ pairs $ ((x_{0,i},y_{0,i}),(x_{1,i},y_{1,i})) $, there exists at least $ n\eps(n,d_\gamma,\delta)/2 $ indices $i$ such that one of $ \overline{h'}(x_{0,i},y_{0,i}),\overline{h'}(x_{1,i},y_{1,i}) $ is equal to $ 0 $ and the other is equal to $ 1 $. Assume this condition holds. Under this assumption, for the event inside the probability to occur, it must be the case that all of these pairs has the $ 0 $ entry mapped to $ S_{\sigma} $ and the $ 1 $ entry mapped to $ S_{1-\sigma} $. Since each pair is swapped with probability $ 1/2 $ independently, we have that the probability of this event is at most $ 2^{-n\eps(n,d_\gamma,\delta)/2} $.

Combining this with the bound on the size of $\overline{\cH'}$, we obtain
\begin{align*}
    \p_{\boldsymbol{\sigma}}\left[\exists h\in \cH: \cL_{S_{\boldsymbol{\sigma}}}^{\not=}(h)=0, \cL_{S_{1-\boldsymbol{\sigma}}}^{\gamma}(h)>\varepsilon(n,d_{\gamma},\delta)/2\right]
    &\leq
    \sum_{h\in\overline{\cH'}} 2^{-n\eps(n,d_\gamma,\delta)/2}
    \\
    &\leq
    e^{2d_{\gamma}\Ln^{2}(2en/d_{\gamma})} \cdot 2^{-n\eps(n,d_\gamma,\delta)/2}
    \\
    &\leq
    e^{2d_{\gamma}\Ln^{2}(2en/d_{\gamma})-n\eps(n,d_\gamma,\delta)/4}
    \\
    &\leq
    \frac{\delta}{2},
\end{align*}
where the second to last inequality follows from $ \ln{(2)}\geq 1/2 $ and the  last inequality follows from the definition of $ \eps(n,d_\gamma,\delta)=8\frac{d_{\gamma}\Ln^{2}(2en/d_{\gamma})+\ln{(2/\delta)}}{n} $.
This concludes the proof of the lemma.

\end{proof}

\newpage

\section{Lower Bounds}
\label{sec:lowerbound}

\subsection{Proof of \cref{thm:LowerboundERMaggregation}}\label{sec:lowerboundermaggregation}

In this section we present the proof of \cref{thm:LowerboundERMaggregation2} which contains both a constant probability lower bound and an expectation lower bound, where the statement in \cref{thm:LowerboundERMaggregation} follows from the latter. We now state \cref{thm:LowerboundERMaggregation2} and then give its proof.

\begin{restatable}{theorem}{thmLowerboundERMaggregation}\label{thm:LowerboundERMaggregation2}
For any hypothesis class $\cH \subseteq [0,1]^{\cX}$ with $\gamma$-graph dimension $d_{\gamma} \geq 2$ and any $0 < \eps < 1/4$, there exist an interpolator $\cA$ and a realizable distribution $\cD$ such that the following holds.
Let $\cA'$ be any interpolator-based aggregation algorithm using $\cA$ and a proper aggregation rule, and let $\rS \sim \cD^n$ with $n \leq d_{\gamma}/(32\eps)$. Then, with probability at least $1/2$ over $\rS$,
\begin{align}
    \ls_{\cD}^{\gamma}(\cA'(\rS)) > 2\eps,\label{eq:LowerboundERMaggregatio1}
\end{align}
and moreover,
\begin{align}
    \e_{\rS \sim \cD^n}\bigl[\ls_{\cD}^{\gamma}(\cA'(\rS))\bigr] > \eps.\label{eq:LowerboundERMaggregatio2}
\end{align}
The distributions $\cD$ witnessing \cref{eq:LowerboundERMaggregatio1} and \cref{eq:LowerboundERMaggregatio2} may differ.
\end{restatable}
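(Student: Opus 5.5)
Fix a $\gamma$-graph shattered sequence $(x_1,\ldots,x_{d})$, $d=d_\gamma$, with witness $h\in\cH$, and for every $b\in\{0,1\}^{d}$ a hypothesis $h_b\in\cH$ as in \cref{def:graphshattering}. The distribution $\cD$ is supported on $\{(x_i,h(x_i))\}_{i\in[d]}$. It puts mass $1-p$ on $x_1$ and mass $p/(d-1)$ on each of $x_2,\ldots,x_d$, where $p=c\eps$ and the constant $c$ is different for the two claims. Every point is labeled by $h$, so $\cD$ is realizable by $h$.

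The interpolator $\cA$ is defined as follows. On a training sequence $S$ whose points all come from the shattered set and are labeled by $h$, let $b(S)_i=1$ iff $x_i\notin\{S\}$, and output $h_{b(S)}$. This hypothesis agrees with $h$ on every $x_i$ occurring in $S$, so it interpolates $S$. On all other realizable inputs, $\cA$ returns an arbitrary interpolating hypothesis. The key property is this: if $x_i$ is missing from $\rS$, it is also missing from every $S_j\subseteq\rS$. Hence every $h_j=\cA(S_j)$ satisfies $|h_j(x_i)-h(x_i)|>\gamma$. A proper rule $r$ outputs one of the values $h_j(x_i)$, so $|\cA'(\rS)(x_i)-h(x_i)|>\gamma$. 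Consequently, for every interpolator-based $\cA'$ with a proper rule,
\[
\ls_{\cD}^{\gamma}(\cA'(\rS))\ \ge\ \frac{p}{d-1}\cdot\rM,
\]
where $\rM$ is the number of points among $x_2,\ldots,x_d$ that do not appear in $\rS$. The bound is deterministic once $\rS$ is given, and it does not depend on how the $S_j$ or $r$ are chosen.

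It remains to lower bound $\rM$, which is where the constants have to be tuned. Let $\rN=(d-1)-\rM$ be the number of distinct light points that are observed. Then $\rN$ is at most the number of light draws in $\rS$, whose expectation is $np\le c\,d/32$. By Markov, $\p[\rN\ge (d-1)/2]\le 2c\,d/(32(d-1))\le c/8$, using $d/(d-1)\le 2$ for $d\ge 2$. For \cref{eq:LowerboundERMaggregatio1}, take $c=4$. Then with probability at least $1/2$ we have $\rM>(d-1)/2$, so the loss exceeds $p/2=2\eps$. For \cref{eq:LowerboundERMaggregatio2}, take $c$ slightly larger, say $c=4$ with a more careful computation. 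Bounding $\e[\rM]$ directly gives $\e[\rM]\ge (d-1)\bigl(1-np/(d-1)\bigr)$, so
\[
\e[\text{loss}]\ge p-\frac{np^2}{d-1}\ge p\Bigl(1-\frac{2c}{32}\Bigr).
\]
With $c=2$ this is $\tfrac{7}{4}\eps>\eps$. The requirement $\eps<1/4$ guarantees that $p\le 1$ for these choices of $c$ ($c\le 4$), so $\cD$ is a valid distribution. The only delicate step is this constant bookkeeping, in particular the factor $d/(d-1)$ coming from $d\ge 2$. The construction of $\cA$ and the alignment argument for proper rules are immediate from graph shattering.
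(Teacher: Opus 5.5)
Your proof is correct and is essentially the paper's argument. With $c=4$ you recover exactly the paper's distribution ($1-4\eps$ on $x_1$, $4\eps/(d-1)$ on each other point). You also use the same worst-case interpolator, which is $\gamma$-far from $h$ on every unseen point, the same alignment of errors through the proper rule, and the same Markov argument on the light draws.

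The only deviation is that you bound the expectation directly via $\e[\rM]\ge (d-1)-np$, rather than as $\p[E]\cdot 2\eps$, which gives a better constant. Your text on the choice of $c$ there is garbled (``slightly larger, say $c=4$'' versus ``with $c=2$''). Both values work, giving $3\eps$ and $\tfrac{7}{4}\eps$ respectively, so a single distribution with $c=4$ witnesses both claims, as in the paper.
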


\begin{proof}[Proof of \cref{thm:LowerboundERMaggregation2}:]
Since $ \cH$ has $ \gamma $-graph dimension $ d $, we know that there exists a set of size $ d $ that is $ \gamma $-graph shattered by $ \cH $. Let  $\cX_{d} = \{x_{1},\ldots,x_{d}\} $, denote such a $ \gamma $-graph shattered set of size $ d $  and let $ h $ be a hypothesis witnessing the graph shattering.
We now construct the hard distribution $ \cD $ as follows: let $ \cD(x_{1},h(x_{1}))=1-4\eps $ and $ \cD(x_{i},h(x_{i}))=\frac{4\eps}{d-1} $ for $ i=2,\ldots,d $.
We notice this distribution is realizable by $ h\in \cH $.
We next construct a worst-case interpolator $ \cA $ for aggregation.
Let $ \cA $ be any interpolator algorithm that, given a training sequence $ S $ consisting of examples from $ \left\{  (x_{1},h(x_{1})),\ldots,(x_{d},h(x_{d}))\right\}  $, besides being an interpolator (see \cref{def:ERM_algorithm}) - returns a hypothesis $ \cA(S) \in \cH$ that is consistent with $ S $, satisfies the following: for any $ i\in \{1,\ldots,d\} $, if $ (x_{i},h(x_{i}))\notin S $, then $ |\cA(S)(x_{i})-h(x_{i})|>\gamma $. Since a hypothesis $ h'\in \cH $ with this property exists by $ \cH $ $ \gamma $-graph shattering $ \cX_{d} $ with $ h $ as witness, an interpolator with this property exists.

We are now ready to prove the theorem.
Let $ \rS=((\rs_{1},h(\rs_{1})),\ldots,(\rs_{n},h(\rs_{n})) )$ be a training sequence of size $ n\leq d/(32\eps) $ with the examples being sampled i.i.d from $ \cD $.
Let $ h_{1},\ldots, h_{m} $ be the hypotheses produced by $ \cA' $ when calling $ \cA $ with sub-training sequences $ \rS_{1},\ldots, \rS_{m} $.

Consider the random variable $ \rX $ which counts the number of times points in $ \{x_2,\ldots,x_d\} $ appear in $ \rS $, i.e., $ \rX=\sum_{j=1}^{n}\ind\left\{  \rs_{j}\in \left\{  x_2 , \ldots, x_d \right\} \right\}  $.
We have that
    \begin{align*}
        \e[\rX]=\sum_{j=1}^{n}\p\left[  \rs_{j}\in \left\{  x_{2} , \ldots, x_{d} \right\}  \right]=n(d-1)\frac{4\eps}{d-1} = 4n\eps \leq 4d/32= d/8,
    \end{align*}
    where the inequality follows from the assumption that $ n\leq d/(32\eps) $.
    By Markov's inequality, we then have that
    \begin{align*}
        \p\left[  \rX\geq d/4 \right]\leq\frac{\e[\rX]}{d/4} \leq \frac{d/8}{d/4} = \frac{1}{2}.
    \end{align*}
    Let $ E $ be the event that $ \rX < d/4 $. By the above, $ \p[E]\geq 1/2 $.
    Now consider any realization $ S $ of $ \rS $ where the event $ E $ holds. Let $I_{S} = \{i \in \{2, \dots, d\} : x_i \in S\}$ be the indices of points observed in $S$. Let $I_{bad} = \{2, \dots, d\} \setminus I_S$ be the indices of the unobserved points.
    Under event $ E $, the number of observed points from $\{x_2, \dots, x_d\}$ is less than $d/4$, i.e., $|I_S| < d/4$, so $|I_{bad}| > (d-1) - d/4$.

    Furthermore, by the definition of an Itnerpolator-Based aggregation algorithm (see \cref{def:ERMaggregation_algorithm}) $ \cA' $, each sub-training sequence $ S_{j} $,  generated from  $ S $ contains only points from $ S $. Therefore, for any $ i\in I_{bad} $, the point $ x_{i} $ is not in any $ S_{j} $.
    This implies, by the definition of $ \cA $, that for each $ j\in [k] $, and for all $ i\in I_{bad} $,   $ |h_{j}(x_i)-h(x_i)|>\gamma $.
    Since the aggregation rule $ r $ is proper, for any $x$, $\cA'(\rS)(x)=r(h_1(x), \dots, h_m(x),x) \in \{h_1(x), \dots, h_m(x)\}$.
    Therefore, for all $ i \in I_{bad} $, we have $ |\cA'(\rS)(x_i)-h(x_i)|>\gamma $.

    So we conclude that the aggregation algorithm fails on the set $ \{x_{i}: i\in I_{bad}\} $.
    By the definition of the distribution $ \cD $, the loss of $ \cA'(\rS) $ is then lower bounded as:
    \begin{align*}
        \ls_{\cD}^{\gamma}(\cA'(\rS))=\p_{(x,y)\sim \cD}\left[  |\cA'(\rS)(x)-y|>\gamma \right]\geq 4 \eps \frac{|I_{bad}|}{d-1}> 4\eps \left( 1 - \frac{d}{4(d-1)} \right) \geq 2\eps,
    \end{align*}
    where the strict inequality follows from $ |I_{bad}|>d-1-d/4 $ and the last inequality follows from $ d\geq 2 $.
    Since $ E $ happens with probability at least $ 1/2 $  and implies the above lower bound on the error the above gives that \cref{eq:LowerboundERMaggregatio1} holds with probability $ 1/2 $ as claimed. Furthermore we by the above that
    \begin{align*}
        \e_{\rS\sim \cD^{n}}[\ls_{\cD}^{\gamma}(\cA'(\rS))]\geq \p_{\rS\sim \cD^{n}}[E]\cdot \e_{\rS\sim \cD^{n}}[\ls_{\cD}^{\gamma}(\cA'(\rS))|E]> \frac{1}{2}2\eps= \eps,
    \end{align*}
    implying \cref{eq:LowerboundERMaggregatio2} completing the proof.

\end{proof}

\subsection{Proof of \cref{thm:Lowerboundfiniteagg}}\label{sec:Lowerboundfiniteagg}

In this section we present the proof of \cref{thm:Lowerboundfiniteagg2} which contains both a constant probability lower bound and an expectation lower bound, where the statement in \cref{thm:Lowerboundfiniteagg} follows from the latter. We now state \cref{thm:Lowerboundfiniteagg2}.

  \begin{restatable}{theorem}{LowerboundfiniteaggTheorem}\label{thm:Lowerboundfiniteagg2}
    For any $ \gamma\in(0,1) $ and $ d_{\gamma}\geq 2 $, there exists a hypothesis class $ \cH^{\gamma,d_{\gamma}}\subseteq [0,1]^{\cX} $ with $ \gamma $-graph dimension $ d_{\gamma} $ and $ \gamma $-OIG-dimension at most $ 3 $, such that for any finite aggregation algorithm $ \cA' $ using an interpolating aggregation rule $ r $ and any $ 0<\eps<1/32 $, there exists a realizable distribution $ \cD $ over $ \cX\times[0,1] $ such that when $ \cA' $ is given a training sequence $ \rS\sim \cD^{n} $ of size $ n\leq d_{\gamma}/(\cplacelowerintersamplecomplexity\eps) $, it holds with probability at least $ 1/16 $ that
    \begin{align}\label{eq:theoremlowerboundinter1}
        \ls_{\cD}^{\gamma}(\cA'(\rS)) > \eps.
    \end{align}
    and
    \begin{align}\label{eq:theoremlowerboundinter2}
        \e_{\rS\sim \cD^{n}}\left[\ls_{\cD}^{\gamma}(\cA'(\rS))\right] > 2 \eps.
    \end{align}
    where the distributions $ \cD $ in the two cases may be different.
  \end{restatable}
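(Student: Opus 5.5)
We build the class from the overview and then run a randomized-support argument. Fix $\gamma$ and $d=d_\gamma$. Let $\cX=\mathbb{N}$ and let $\cH=\{h_A: A\subset\mathbb{N}, |A|=d\}$, where $h_A(x)=0$ for $x\in A$ and $h_A(x)=\gamma_A$ otherwise. The values $\gamma_A\in(\gamma,1]$ are pairwise distinct; since the family is countable, an injection into $(\gamma,1]$ exists.

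\textbf{Graph dimension.} Any $A$ with $|A|=d$ is $\gamma$-graph shattered with witness $h_A$. For a pattern $b$, pick $A'$ that contains $\{x\in A:b_x=0\}$ and replaces the remaining points of $A$ by fresh points outside $A$. Conversely, take $d+1$ points and a witness $h_A$. Some point $x'$ has $h_A(x')=\gamma_A$. Any $h'$ agreeing with $h_A$ at $x'$ equals $h_A$ by uniqueness of $\gamma_A$. So no pattern with $b_{x'}=0$ and some other $b_i=1$ is realizable.

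\textbf{OIG dimension.} On a finite $S$, a vertex with $\ge2$ nonzero coordinates lies only in singleton hyperedges. A vertex with exactly one nonzero coordinate lies in at most one non-singleton edge. Orient every edge toward its member with the smallest value in the free coordinate, which is the all-zero pattern whenever that pattern is present. Then every vertex has out-degree at most $1\le n/3$ once $n\ge3$. Small $n$ can be checked directly, giving OIG dimension at most $3$.

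\textbf{Hard distribution.} Fix $\cA'$ with budget $M:=\max_{n\le d/(128\eps)} m(n)<\infty$, and let $k_u\gg dM/\eps$ (say $k_u\ge 2^{10}dM$ suffices up to constants). Set $\rA_1=1$ and draw $\rA_2,\dots,\rA_d$ uniformly without replacement from $\{2,\dots,k_u\}$. Let $\cD_{\rA}$ put mass $1-c\eps$ on $(1,0)$ and mass $c\eps/(d-1)$ on each $(\rA_j,0)$, with a constant $c$ to be tuned (e.g.\ $c=8$ for \eqref{eq:theoremlowerboundinter1}, a larger one for \eqref{eq:theoremlowerboundinter2}). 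This distribution is realizable by $h_{\{\rA\}}$. Generate the sample through independent indices $\vec{\rt}$, so $\rS=((\rA_{\rt_1},0),\dots)$.

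Let $\rX$ count the draws with $\rt_j\ne1$. Then $\e[\rX]=c\eps n\le cd/128$, and Markov's inequality gives, with probability at least $1/2$, that at most $d/4$ of the indices $2,\dots,d$ are revealed. The unrevealed set $I_{bad}$ therefore has size $\ge (d-1)/2$, roughly.

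Condition on $\vec{\rt}$ and on the revealed entries. The sample is then determined, so $\cA'$ outputs a predictor from at most $M$ hypotheses. Their zero sets $Z$ have total size at most $dM$. Off $Z$, every selected hypothesis outputs a value $>\gamma$, and because $r$ is interpolating, so does the final predictor. Hence the predictor errs at every $\rA_j\notin Z$.

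Conditionally, the unrevealed $\rA_j$ are uniform without replacement on the unused elements of $\{2,\dots,k_u\}$, of which there are at least $k_u-d$. So each one lies in $Z$ with probability at most $dM/(k_u-d)$. The expected number of unrevealed points landing in $Z$ is at most $d^2M/k_u$, which is $\le d/16$ for large $k_u$. Markov's inequality then shows that with conditional probability at least $1/2$, at least $|I_{bad}|-d/8$ of the unrevealed points are misclassified.

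\textbf{Conclusion.} Multiplying the two probabilities, with probability at least $1/4\ge 1/16$ the loss is at least $\Omega(d)\cdot c\eps/(d-1)$, and tuning $c$ makes this exceed $\eps$. Averaging over $\rA$, some fixed realization $A$ satisfies the same probability bound, which yields the deterministic distribution $\cD_A$ required in \eqref{eq:theoremlowerboundinter1}. For \eqref{eq:theoremlowerboundinter2}, take expectations to get $\e[\ls]\ge \tfrac14\cdot\Omega(c\eps)>2\eps$ for larger $c$, and again fix a good $A$. The constraint $\eps<1/32$ ensures $c\eps<1$.

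\textbf{Main obstacle.} The delicate step is the conditioning. The selected hypotheses depend on the sample, hence on the revealed entries of $\rA$, and we must argue that the unrevealed entries are still uniform and independent of $Z$ given the sample. The index-coupling handles this, because $\vec{\rt}$ is independent of $\rA$ and the sample is a function of $\vec{\rt}$ and the revealed entries only. Beyond that, the constants need care, in particular choosing $k_u$ in terms of the finite $M$ before the distribution is fixed.
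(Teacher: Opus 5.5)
Your proposal is correct and follows essentially the same route as the paper. It uses the same class $\{h_A\}$ with distinct values $\gamma_A$, the same random-support distribution (heavy point $1$, light points $\rA_2,\dots,\rA_d$), and the same index coupling, conditioning on $\vec{\rt}$ and the revealed entries so that the unrevealed entries remain uniform while the zero region of the at most $M$ selected hypotheses is fixed and has size at most $dM$.

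The only difference is cosmetic. You get the constant-probability bound directly from a second Markov step on how many unrevealed points land in the zero region. The paper instead proves the expectation bound first, using mass $16\eps/(d-1)$ and a per-point miss probability of at least $1/2$, and then obtains probability $1/16$ via reverse Markov.

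Your open constants do close, but not by taking $c$ larger: with the $d/4$ threshold you need $c\le 16$. Taking $c=16$ gives loss $>8\eps$ on an event of probability at least $1/4$, once $d\in\{2,3\}$ is handled by integrality, and hence expected loss $>2\eps$.
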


To show \cref{thm:Lowerboundfiniteagg2} we need the following anti-concentration inequality.

\begin{restatable}{lemma}{reversemarkov}
\label{lem:reversemarkov}
    [{Reverse Markov's inequality see e.g. Lemma B.1 \cite{Shalev-Shwartz_Ben-David_2014}}]
\label{lem:reversemarkov}
    Let $\rX$ be a random variable that takes values in $[0, 1]$. Then, for any $a \in (0, 1)$,
    \[
        \p[\rX > a] \ge \frac{\e[\rX]- a}{1-a} \ge \e[\rX]- a.
    \]
\end{restatable}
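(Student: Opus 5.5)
The statement is the standard reverse Markov inequality, so I will argue directly from the fact that $\rX$ is bounded in $[0,1]$. I split the expectation according to whether $\rX$ exceeds $a$ or not:
\[
\e[\rX] = \e[\rX\,\ind\{\rX\le a\}] + \e[\rX\,\ind\{\rX> a\}].
\]
On the event $\{\rX\le a\}$ I bound $\rX$ by $a$, and on $\{\rX>a\}$ I bound $\rX$ by $1$, using $\rX\in[0,1]$. This gives
\[
\e[\rX]\le a\,\p[\rX\le a]+\p[\rX>a]=a+(1-a)\,\p[\rX>a].
\]

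Since $a\in(0,1)$, we have $1-a>0$. Rearranging therefore yields $\p[\rX>a]\ge (\e[\rX]-a)/(1-a)$, which is the first inequality.

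For the second inequality I distinguish two cases according to the sign of $\e[\rX]-a$. If $\e[\rX]-a\ge 0$, then dividing by $1-a\in(0,1)$ can only increase a nonnegative number, so $(\e[\rX]-a)/(1-a)\ge \e[\rX]-a$. If $\e[\rX]-a<0$, the bound $\p[\rX>a]\ge \e[\rX]-a$ holds trivially because probabilities are nonnegative. In that case the chain as literally written needs care, since dividing a negative number by $1-a<1$ makes it smaller. The clean statement to record is therefore $\p[\rX>a]\ge \max\{(\e[\rX]-a)/(1-a),\,\e[\rX]-a\}$, and it is this form that will be used later.

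There is no real obstacle in this argument. The only subtle point is the sign issue in the final comparison, and the case split above handles it.
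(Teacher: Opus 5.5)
Your proof is correct and is the standard argument. The paper gives no proof of its own and only cites Lemma B.1 of Shalev-Shwartz and Ben-David, whose proof applies Markov's inequality to $1-\rX$; that is exactly your splitting of $\e[\rX]$ at the threshold $a$. Your caveat is also right: the middle inequality $\frac{\e[\rX]-a}{1-a}\ge \e[\rX]-a$ fails when $\e[\rX]<a$, where only the outer bound $\p[\rX>a]\ge \e[\rX]-a$ still holds (trivially). Every application in the paper has $\e[\rX]>a$ after the normalizations used there: $1/8>1/16$, $1-\eps/2>1-\eps$, and $1/6>1/8$. So nothing downstream is affected.
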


With the above lemma we are now ready to prove the theorem.

\begin{proof}[Proof of \cref{thm:Lowerboundfiniteagg2}:]
    We first for any $ 0<\gamma<1 $ and $ d\in\mathbb{N} $  construct a hypothesis class $ \cH^{\gamma,d} $ with $ \gamma $-graph dimension $ d $  and $ \gamma $-OIG-dimension at most $ 3 $. So let $ \gamma $ and $ d $ be fixed.

For each $ A\subseteq \mathbb{N}$ with $ |A|=d $ define a unique number $ \gamma_{A}\in(\gamma,1] .$ Since there is a countably infinite number of such subsets (this can be seen by $ \{A\subseteq \mathbb{N}: |A|=d\}$ being equal to the countable union of finite sets $ \cup_{k\in \mathbb{N}} \{A\subseteq [k]: |A|=d\} $), we can always choose these numbers to be distinct, by for instance setting $ \gamma_{A}=\gamma+(1-\gamma)/l $ where $ l $ is number of $ A $ in the enumeration of $\{A\subseteq \mathbb{N}: |A|=d\} $. For each such subset $ A $, we then define a hypothesis on the input space $ \cX=\mathbb{N} $ as follows:
    \begin{align*}
     h_{A}(i)=\begin{cases}
        0 &\text{if } i\in A \\
        \gamma_{A} &\text{else}
     \end{cases}
    \end{align*}
    and let $ \cH^{\gamma,d}=\cup_{A\in\{A'\subseteq \mathbb{N}: |A'|=d\} } h_{A}$.

    \paragraph*{$\gamma$-graph dimension of $\cH^{\gamma,d}$.} We notice that this hypothesis class has $ \gamma $-graph dimension $ d $, as for any subset $ B\subseteq \mathbb{N} $ with $ |B|=d $, the hypothesis $ h_{B} $ witnesses the graph shattering, as it is zero on the whole set $ B $, and we can find a hypothesis that is $ 0 $ on any subset $ B' $  of $ B $ and strictly larger than $ \gamma $ else by choosing $ A'=B'\cup C $  where $ C\subseteq \mathbb{N}\backslash B $  such that $ |B'\cup C|=d $. Furthermore for a subset $ S\subseteq \mathbb{N} $ with $ |S|=d+1 $, there is no hypothesis in $ \cH^{\gamma,d} $ that can witness the graph shattering on $ S $. To see this can not happen assume such an $ h_{A} $ exists. Since we have that $ |S|=d+1 $, there exists at least one point $ i\in S $ such that $ h_{A}(i)=\gamma_{A}$, however in this case we can not have a $ h_{B} $ that satisfies $ \gamma_{A}=h_{A}(i)=h_{B}(i) $ and $|h_{A}(j),h_{B}(j)|>\gamma$ for $ j\not=i $, since the first condition, by the uniqueness of $ \gamma_{A}  $, implies  that $ B=A $, so the second condition can not hold.

    \paragraph*{$\gamma$-OIG dimension of $\cH^{\gamma,d}$.} We now argue that the $ \gamma $-OIG-dimension of $ \cH^{\gamma,d} $ is at most $ 3 $.
    To see this, consider any set of $ n \geq 3 $  points $S= \left\{  x_{1},\ldots x_{n}\right\}  $, and the $ \gamma $ one inclusion graph induced by $ S $ and $ \cH^{\gamma,d} $ which has vertex set $V= \cH^{\gamma,d}|_S=\left\{  (h(x_{1}),\ldots,h(x_{n}))| h\in \cH^{\gamma,d} \right\}  $, and edge set $ E $ where each edge is a hyperedge $ (f,i) $, containing $ v\in V $ if $ v(x_{j})=f(x_{j}) $ for all  $ j\in [n]\backslash i $. Consider any finite subgraph $ (V',E') $ of the one inclusion graph, we now show that we can orient the edges of this subgraph such that each vertex has out-degree at most $ 1 $, which by $ n\geq3 $ would imply that the $ \gamma $-OIG-dimension is at most $ 3 $ by \cref{def:OIGdimension}.
    We claim that the orientation that given an edge $ (f,i) $ orients it towards $ v\in (f,i)$ such that $ v(i)<v'(i) $ for all other $ v'\in (f,i) $ gives an out-degree of at most $ 1 $ for each vertex in the $ \gamma $ - one inclusion graph.

    To see this, we first notice that any vertex $ v\in V' $ with at least $ 2 $ non zero values is always the only element in an edge $ (f,i) $ containing $ v $, as the hypothesis $ v $ has $ 2 $ non zero values there exists a $ j\in[n]\backslash i $ such that $ f(x_{j})=v(x_{j})\not=0 $ and since the non zero values are uniquely defined by the value $ \gamma_{A} $, there can not be another hypothesis in $ \cH^{\gamma,d} $ that has the same value on $ j $ and a different value on $ i $. Thus, such a vertex will always have the edge oriented towards it, and have outdegree at most $ 0 $.

    Now consider a vertex $ v\in V' $ with $ 1 $ non zero value, with the non zero value on $ x_{j} $. Similar to above we have that the value $ v(x_{j}) $ is unique for $ v $ so for any edge $ (f,i) $ with $ i\not=j $, and the edge containing $ v $,  $ v\in(f,i) $, we have that $ (f,i) $ only contains $ v $, and thus the edge is oriented towards $ v $. Thus, the only place where the edge can be oriented away from $ v $ is when $ i=j $, implying an outdegree of at most $ 1 $ for $ v $.

    Finally, if there exists a vertex $ v\in V' $ with $ 0 $ non zero values, i.e. all zero, it will by the choice of the edge orientation (we are orienting towards the vertex with the smallest value on $ i $). Thus, such an vertex will have an outdegree of at most $ 0 $.

    Thus we have shown that the orientation gives an outdegree of at most $ 1 $ for each vertex in the $ \gamma $ - one inclusion graph, and thus for $ n\geq 3 $ the outdegree is at most $1\leq n/3$ implying that the $ \gamma $-OIG-dimension of $ \cH^{\gamma,d} $ is at most $ 3 $.

    \paragraph*{Lower bounds.} We now define a family of hard distributions and prove the claimed lower bounds on these distributions. Each distribution is indexed by a vector in the set
    \begin{align*}
    Q=\left\{ \vec{A}\in \{1\}\times[k_{u}]^{d-1} \mid \forall i,j\in[d], i\neq j \implies \vec{A}_{i} \neq \vec{A}_{j} \right\},
    \end{align*}
    where $k_u=\lceil 2d\max_{n'\leq d/(128\eps)} m(n')+d/4+1 \rceil$ is the effective universe size for the distributions. Let $\{\vec{A}\}=\{\vec{A}_1,\ldots,\vec{A}_d\}$. For $0 < \eps < 1/32$ and $\vec{A}\in Q$, we define the distribution $\cD_{\eps,\vec{A}}$ as follows:
    \begin{align*}
        \cD_{\eps,\vec{A}}(i,0)=
        \begin{cases}
            1-\cplaceloweriterdistributionconstant\eps &\text{if } i=1 \\
            \frac{\cplaceloweriterdistributionconstant\eps}{d-1} &\text{if } i\in \{\vec{A}\}\setminus\{1\} \\
            0 &\text{else }
        \end{cases}
    \end{align*}
    We notice that this distribution is realizable by $ \cH^{\gamma,d} $, by the hypothesis $ h_{\{\vec{A}\}}$.
    Furthermore, we notice that for any $ \vec{A}\in Q $ we have that sampling from $ \cD_{\eps,\vec{A}} $  is equivalent to sampling first a number $ \rt\in [d] $ with
    \begin{align*}
     \p_{\cD_{t}}[\rt=1]=1-\cplaceloweriterdistributionconstant\eps, \quad \p_{\cD_{t}}[\rt=i]=\frac{\cplaceloweriterdistributionconstant\eps}{d-1} \text{ for } i=2,\ldots,d,
    \end{align*}
    and then returning the sample $ (\vec{A}_{\rt},0) $.
    For $ \vec{A}\in Q$ and $\vec{\rt}=(\rt_{1},\ldots,\rt_{n})\sim \cD_{t}^{n} $, let $ \vec{A}|\vec{\rt}=(\vec{A}_{\rt_{1}},\ldots,\vec{A}_{\rt_{n}}) $. Let $ \rS(\vec{A}|\vec{\rt})=((\vec{A}_{\rt_{1}},0),\ldots,(\vec{A}_{\rt_{n}},0)) $ be a training sequence of size $ n $ sampled i.i.d from $ \cD_{\eps,\vec{A}} $ in the equivalent way via the $ \rt_{i} $'s  i.e. $ \rt_{i} $ are i.i.d. from $ \cD_{t} $. From now on we will let $ n\leq d/(\cplacelowerintersamplecomplexity\eps) $.

    Let now $ \rh_{1},\ldots \rh_{m} $ be the hypothesis produced by $ \cA' $ when given $ \rS(\vec{A}|\vec{\rt}) $ and let $ \cA'(\rS(\vec{A}|\vec{\rt}))=r(\rh_{1},\ldots,\rh_{m},x) $ be the output of the aggregation rule (the number $ m $ might be random but $ m\leq m(n)\leq \max_{n'\leq d/(128 \eps)} m(n')<\infty $ ). We have by \cref{def:finite_aggregation_algorithm} that each $ \rh_{1},\ldots,\rh_{m}\in \cH^{\gamma,d} $, implying that each hypothesis $ \rh_{j} $ is $ 0 $ on at most $ d $ points. Thus if we let $ R_{0}(\vec{A}|\vec{\rt})=\left\{  i\in\mathbb{N}| \exists j\in [k]: \rh_{j}(i)=0\right\}  $ be the region where at least one hypothesis is zero, we have that $ |R_{0}(\vec{A}|\vec{\rt})|\leq md\leq \max_{n'\leq d/(128 \eps)} m(n')d $. Furthermore since the aggregation rule $ r $ is interpolating, we have that for all points outside this region $ i\not\in R_{0}(\vec{A}|\vec{\rt}) $, it holds that $ \cA'(S(\vec{A}|\vec{\rt}))(i)>\gamma $, as all hypotheses are larger than $ \gamma $ on these points by construction of $ \cH^{\gamma,d} $. Noticing this we have that
    \begin{align*}
     \ls_{\cD_{\eps,\vec{A}} }(\cA'(S(\vec{A}|\vec{\rt})))&=\p_{(\rx,\ry)\sim \cD_{\eps,\vec{A}}}\left[  |\cA'(S(\vec{A}|\vec{\rt}))(\rx)-\ry|>\gamma \right]= \p_{\rt\sim \cD_{t}}\left[  \cA'(S(\vec{A}|\vec{\rt}))(\vec{A}_{\rt})>\gamma \right]
     \\
     &\geq \p_{\rt\sim \cD_{t}}\left[  \vec{A}_{\rt} \not\in R_{0}(\vec{A}|\vec{\rt}) \right]
    \end{align*}
    where we have used in the second equality that $ y=0 $ under the distribution $ \cD_{\eps,\vec{A}} $ and the equivalent way of sampling from $ \cD_{\eps,\vec{A}} $ via $ \rt $, and in the last inequality that $ \cA'(S(\vec{A}|\vec{\rt}))(i)>\gamma $ for all $ i\not\in R_{0}(\vec{A}|\vec{\rt}) $.
    Now let's define the set of indices from from $ [d]\setminus\{1\} $ seen and not seen in $\vec{\rt}$ as
    \begin{align*}
    I_{\vec{\rt}}=\left\{ i\in [d]\backslash\{1\}: i\in \{\vec{\rt}\} \right\}
    \quad \text{ and } \quad
    I_{\not\vec{\rt}}=\left\{ i\in [d]\backslash\{1\}: i\not\in \{\vec{\rt}\} \right\}.
    \end{align*}
    We then furthermore have that
    \begin{align}
    \label{eq:lowerboundinter1}
     \ls_{\cD_{\eps,\vec{A}} }(\cA'(S(\vec{A}|\vec{\rt}))) \geq  \p_{\rt\sim \cD_{t}}\left[  \vec{A}_{\rt} \not\in R_{0}(\vec{A}|\vec{\rt}), \rt \not\in I_{\vec{\rt}}, \rt\not=1 \right]
    \end{align}
    where we have used that intersection with events only makes probabilities smaller.
    We will show that there exists a choice of $ \vec{A} $ such that the expectation of the right hand side of \cref{eq:lowerboundinter1} over the randomness of the training sequence $ \rS(\vec{A}|\vec{\rt}) $ is large,
    \begin{align}\label{eq:lowerboundinter1.1}
      \e_{\rS\sim \cD_{\eps,\vec{A}}^{n}}\left[\p_{\rt\sim \cD_{t}}\left[  \vec{A}_{\rt} \not\in R_{0}(\vec{A}|\vec{\rt}), \rt \not\in I_{\vec{\rt}}, \rt\not=1 \right]\right] \geq 2 \eps.
    \end{align}
    We notice that \cref{eq:lowerboundinter1} and \cref{eq:lowerboundinter1.1} combined gives the claim about the expected loss over the randomness of the training sequence $ \rS(\vec{A}|\vec{\rt}) $ in \cref{eq:theoremlowerboundinter2}.
    Furthermore, we have by the definition of $ \cD_{t} $  that
    \begin{align*}
      \p_{\rt\sim \cD_{t}}\left[  \vec{A}_{\rt} \not\in R_{0}(\vec{A}|\vec{\rt}), \rt \not\in I_{\vec{\rt}}, \rt\not=1 \right]\leq \cplaceloweriterdistributionconstant \eps.
    \end{align*}
    Now by these two observations and the reverse Markov's inequality \cref{lem:reversemarkov} we have that
    \begin{align*}
    \p_{\rS\sim \cD_{\eps,\{\vec{A}\}}^{n}}\left[\p_{\rt\sim \cD_{t}}\left[  \vec{A}_{\rt} \not\in R_{0}(\vec{A}|\vec{\rt}), \rt \not\in I_{\vec{\rt}}, \rt\not=1 \right]> \eps\right]
    \geq \frac{\e_{\rS\sim \cD_{\eps,\vec{A}}^{n}}\left[\p_{\rt\sim \cD_{t}}\left[  \vec{A}_{\rt} \not\in R_{0}(\vec{A}|\vec{\rt}), \rt \not\in I_{\vec{\rt}}, \rt\not=1 \right]\right]
    -\eps}{\cplaceloweriterdistributionconstant \eps},
    \end{align*}
    where we before applying \cref{lem:reversemarkov} have normalized by $ \cplaceloweriterdistributionconstant \eps $ since the random variable inside the probability is upper bounded by this value.
    This combined with the lower bound in \cref{eq:lowerboundinter1} and \cref{eq:lowerboundinter1.1} implies that with probability at least $ 1/16 $, the loss $ \ls_{\cD_{\eps,\vec{A}} }(\cA'(S(\vec{A}|\vec{\rt})))> \eps $, giving the claim in \cref{eq:theoremlowerboundinter1}.

    We now prove that there exists $\vec{A}\in Q$ which satisfies \cref{eq:lowerboundinter1.1}.
    We first notice that $ R_{0}(\vec{A},\vec{\rt}) $ and $ I_{\vec{\rt}} $ only depend on the randomness of the training sequence $ \rS(\vec{A}|\vec{\rt}) $ via $ \vec{\rt} $, so we have that
    \begin{align}\label{eq:lowerboundinter1.2}
     \e_{\rS\sim \cD_{\eps,\vec{A}}^{n}}\left[\p_{\rt\sim \cD_{t}}\left[  \vec{A}_{\rt} \not\in R_{0}(\vec{A}|\vec{\rt}), \rt \not\in I_{\vec{\rt}}, \rt\not=1 \right]\right]
     =\e_{\vec{\rt}\sim \cD_{t}^{n}}\left[\p_{\rt\sim \cD_{t}}\left[  \vec{A}_{\rt} \not\in R_{0}(\vec{A}|\vec{\rt}), \rt \not\in I_{\vec{\rt}}, \rt\not=1 \right]\right]
    \end{align}
    so it suffices to bound the right hand side.
    We notice that
    \begin{align*}
    \e_{\vec{\rt}\sim \cD_{t}^{n}}[|I_{\vec{\rt}}|]\leq
     \e_{\vec{\rt}\sim \cD_{t}^{n}}\left[\sum_{i=1}^{n} \ind\{\rt_{i}\in[d]\backslash \{1\}\}\right]=\sum_{i=1}^{n}\p_{\rt_{i}\sim \cD_{t}}\left[  \rt_{i}\in [d]\backslash \{1\}  \right]=\cplaceloweriterdistributionconstant n \eps \leq d/8.
    \end{align*}
    where we in the first inequality have used that $ I_{\vec{\rt}}=\left\{ i\in [d]\backslash\{1\}: \exists j\in[n]: \rt_j = i \right\}  $, so it is at most the number of $ \rt_{j} $ that are in $ [d]\backslash \{1\} $, and in the last inequality we have used the assumption that $ n\leq d/(\cplacelowerintersamplecomplexity \eps) $.
    By Markov's inequality we then have that
    \begin{align*}
        \p\left[  |I_{\vec{\rt}}|\geq d/4 \right]\leq\frac{1}{\cplaceloweriterprob }.
    \end{align*}
    Let $ E(\vec{\rt}) $ denote the event that $ |I_{\vec{\rt}}|< d/4 $, which happens with probability at least $ 1/2 $.
    Let $ \rvecA $ be drawn uniformly at random from $ Q$ (we will denote this $ \rvecA \sim Q $ ). By taking expectation over the right hand side of \cref{eq:lowerboundinter1.2} with respect to $ \rvecA $ we get that
    \begin{align}\label{eq:lowerboundinter2}
     &\e_{\rvecA\sim Q}\left[\e_{\vec{\rt}\sim \cD_{t}^{n}}\left[\p_{\rt\sim \cD_{t}}\left[  \rvecA_{\rt} \not\in R_{0}(\rvecA|\vec{\rt}), \rt \not\in I_{\vec{\rt}}, \rt\not=1 \right]\right]\right]\nonumber
     \\
     =
     &\e_{\vec{\rt}\sim \cD_{t}^{n}}\left[\e_{\rt\sim \cD_{t}}\left[\e_{\rvecA\sim Q}\left[   \ind\{  \rvecA_{\rt}  \not\in R_{0}(\rvecA|\vec{\rt})\}  \ind\{\rt \not\in I_{\vec{\rt}}, \rt\not=1\} \right]\right]\right]\nonumber
     \\
     =
     &\e_{\vec{\rt}\sim \cD_{t}^{n}}\left[\e_{\rt\sim \cD_{t}}\left[\e_{\rvecA\sim Q}\left[   \ind\{  \rvecA_{\rt}  \not\in R_{0}(\rvecA|\vec{\rt})\}   \right] \ind\{\rt \not\in I_{\vec{\rt}}, \rt\not=1\} \right]\right]\nonumber
     \\
     \geq
     &\e_{\vec{\rt}\sim \cD_{t}^{n}}\left[\e_{\rt\sim \cD_{t}}\left[\e_{\rvecA\sim Q}\left[   \ind\{  \rvecA_{\rt}  \not\in R_{0}(\rvecA|\vec{\rt})\}   \right] \ind\{\rt \not\in I_{\vec{\rt}}, \rt\not=1\} \right] \ind\{E(\vec{\rt})\}\right]
    \end{align}
    where the first equality follows from reordering expectations(which can be done since the random variables are independent) and we have written the probability as an expectation over indicators, in the second equality we have moved indicators not depending on $ \rvecA $ outside the expectation over $ \rvecA $, and in the last inequality we have intersected with the event $ E(\vec{\rt}) $, which only makes the expectation smaller. Now consider any realization $ \vec{t} $ of $ \vec{\rt} $ where the event $ E(\vec{\rt}) $ holds, and a realization $ t $  of $ \rt $ such that $ \rt\not\in I_{\vec{\rt}} $ and $ \rt\not=1 $, given such realizations $ \vec{t} $ and $ t $, we will show that
    \begin{align*}
      \e_{\rvecA\sim Q}\left[   \ind\{  \rvecA_{t}  \not\in R_{0}(\rvecA|\vec{t})\}   \right]=\p_{\rvecA\sim Q}\left[   \rvecA_{t}  \not\in R_{0}(\rvecA|\vec{t})   \right] \geq \frac{1}{2}
    \end{align*}
    which combined with \cref{eq:lowerboundinter2} implies that
    \begin{align*}
          \e_{\rvecA\sim Q}\left[\e_{\vec{\rt}\sim \cD_{t}^{n}}\left[\p_{\rt\sim \cD_{t}}\left[  \vec{A}_{\rt} \not\in R_{0}(\vec{A}|\vec{\rt}), \rt \not\in I_{\vec{\rt}}, \rt\not=1 \right]\right]\right]
          \geq
         \frac{1}{2} \e_{\vec{\rt}\sim \cD_{t}^{n}}\left[ \p_{\rt\sim \cD_{t}}[\rt\not\in I_{\vec{\rt}}, \rt\not=1]\ind\{E(\vec{\rt})\}\right].\nonumber
    \end{align*}
    By using that for any realization $ \vec{t} $ of $ \vec{\rt} $ such that the event $ E(\vec{\rt}) $ holds, it holds that
    \begin{align*}
        \p_{\rt\sim \cD_{t}}[\rt\not\in I_{\vec{t}}, \rt\not=1]
        \geq
        \sum_{i=2}^{d} \p_{\rt\sim \cD_{t}}[\rt=i]\ind\{i\not\in I_{\vec{t}}\}
        &=\frac{\cplaceloweriterdistributionconstant\eps}{d-1}  \sum_{i=2}^{d} \ind\{i\not\in I_{\vec{t}}\}\\
        &=
        \frac{\cplaceloweriterdistributionconstant\eps}{d-1} (d-1-|I_{\vec{t}}|)
        >
        \cplaceloweriterdistributionconstant \eps\left(1-\frac{d}{4(d-1)}\right)
        \geq
        8\eps,
    \end{align*}
    where the last inequality follows from the event $ E(\vec{t}) $ holding implies $ |I_{\vec{t}}| < d/4 $ and the assumption that $ d\geq 2 $ implying that $ d/(d-1)\leq 2 $ . Combining this with us previously concluding that the event $ E(\vec{\rt}) $ holds with probability at least $ 1/2 $, we then have that
    \begin{align*}
    \e_{\vec{\rt}\sim \cD_{t}^{n}}\left[\e_{\rt\sim \cD_{t}}\left[\e_{\rvecA\sim Q}\left[   \ind\{  \rvecA_{\rt}  \not\in R_{0}(\rvecA|\vec{\rt})\}  \ind\{\rt \not\in I_{\vec{\rt}}, \rt\not=1\} \right]\right]\right] \geq \frac{1}{2}(8\eps)\frac{1}{2} = 2\eps.
    \end{align*}
    which concludes the claim in \cref{eq:lowerboundinter1.1}. Thus what remains is to show that for any realization $ \vec{t} $ of $ \vec{\rt} $ where the event $ E(\vec{\rt}) $ holds, and a realization $ t $  of $ \rt $ such that $ \rt\not\in I_{\vec{\rt}} $ and $ \rt\not=1 $, it holds that
    \begin{align}\label{eq:lowerboundinter3}
      \p_{\rvecA\sim Q}\left[   \rvecA_{t}  \not\in R_{0}(\rvecA|\vec{t})   \right] \geq \frac{1}{2}.
    \end{align}
    To see this, let $ \vec{\rA}_{I_{\vec{t}}}= (\vec{\rA}_{(I_{\vec{t}})_{1}},\ldots,\vec{\rA}_{(I_{\vec{t}})_{|I_{\vec{t}}|}}) $ be the subvector of $ \vec{\rA} $ consisting of indices from $ I_{\vec{t}} $ and let $ \vec{\rA}_{ I_{\not\vec{t}}}= (\vec{\rA}_{(I_{\not\vec{t}})_{1}},\ldots,\vec{\rA}_{(I_{\not\vec{t}})_{|I_{\not\vec{t}}|}}) $ be the subvector of $ \vec{\rA} $ consisting of indices from $ I_{\not\vec{t}} $. Now by the law of total expectation we have that
    \begin{align}\label{eq:lowerboundinter4}
        \p_{\rvecA\sim Q}\left[   \rvecA_{t}  \not\in R_{0}(\rvecA|\vec{t})   \right]
        =
        \sum_{x\in [k_{u}]^{|I_{\vec{t}}|}}
        \p_{\rvecA\sim Q}\left[   \rvecA_{t}  \not\in R_{0}(\rvecA|\vec{t})   | \rvecA_{ I_{\vec{t}}} =x\right]
        \p_{\rvecA_{ I_{\vec{t}}}\sim Q_{I_{\vec{t}}}}\left[\rvecA_{ I_{\vec{t}}} =x\right],
    \end{align}

    where $ Q_{I_{\vec{t}}} $ is the distribution over $ \vec{\rA}_{I_{\vec{t}}} $ induced by $ \vec{\rA} $ having the uniform distribution on $ Q $. We will show that for any fixed $ x\in [k_{u}]^{|I_{\vec{t}}|} $, it holds that
    \begin{align}\label{eq:lowerboundinter5}
        \p_{\rvecA\sim Q}\left[   \rvecA_{t}  \not\in R_{0}(\rvecA|\vec{t})   | \rvecA_{ I_{\vec{t}}} =x\right] \geq \frac{1}{2}
    \end{align}
    which will give \cref{eq:lowerboundinter3}, by plugging this back into \cref{eq:lowerboundinter4}.

    Now for any fixed $ x\in [k_{u}]^{|I_{\vec{t}}|} $, we have that $ R_{0}(\rvecA|\vec{t}) $ is fixed since $ R_{0}(\rvecA|\vec{t}) $ only depends on the values of $ \rvecA $ on the indices in $ I_{\vec{t}} $ and $ \rvecA_{1} $ if $ 1 \in \vec{t} $. Furthermore, since $ t\not\in I_{\vec{t}} $ and $ t\not=1 $, and that the values of $\rvecA_{I_{\not\vec{t}}} $ conditioned on $\rvecA_{ I_{\vec{t}}}$ is uniform over $ [k_{u}]\backslash (\{\text{values in } x\}\cup \{1\}) $, we have that $ \rvecA_{t} $ is drawn uniformly at random from $ [k_{u}]\backslash (\{\text{values in } x\}\cup \{1\}) $, which has size $ k_{u}-|I_{\vec{t}}|-1 $. We thus have that the probability that $ \rvecA_{t} $ falls outside region $R_{0}(\rvecA|\vec{t})$ is at least
    \begin{align*}
        \p_{\rvecA\sim Q}\left[   \rvecA_{t}  \not\in R_{0}(\rvecA|\vec{t})   | \rvecA_{ I_{\vec{t}}} =x\right]
        \geq&
        \frac{k_{u} - |I_{\vec{t}}|-1-|R_{0}(\rvecA|\vec{t})|}{k_{u} - |I_{\vec{t}}|-1}
        \geq
        1- \frac{\max_{n'\leq d/(128 \eps)} m(n')d}{k_{u} - |I_{\vec{t}}|-1}
        \\
        \geq&
        1- \frac{\max_{n'\leq d/(128 \eps)} m(n')d}{k_{u}-d/4-1}
        \geq \frac{1}{2}
    \end{align*}
    where  we in the second inequality have used that the region $ R_{0}(\rvecA|\vec{t}) $ has size at most $ md\leq \max_{n'\leq d/(128 \eps)} m(n')d $, the third inequality follows from that $|I_{\vec{t}}|< d/4 $, under the event $ E(\vec{t}) $, and the last inequality follows from the choice of $ k_{u}= \lceil 2\max_{n'\leq d/(128 \eps)} m(n')d+ d/4 +1 \rceil$, which show \cref{eq:lowerboundinter5} and concludes the proof.

\end{proof}

\subsection{Proof of \cref{thm:notlearnablefiniteagglargerror}}\label{sec:notlearnablefiniteagglargerror}
In this section we present the proof of \cref{thm:notlearnablefiniteagglargerror2} which contains both a constant probability lower bound and an expectation lower bound, where the statement in \cref{thm:notlearnablefiniteagglargerror} follows from the latter. We now state \cref{thm:notlearnablefiniteagglargerror2}.
  \begin{restatable}{theorem}{notlearnablefiniteagglargerror}\label{thm:notlearnablefiniteagglargerror2}
    For any $ 0<\gamma<1 $ and $ 0< \eps <1$, there exists a hypothesis class $ \cH $ with $ \gamma $-OIG dimension at most $ 3 $ such that for any finite aggregation learning algorithm $ \cA' $ with an interpolating aggregation rule and any $ n'\in\mathbb{N} $, there exists a realizable distribution $ \cD $ such that if $ \cA' $ receives $ n\leq n' $ i.i.d.\ samples $\rS\sim\cD^{n}$, then with probability at least $ 1/2 $ over $ \rS $, it holds that
    \begin{align}\label{eq:lowerboundnotlearnablefiniteagglargerrortheorem1}
        \ls^{\gamma}_{\cD}(\cA'(\rS)) \geq  1-\eps,
    \end{align}
    and
    \begin{align}\label{eq:lowerboundnotlearnablefiniteagglargerrortheorem2}
        \e_{\rS\sim\cD^{n}}\left[\ls^{\gamma}_{\cD}(\cA'(\rS))\right] \geq 1-\eps/2,
    \end{align}
    where the distributions $ \cD $ in the two cases may be different.
  \end{restatable}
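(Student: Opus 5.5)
We follow the template of the proof of \cref{thm:Lowerboundfiniteagg2}, using the ``split'' class from the proof overview. For each $k=i^{2}$ let $\cX_{k}=\{(k,x):x\in[k]\}$ and $\cX=\bigcup_{k}\cX_{k}$. For every $A\subseteq[k]$ with $|A|=\sqrt{k}$ we fix a distinct value $\gamma_{k,A}\in(\gamma,1]$; this is possible since the index set is countable. We define $h_{k,A}(k,x)=0$ for $x\in A$ and $h_{k,A}=\gamma_{k,A}$ everywhere else on $\cX$, and set $\cH=\bigcup_{k}\{h_{k,A}\}$. The $\gamma$-OIG bound is the argument from \cref{thm:Lowerboundfiniteagg2}, essentially verbatim, because each hypothesis is zero on finitely many points and nonzero values identify the hypothesis. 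A vertex with at least two nonzero coordinates lies only in singleton edges. A vertex with one nonzero coordinate lies in at most one nonsingleton edge. Orienting each edge toward the vertex with the smallest value in the edge's direction gives the all-zero vertex out-degree $0$. Hence every out-degree is at most $1\le n/3$ for $n\ge 3$.

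For the lower bound, fix $\cA'$ and $n'$, and let $M=\max_{n\le n'}m(n)<\infty$. Choose $k_{u}=i^{2}$ large enough that $M\sqrt{k_{u}}/(k_{u}-\sqrt{k_{u}})\le\eps/4$ and $n'^{2}/\sqrt{k_{u}}\le\eps/4$; both hold once $k_u$ is large compared with $(M/\eps)^{2}$ and $(n'^{2}/\eps)^{2}$. Draw a uniformly random injective vector $\rvecA\in[k_{u}]^{\sqrt{k_{u}}}$ and let $\cD_{\rvecA}$ be uniform on $\{((k_{u},\rvecA_{j}),0)\}_j$. This distribution is realizable by $h_{k_{u},\{\rvecA\}}$. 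As before, we write a sample as $\rS(\rvecA|\vec{\rt})$ with $\vec{\rt}$ i.i.d.\ uniform on $[\sqrt{k_{u}}]$ and independent of $\rvecA$. The outputs $\rh_{1},\ldots,\rh_{m}$ of $\cA'$ are zero on at most $m\sqrt{k_{u}}\le M\sqrt{k_{u}}$ points of $\cX_{k_u}$; call this set $R_{0}$. Every $\rh_j$ exceeds $\gamma$ off $R_0$, so since $r$ is interpolating, $\cA'(\rS)>\gamma$ at every point of $\cX_{k_u}\setminus R_{0}$. Therefore
\[
\ls^{\gamma}_{\cD_{\rvecA}}(\cA'(\rS))\ge\p_{\rt}\bigl[\rt\notin\{\vec{\rt}\},\ \rvecA_{\rt}\notin R_{0}\bigr].
\]

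Next, take expectation over $\rvecA$ first, for fixed $\vec{t}$ and fixed $t\notin\{\vec{t}\}$. Condition on the revealed coordinates $\rvecA_{\{\vec t\}}$; this makes $R_{0}$ deterministic, possibly after also conditioning on $\cA'$'s internal randomness, which is independent of the rest. Given this, $\rvecA_{t}$ is uniform over at least $k_{u}-n'$ values, so
\[
\p\bigl[\rvecA_{t}\in R_{0}\bigr]\le\frac{M\sqrt{k_{u}}}{k_{u}-n'}\le\eps/4.
\]
Separately, the unrevealed fraction satisfies $\p_{\rt}[\rt\in\{\vec{t}\}]\le n/\sqrt{k_{u}}\le\eps/4$ deterministically. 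Together these give
\[
\e_{\rvecA}\e_{\vec{\rt}}\bigl[\ls^{\gamma}\bigr]\ge1-\eps/2,
\]
so some fixed $A$ attains at least this value, which is \cref{eq:lowerboundnotlearnablefiniteagglargerrortheorem2}. For \cref{eq:lowerboundnotlearnablefiniteagglargerrortheorem1}, we rerun the argument with $\eps$ replaced by $\eps/2$, obtaining $\e[1-\ls^{\gamma}]\le\eps/4$. Markov's inequality applied to $1-\ls^{\gamma}$ then gives $\p[1-\ls^{\gamma}>\eps]\le1/4\le 1/2$.

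The main obstacle is the conditioning step. We must verify that $R_{0}$, which depends on the whole sample and the algorithm, is measurable with respect to the revealed coordinates of $\rvecA$ together with $\vec{\rt}$ and the algorithm's randomness. We must also verify that the unrevealed coordinate stays uniform on the remaining universe. As in \cref{eq:lowerboundinter4}, this follows from the law of total expectation over the revealed subvector. A second point to check is the order of quantifiers: $\cH$ is fixed before $\cA'$ is chosen, but $k_{u}$, and hence $\cD$, is chosen after $\cA'$ and $n'$. The statement permits this.
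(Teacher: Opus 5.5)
Your proposal is correct and follows the paper's proof essentially step by step. It uses the same split class and minimum-value orientation for the $\gamma$-OIG bound, the same uniform injective $\rvecA$ with revealed and unrevealed coordinates, the interpolating-rule bound $|R_0\cap\cX_{k_u}|\le M\sqrt{k_u}$, and the same averaging over $\rvecA$ to extract a fixed distribution. The only cosmetic difference is in the high-probability part, where rerunning with $\eps/2$ is unnecessary: Markov applied to $1-\ls^{\gamma}$ with $\e[1-\ls^{\gamma}]\le\eps/2$ already gives probability at least $1/2$, which is exactly the paper's reverse-Markov step on the same distribution.
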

\begin{proof}[Proof of \cref{thm:notlearnablefiniteagglargerror2}:]
We will consider the universe $\cX = \cup_{i\in\mathbb{N}}\{(k,x): k=i^2,  x\leq k \}$.
We now construct the hypothesis class $ \cH $ as follows. To this end, we first notice that the set $ \cup_{i\in\mathbb{N}}\{(k,A)| k=i^2, A\subseteq [k], |A|=\sqrt{k} \} $ is countable, implying that for each pair $ (k,A) $ we can assign a unique real number $ \gamma_{k,A}\in (\gamma,1] $. Now for each such pair $ (k,A) $ we define the hypothesis $ h_{k,A}:\cX\to \{0,1\} $ as follows:
\begin{align*}
    h_{k,A}(x,y) = \begin{cases}
        0 & \text{if } x=k \text{ and } y\in A,\\
        \gamma_{k,A} & \text{otherwise.}
    \end{cases}
\end{align*}
We define the hypothesis class $ \cH = \cup_{i\in\mathbb{N}}\{ h_{k,A} : k=i^2, A\subseteq [k], |A|=\sqrt{k}\} $.

\paragraph{$\gamma$-OIG dimension of $\cH$.}
We now show that $ \cH $ has $ \gamma $-OIG dimension at most $ 3 $.
    To see this, consider any set of $ n \geq 3 $ points $S= \left\{  (x_{1},y_{1}),\ldots (x_{n},y_{n})\right\} $, and the $ \gamma $ one inclusion graph induced by $ S $ and $ \cH $ which has vertex set $V= \cH|_S=\left\{  (h(x_{1},y_{1}),\ldots,h(x_{n},y_{n}))| h\in \cH \right\} $, and edge set $ E $ where an edge is a hyperedge $ (f,i) $, containing $ v\in V $ if $ v((x_{j},y_{j}))=f((x_{j},y_{j})) $ for all $ j\in [n]\backslash i $. Consider any finite subgraph $ (V',E') $ of the one inclusion graph. We now show that we can orient the edges of this subgraph such that each vertex has out-degree at most $ 1 $, which by $ n\geq3 $ would imply that the $ \gamma $-OIG-dimension is at most $ 3 $ by \cref{def:OIGdimension}.
    We claim that the orientation where, given an edge $ (f,i) $, we orient it towards $ v\in (f,i)$ such that $ v((x_{i},y_{i})) < v'((x_{i},y_{i})) $ for all other $ v'\in (f,i) $, gives an out-degree of at most $ 1 $ for each vertex.

    To see this, first consider any vertex $ v\in V' $ with at least $ 2 $ non-zero values on $S$. Let $v$ correspond to hypothesis $h_{k,A}$. Since $v$ has at least 2 non-zero values, there exist distinct indices $j, l \in [n]$ such that $v(x_j, y_j) = \gamma_{k,A}$ and $v(x_l, y_l) = \gamma_{k,A}$. Consider any edge $(f, i)$ containing $v$.
    If $i \neq j$, then any neighbor $u \in (f,i)$ must satisfy $u(x_j, y_j) = v(x_j, y_j) = \gamma_{k,A}$. Since $\gamma_{k,A}$ is unique to the pair $(k,A)$, $u$ must also be generated by $h_{k,A}$, implying $u=v$.
    If $i = j$, then since $n \ge 3$, there is another index $l \neq i$ where $v(x_l, y_l) = \gamma_{k,A}$. Similarly, any neighbor must agree on $l$, forcing the neighbor to have value $\gamma_{k,A}$ at $l$, and thus be $h_{k,A}$.
    Thus, a vertex with at least 2 non-zero values forms a singleton edge (self-loop) in any direction and has out-degree 0.

    Now consider a vertex $ v\in V' $ with exactly $ 1 $ non-zero value, say at index $ j $. So $v(x_j, y_j) = \gamma_{k,A}$ and 0 elsewhere.
    For any edge $(f, i)$ with $i \neq j$, any neighbor must agree on $j$, so must equal $\gamma_{k,A}$, implying the neighbor is $v$ itself.
    The only direction where an edge can contain other vertices is $i=j$. In this edge, $v$ has value $\gamma_{k,A} > 0$. If there exists a neighbor $u$ with value smaller than $\gamma_{k,A}$ at $i$, the edge is oriented towards $u$. If all neighbors have values larger than $\gamma_{k,A}$, it is oriented towards $v$. Thus, $v$ has out-degree at most 1.

    Finally, consider a vertex $ v\in V' $ with $ 0 $ non-zero values (i.e., the all-zero hypothesis). Since the edge orientation directs edges towards the vertex with the smallest value on index $ i $, and $ 0 $ is the minimum possible value, any edge $ (f,i) $ containing $ v $ must be oriented towards $ v $. Consequently, such a vertex has an out-degree of $ 0 $.

    Thus we have shown that the orientation gives an out-degree of at most $ 1 $ for each vertex in the $ \gamma$ - one inclusion graph, and thus for $ n\geq 3 $ the out-degree is at most $1\leq n/3$ implying that the $ \gamma $-OIG-dimension of $ \cH $ is at most $ 3 $.

    \paragraph*{Lower bounds.} We now show that for any finite aggregation learning algorithm $ \cA' $ there exists a realizable distribution $ \cD $ such that if $ \cA' $ receives $ \rS\sim\cD^{n} $,  $ n \leq n' $ , then
    \begin{align}\label{eq:lowerboundnotlearnablefiniteagglargerror1}
     \e_{\rS\sim\cD^{n}}\left[\ls^{\gamma}_{\cD}(\cA'(\rS))\right] \geq 1-\eps/2
    \end{align}
which implies \cref{eq:lowerboundnotlearnablefiniteagglargerrortheorem2}. By the reverse Markov inequality (\cref{lem:reversemarkov}), this further implies
    \begin{align*}
        \p\left[\ls^{\gamma}_{\cD}(\cA'(\rS)) > 1-\eps\right] \geq \frac{\e_{\rS\sim\cD^{n}}\left[\ls^{\gamma}_{\cD}(\cA'(\rS))\right]-(1-\eps)}{1-(1-\eps)}\geq \frac{1}{2},
    \end{align*}
    which implies \cref{eq:lowerboundnotlearnablefiniteagglargerrortheorem1}.
Thus it suffices to prove the expected loss lower bound in \cref{eq:lowerboundnotlearnablefiniteagglargerror1}.
To this end, let $ \cA' $ be any finite aggregation learning algorithm given $ n\leq n' $ training examples.
We construct a family of realizable distributions. We choose $ k_{u} $, which will be the effective universe size for the distributions we will construct, as  $ k_{u} =i^{2}$ for $ i\in\mathbb{N} $, with $ i $ large enough such that
\begin{align}\label{eq:lowerboundnotlearnablefiniteagglargerror3}
\left(1-\frac{n'}{\sqrt{k_{u}}}\right)\left(1-\frac{\sqrt{k_{u}}\max_{n\in[n']}m(n)}{k_{u}}\right)\geq 1-\eps/2.
\end{align}
Each distribution is indexed by a vector in the set:
\begin{align*}
 Q=\left\{ \vec{A}\in [k_{u}]^{\sqrt{k_{u}}} \mid \forall i,j\in[\sqrt{k_{u}}], i\neq j \implies \vec{A}_{i} \neq \vec{A}_{j} \right\}.
\end{align*}
The distribution $ \cD_{\vec{A}} $ for each $ \vec{A}\in Q $ is defined as:
\begin{align*}
 \cD_{\vec{A}}(x,y,z)=\begin{cases}
    \frac{1}{\sqrt{k_{u}}} & \text{if } x=k_{u}, y=\vec{A}_{i}, z=0, \text{ for some } i\in [\sqrt{k_{u}}],\\
    0 & \text{otherwise.}
 \end{cases}
\end{align*}
We notice that each distribution $ \cD_{\vec{A}} $ is realizable by the hypothesis $ h_{k_{u},\{\vec{A}\}}\in \cH $, where $\{\vec{A}\}=\{\vec{A}_1,\ldots,\vec{A}_{\sqrt{k_u}}\}$. We notice that sampling from $ \cD_{\vec{A}} $ can be seen as first sampling $ \rt \sim \cD_{t} $, where $ \cD_{t} $ is the uniform distribution over $[\sqrt{k_{u}}]$, and then returning $ (k_{u},\vec{A}_{\rt},0) $. Furthermore using this perspective the distribution of training sequence $ \rS\sim \cD_{\vec{A}}^{n} $ is the same as $ \rS(\vec{A},\vec{\rt})=((k_{u},\vec{A}_{\rt_{1}},0),\ldots,(k_{u},\vec{A}_{\rt_{n}},0)) $ where $ \rt_{i} $ are i.i.d. from $ \cD_{t} $.   Using this view of the sampling process we have that
\begin{align*}
 \ls^{\gamma}_{\cD_{\vec{A}}}(\cA'(\rS))&=\p_{(x,y,z)\sim \cD_{\vec{A}}}\left[ |\cA'(\rS)(x,y)-z|>\gamma \right]
 = \p_{\rt\sim \cD_{t}}\left[ |\cA'(\rS)(k_{u},\vec{A}_{\rt})-0|>\gamma \right]
 \\
&=\frac{|\{i\in \{\vec{A}\}:\cA'(\rS)(k_{u},i)>\gamma\}|}{\sqrt{k_{u}}}.
\end{align*}
Taking expectation over $\rS$:
\begin{align*}
 \e_{\rS\sim\cD_{\vec{A}}^{n}}[\ls^{\gamma}_{\cD_{\vec{A}}}(\cA'(\rS))]=\e_{\vec{\rt}\sim \cD_{t}^{n}}\left[\frac{|\{i\in \{\vec{A}\}:\cA'(\rS(\vec{A},\vec{\rt}))(k_{u},i)>\gamma\}|}{\sqrt{k_{u}}}\right].
\end{align*}
Furthermore letting $ \vec{\rA}\sim Q $ denote a random vector sampled uniformly from $ Q $, we have that
\begin{align*}
    \e_{\vec{\rA}\sim Q}\left[\e_{\rS\sim\cD_{\vec{\rA}}^{n}}[\ls^{\gamma}_{\cD_{\vec{\rA}}}(\cA'(\rS))]\right]&=\e_{\vec{\rA}\sim Q}\left[\e_{\vec{\rt}\sim \cD_{t}^{n}}\left[\frac{|\{i\in \{\vec{\rA}\}:\cA'(\rS(\vec{\rA},\vec{\rt}))(k_{u},i)>\gamma\}|}{\sqrt{k_{u}}}\right]\right]\\
    &=\frac{1}{\sqrt{k_{u}}}\e_{\vec{\rt}\sim \cD_{t}^{n}}\left[\e_{\vec{\rA}\sim Q}\left[|\{i\in \{\vec{\rA}\}:\cA'(\rS(\vec{\rA},\vec{\rt}))(k_{u},i)>\gamma\}|\right]\right]
\end{align*}
and thus if we can lower bound the last expectation by $ \sqrt{k_{u}}(1-\eps/2) $, then the above is lower bounded by $ 1-\eps/2 $. Hence, there exists a vector $ \vec{A}\in Q $ such that when $ \rS\sim \cD_{\vec{A}}^{n} $, the corresponding loss is also lower bounded by $ 1-\eps/2 $, implying \cref{eq:lowerboundnotlearnablefiniteagglargerror1}.
We will show the claimed lower bound by showing that for any realization $ \vec{t} $ of $ \vec{\rt} $, it holds that
\begin{align}\label{eq:lowerboundnotlearnablefiniteagglargerror2}
  \e_{\vec{\rA}\sim Q}\left[|\{i\in \{\vec{\rA}\}:\cA'(\rS(\vec{\rA},\vec{t}))(k_{u},i)>\gamma\}|\right]\geq \sqrt{k_{u}}(1-\eps/2).
\end{align}
To this end, consider any realization $ \vec{t} $ of $ \vec{\rt} $.
Let's define the set of indices seen and not seen in $ \vec{t} $ as
\begin{align*}
    I_{\vec{t}}=\left\{ j \in [\sqrt{k_{u}}]: j \in \{\vec{t}\} \right\}
    \quad \text{ and } \quad
    I_{\not \vec{t}}=\left\{ j \in [\sqrt{k_{u}}]: j\not \in \{\vec{t}\} \right\} .
\end{align*}
and let $ \vec{\rA}_{I_{\not\vec{t}}}=(\vec{\rA}_{(I_{\not\vec{t}})_{1}},\ldots,\vec{\rA}_{(I_{\not\vec{t}})_{|I_{\not\vec{t}}|}})$, be the subvector of $ \vec{\rA} $ consisting of indices from $ I_{\not\vec{t}} $ and let $ \vec{\rA}_{I_{\vec{t}}}= (\vec{\rA}_{(I_{\vec{t}})_{1}},\ldots,\vec{\rA}_{(I_{\vec{t}})_{|I_{\vec{t}}|}}) $ be the subvector of $ \vec{\rA} $ consisting of indices from $ I_{\vec{t}} $. By the law of total probability we have that
\begin{align*}
    &\e_{\vec{\rA}\sim Q}\left[|\{i\in \{\vec{\rA}\}:\cA'(\rS(\vec{\rA},\vec{t}))(k_{u},i)>\gamma\}|\right]
    \\=&\sum_{x\in [k_{u}]^{|I_{\vec{t}}|}}\e_{\vec{\rA}\sim Q}\left[|\{i\in \{\vec{\rA}\}:\cA'(\rS(\vec{\rA},\vec{t}))(k_{u},i)>\gamma\}|\bigg| \vec{\rA}_{I_{\vec{t}}}=x\right] \p_{\vec{\rA}_{I_{\vec{t}}}\sim Q_{I_{\vec{t}}}}\left[ \vec{\rA}_{I_{\vec{t}}}=x\right],
\end{align*}
where $ Q_{I_{\vec{t}}} $ is the distribution over $ \vec{\rA}_{I_{\vec{t}}} $ induced by $ \vec{\rA} $ having the uniform distribution on $ Q $.

We will now show that for any fixed $ x $ with distinct entries (the only ones that has non zero mass in the above sum) the conditional expectation is lower bounded by $ \sqrt{k_{u}}(1-\eps/2) $, which implies \cref{eq:lowerboundnotlearnablefiniteagglargerror2}. To this end, consider any fixed $ x\in [k_{u}]^{|I_{\vec{t}}|} $ with distinct entries.
We notice that for any fixed $ \vec{\rA}_{I_{\vec{t}}}=x $, the vector $ \rS(\vec{\rA},\vec{t})=r((k_{u},\vec{\rA}_{t_{1}},0),\ldots,(k_{u},\vec{\rA}_{t_{n}},0),x) $ is fixed as it only depends on the entries of $ \vec{\rA} $ indexed by $ I_{\vec{t}} $, which are fixed to $ \vec{\rA}_{I_{\vec{t}}}=x $. This furthermore implies that $ \cA'(\rS(\vec{\rA},\vec{t})) $ is fixed as well.

By the definition of finite aggregation learning algorithm using a interpolating aggregation rule, we have that there exists a finite set of hypotheses $ \{h_{k_{1},A_{1}},\ldots,h_{k_{m},A_{m}}\} $ in $ \cH $ such that $ \cA'(\rS(\vec{\rA},\vec{t}))=r(h_{k_{1},A_{1}},\ldots,h_{k_{m},A_{m}},x) $, where $ m\leq m(n) \leq \max_{n\in[n']}m(n)<\infty $. Furthermore, by construction of $ \cH $, each hypothesis $ h_{k_{i},A_{i}} $ can output a zero value at most $ \sqrt{k_{u}} $ times on $ [k_{u}] $ otherwise its output is strictly larger than $ \gamma $ . This implies combined with the aggregation rule being interpolating so outputting a value between the minimum and the maximum of the aggregated values implies that $ R_{\not 0}=\{ i\in[k_{u}]: \cA'(\rS(\vec{\rA},\vec{t}))(k_{u},i) >\gamma\}=\{ i\in[k_{u}]: r(h_{k_{1},A_{1}}(k_{u},i),\ldots,h_{k_{m},A_{m}}(k_{u},i),x) >\gamma\}   $ is at least of size $ k_{u}-\sqrt{k_{u}}\max_{n\in[n']}m(n) $. Using this we have that
\begin{align*}
    |\{i\in \{\vec{\rA}\}:\cA'(\rS(\vec{\rA},\vec{t}))(k_{u},i)>\gamma\}|&=|\{\vec{\rA}\}\cap R_{\not 0}|\geq |\{\vec{\rA}_{I_{\not\vec{t}}}\}\cap R_{\not 0}|
\end{align*}

We furthermore notice that conditioned on $ \vec{\rA}_{I_{\vec{t}}}=x $, the vector $ \vec{\rA}_{I_{\not\vec{t}}} $ is uniformly distributed over all vectors in $ ([k_{u}]\backslash \{x\})^{|I_{\not\vec{t}}|} $ with entries being distinct. This implies that
\begin{align*}
 &\e_{\vec{\rA}\sim Q}\left[|\{i\in \{\vec{\rA}\}:\cA'(\rS(\vec{\rA},\vec{t}))(k_{u},i)>\gamma\}|\bigg| \vec{\rA}_{I_{\vec{t}}}=x\right]
 \geq
 \e_{\vec{\rA}\sim Q}\left[|\{\vec{\rA}_{I_{\not\vec{t}}}\}\cap R_{\not 0}|\bigg| \vec{\rA}_{I_{\vec{t}}}=x\right]
 \\
 = &\sum_{i=1}^{|I_{\not\vec{t}}|} \p_{\vec{\rA}\sim Q}\left[ \vec{\rA}_{(I_{\not\vec{t}})_{i}}\in R_{\not 0} \bigg| \vec{\rA}_{I_{\vec{t}}}=x\right]= \sum_{i=1}^{|I_{\not\vec{t}}|} \frac{|R_{\not 0}|}{|[k_{u}]\backslash \{x\}| }\geq |I_{\not\vec{t}}| \frac{k_{u}-\sqrt{k_{u}}\max_{n\in[n']}m(n)}{k_{u}-|I_{\vec{t}}|}.
\end{align*}
where we in the last inequality used that $ |[k_{u}]\backslash \{x\}| \geq |[k_{u}]|-| \{x\}|=k_{u}-|I_{\vec{t}}| $ and that $ |R_{\not 0}|\geq k_{u}-\sqrt{k_{u}}\max_{n\in[n']}m(n) $. Since $ |I_{\vec{t}}|\leq \sqrt{k_u} $ and $ |I_{\not\vec{t}}|=\sqrt{k_{u}}-|I_{\vec{t}}|\geq \sqrt{k_{u}}-n' $, we have that
\begin{align*}
    \e_{\vec{\rA}\sim Q}\left[|\{i\in \{\vec{\rA}\}:\cA'(\rS(\vec{\rA},\vec{t}))(k_{u},i)>\gamma\}|\bigg| \vec{\rA}_{I_{\vec{t}}}=x\right]
    &\geq
    (\sqrt{k_{u}}-n') \frac{k_{u}-\sqrt{k_{u}}\max_{n\in[n']}m(n)}{k_{u}}\\
    &\geq
    \sqrt{k_{u}}\left(1-\frac{n'}{\sqrt{k_{u}}}\right)\left(1-\frac{\sqrt{k_{u}}\max_{n\in[n']}m(n)}{k_{u}}\right).
\end{align*}
which by the choice of $ k_{u} $ in \cref{eq:lowerboundnotlearnablefiniteagglargerror3} is at least $ \sqrt{k_{u}}(1-\eps/2) $, and as mentioned implies \cref{eq:lowerboundnotlearnablefiniteagglargerror2} completes the proof.

\end{proof}

\subsection{Proof of \cref{thm:Lowerboundproperlearner}}\label{sec:Lowerboundproperlearner}
In this section we present the proof of \cref{thm:Lowerboundproperlearner2} which contains both a constant probability lower bound and expectation lower bound, where the statement in \cref{thm:Lowerboundproperlearner} follows from the latter. We now state \cref{thm:Lowerboundproperlearner2} and then give its proof.
\begin{restatable}{theorem}{LowerboundproperlearnerTheorem}\label{thm:Lowerboundproperlearner2}
    For any $ 0<\gamma<1 $, and $ d_{\gamma}\geq 2 $, there exists a hypothesis class $ \cH $ with $ \gamma $-graph dimension $ d_{\gamma} $ such that for any proper learning algorithm $ \cA $, and for any $ 0<\eps<\frac{1}{64 e} $, there exists a realizable distribution $ \cD $ such that if $ \cA $ receives $ n \leq \frac{d_{\gamma}}{32\eps} \ln{\left(\frac{1}{64 e \eps}\right)} $ i.i.d. samples $ \rS\sim \cD^{n}$ from $ \cD $, then with probability at least $ 1/48 $ over $ \rS $, it holds that
    \begin{align}\label{eq:Lowerboundproperleanertheorem1}
        \ls^{\gamma}_{\cD}(\cA(\rS)) > \eps,
    \end{align}
    and
    \begin{align}\label{eq:Lowerboundproperleanertheorem2}
    \e_{\rS\sim\cD^{n}}\left[\ls^{\gamma}_{\cD}(\cA(\rS))\right] \geq 4\eps/3.
    \end{align}
    where the distributions $ \cD $ in the two cases may be different.
\end{restatable}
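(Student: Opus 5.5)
We follow the construction sketched in the overview of \cref{thm:Lowerboundproperlearner}. Write $d=d_{\gamma}$ and take $\cX=\bigcup_{k\geq d-1}\{(k,x):x\in[k]\}$. For each $k$ and each $A\subseteq[k]$ with $|A|=d-1$, fix a distinct value $\gamma_{k,A}\in(\gamma,1]$; this is possible because there are countably many such pairs. Let $h_{k,A}$ equal $0$ on $(k,x)$ for $x\notin A$ and equal $\gamma_{k,A}$ elsewhere, and set $\cH=\{h_{k,A}\}$.

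\textbf{Graph dimension.} The shattering argument in the overview shows that $d_{\gamma}=d$. In the lower direction, the witness $h_{2d,\{d+2,\ldots,2d\}}$ $\gamma$-graph shatters $\{(2d,i)\}_{i\le d}$, with padding by points $B'$ outside the shattered set. In the upper direction, no $d+1$ points can be shattered, for two reasons. First, a witness taking a unique nonzero value forces any agreeing hypothesis to coincide with it. Second, an all-zero witness lives on a single block $k$, where every other hypothesis differs on at most $d-1$ points.

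\textbf{Hard distributions.} Set $k_u=\lceil d/(c\eps)\rceil$ for a suitable constant $c$; concretely, $k_u$ should make each point have mass $\approx 16\eps/d$, with constants tuned to match the $1/(64e)$ and $1/32$ in the statement. For each $A\subseteq[k_u]$ with $|A|=k_u-d+1$, let $\cD_A$ be uniform on $\{((k_u,i),0):i\in A\}$. This distribution is realizable by $h_{k_u,[k_u]\setminus A}$. Draw $\rA$ uniformly and couple it, as in the proof of \cref{thm:Lowerboundfiniteagg2}, with an independent index sequence $\vec{\rt}$ uniform on $[k_u-d+1]$. Conditioned on the observed entries, the unobserved entries of $\rA$ form a uniformly random subset of the remaining $k_u-|I_{\vec t}|$ labels.

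\textbf{Coupon collector step.} Let $U$ be the number of indices not hit by $n$ draws. Then $\e[U]=(k_u-d+1)(1-1/(k_u-d+1))^n\approx k_u e^{-n/k_u}$. With $n\le \frac{d}{32\eps}\ln\frac{1}{64e\eps}$ and $k_u\asymp d/\eps$, this gives $\e[U]\gtrsim d$. We then need $U\geq c'd$ with constant probability. A second-moment bound works here, since the occupancy indicators are negatively correlated, so $\var[U]\le\e[U]$. Alternatively we can use McDiarmid's inequality.

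\textbf{Error of any proper output.} Fix $\vec t$ with $U\ge c'd$ and fix the observed values $x$. The output $\cA(S)=h_{k,A'}$ is then determined. If $k\neq k_u$, the output is $>\gamma$ on every point of block $k_u$, so the loss is $1$. Otherwise $A'$ has size $d-1$, and $h$ errs exactly on $A'\cap\rA$. Observed points must not lie in $A'$; otherwise the loss is trivially large, since an observed point has mass $\gtrsim\eps/d$, though we can also just count. The unseen set $\{\rA_{I_{\not\vec t}}\}$ is a uniformly random subset of size $U$ from the $N=k_u-|I_{\vec t}|$ remaining labels. The complement of $\rA$, namely the $d-1$ zero-labels of the true hypothesis, is the random set of size $d-1$ left over. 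Hence $\e|A'\cap\rA|\ge (d-1)\cdot U/(U+d-1)\ge c''d$ whenever $U\geq c'd$. The loss is then $|A'\cap\rA|/(k_u-d+1)\gtrsim c''d\cdot\eps/d$.

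Averaging over $\rA$ yields a fixed $A$ with expected loss $\ge 4\eps/3$. The reverse Markov inequality (\cref{lem:reversemarkov}), applied after normalizing by an upper bound on the relevant quantity, gives the probability-$1/48$ statement.

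\textbf{Main obstacle.} The hardest part is making the constants meet exactly $4\eps/3$ and $1/48$. This requires choosing $k_u$ precisely and controlling the concentration of the number of unseen points under the stated bound on $n$. I would first pick $k_u=\lfloor d/(16\eps)\rfloor$-type values and verify that $e^{-n/k_u}\ge 64e\eps$ holds up to the needed factor.
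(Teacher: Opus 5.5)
Your proposal follows the paper's argument. It uses the same class $h_{k,A}$, the same single block $k_u\asymp d/\eps$ with uniform mass on $k_u-d+1$ zero-labelled points, and the same coupling of a random enumeration of $\rA$ with an independent index sequence $\vec t$. It also uses the same key observation: once $\vec t$ and the observed labels $x$ are fixed, the output $h_{k,A'}$ is fixed, an off-block output has loss $1$, and an on-block output errs exactly on $A'\cap\rA$.

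Two local steps differ, and both are legitimate. First, you control the number $U$ of unseen labels through negatively correlated occupancy indicators, giving $\var[U]\le\e[U]$ and then Chebyshev. The paper instead writes the waiting time until $k_u-2d$ distinct labels appear as a sum of independent geometric variables and applies Chebyshev to that sum. Second, you use the single bound $\e\bigl[|A'\cap\rA|\mid\vec t,x\bigr]\ge(d-1)U/(U+d-1)$. The paper instead splits on whether at least $d/2$ observed labels lie in $A'$, and gets $d/6$.

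Your bound is valid in general, but not for the reason you give. An observed label in $A'$ contributes only mass $1/(k_u-d+1)\approx 16\eps/d$, which is not ``trivially large'' once $d>16$. The correct reason is this: if $a$ observed labels lie in $A'$, each is a sure error, while each of the other $d-1-a$ elements of $A'$ is hit with probability $U/(U+d-1)\le 1$. So the conditional mean is $a+(d-1-a)U/(U+d-1)\ge(d-1)U/(U+d-1)$.

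The step that fails as literally written is the last one. For a fixed $A$ the loss is not $O(\eps)$-bounded: it equals $1$ whenever $\cA$ outputs a hypothesis from another block. So reverse Markov normalized by the upper bound $1$ only yields $\p[\ls^{\gamma}_{\cD}(\cA(\rS))>\eps]\ge(4\eps/3-\eps)/(1-\eps)=\Theta(\eps)$. You need to apply it conditionally on $(\vec t,x)$, as the paper does:
\begin{itemize}
\item If $k\neq k_u$, the loss is $1>\eps$ outright.
\item If $k=k_u$, the count $|A'\cap\rA|\in[0,d-1]$ has conditional mean at least $c''d$, while the threshold $\eps(k_u-d+1)$ is at most $d/16$. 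Reverse Markov on the count divided by $d$ then gives probability at least $c''-1/16$.
\end{itemize}
You then average this probability, not the expected loss, over $\rA$ to pick the distribution; this is why the two distributions in the statement may differ.

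The constants are not a real obstacle along your route. Take $k_u=\lceil d/(16\eps)\rceil$ and $N=k_u-d+1$. Then $\e[U]\ge Ne^{-n/(N-1)}\ge 9d$ for every $\eps<1/(64e)$, so $U\ge 4d$ with probability at least $3/4$ and $c''\ge 4/9$. This gives expected loss at least $\tfrac34\cdot\tfrac49\cdot 16\eps=16\eps/3$ and probability at least $\tfrac34\bigl(\tfrac49-\tfrac1{16}\bigr)>1/48$.
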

\begin{proof}[Proof of \cref{thm:Lowerboundproperlearner2}]
Let $ 0<\gamma<1 $, $ d\in \mathbb{N} $, and let the input space be $ \cX = \bigcup_{k=d-1}^{\infty} \{(k,x) : x \leq k\}$. We start constructing the hypothesis class $ \cH^{\gamma,d} $. To this end, we first notice that $ \bigcup_{k=d-1}^{\infty} \{(k,A): A\subseteq [k], |A|=d-1\} $ is a countably infinite set. Thus, for each $ (k,A) $ in this set, we can define a unique $ \gamma_{k,A}\in(\gamma,1] $, which we will do in the following. We will now map each of these tuples to a hypothesis, where the union of these hypotheses will constitute our hypothesis class. Now for integers $ k\geq d-1 $ and $A\subseteq [k]$, where $ |A|=d-1 $, we define the hypothesis $ h_{k,A}:\cX\rightarrow [0,1] $ as
\begin{align*}
    h_{k,A}(x,y) = \begin{cases}
    0 & \text{ if } x=k, y\in [k]\backslash A,\\
    \gamma_{k,A} & \text{ else }.
\end{cases}
\end{align*}
We then define the hypothesis class $ \cH^{\gamma,d} = \bigcup_{k=d-1}^{\infty}\{ h_{k,A} : A\subseteq [k], |A|=d-1 \} $.

\paragraph*{$\gamma$-graph dimension of $\cH^{\gamma,d}.$} We now notice that the hypothesis class $ \cH^{\gamma,d} $ has $ \gamma $-graph dimension at least $ d $, since the set $ \{(2d,y): 1\leq y\leq d\} $ is $ \gamma $-graph shattered by $ \cH^{\gamma,d} $, with the witness hypothesis $ h_{2d,\{d+2,\ldots,2d\}} $. To see why, we first notice that $ h_{2d,\{d+2,\ldots,2d\}} $ is all $ 0 $ on $ \{(2d,y): 1\leq y\leq d\} $, so it suffices to show that for any $ B\subseteq [d] $ there is a hypothesis $ h\in \cH^{\gamma,d} $, such that $ h((2d,y))>\gamma $ for $ y \in B $ and $ h((2d,y))=0 $ for $ y \in [d]\backslash B $. Thus if $ B=[d] $, then $ h_{2d+1,\{1,\ldots,d-1\}} $ satisfies that $ h_{2d+1,\{1,\ldots,d-1\}}((2d,y)) =\gamma_{2d+1,\{1,\ldots,d-1\}}>\gamma$, for $ y\in B $  as wanted. Otherwise, if $ |B|\leq d-1 $, the hypothesis $ h_{2d,B\cup B'} $, where $ B' \subseteq \{d+1,\ldots,2d\} $ with $ |B'|=d-1-|B| $, satisfies that for any $ y\in B $, $ h_{2d,B\cup B'}(2d,y)=\gamma_{2d,B\cup B'} >\gamma$, and for any $ y\in [d]\backslash B $, $ h_{2d,B\cup B'}(2d,y)=0 $, as wanted. Thus $ \cH^{\gamma,d} $ has $ \gamma $-graph dimension at least $ d $.

We now show that the hypothesis class $ \cH^{\gamma,d} $ has $ \gamma $-graph dimension at most $ d $. Let $ S=\{(x_1,y_1),\ldots,(x_{d+1},y_{d+1})\}\subseteq \cX $ be any set of size $ d+1 $ and assume for the sake of contradiction that $ \cH^{\gamma,d} $ shatters $ S $ with $ h_{k,A}\in \cH^{\gamma,d} $ as the witness hypothesis. We first observe that if there exists an $i\in [d+1] $ with $ h_{k,A}((x_{i},y_{i}))=\gamma_{k,A} >0$, then by uniqueness of $ \gamma_{k,A} $, it must hold that if $ h_{k',A'}((x_{i},y_{i}))=h_{k,A}((x_{i},y_{i})) $, then $ (k',A')=(k,A) $, implying that for $ j\in[d+1] $ with $ j\not=i $ we can not have $ h_{k',A'}((x_{i},y_{i}))=h_{k,A}((x_{i},y_{i})) $ and $ |h_{k',A'}((x_{j},y_{j}))- h_{k,A}((x_{j},y_{j}))|>\gamma$ simultaneously, contradicting that $ h_{k,A} $ is a witness hypothesis of the $ \gamma $-shattering. Now for the case that $ h_{k,A}((x_{i},y_{i}))=0 $ for all $ i\in[d+1] $, it must be the case that $x=x_{1}=\ldots=x_{d+1},$ furthermore implying that $ x\geq d+1 $, since otherwise the points in $ S $ could not be distinct, which is required for them to be shattered. We now claim that for $ B=\{1,\ldots,d\} $, there is no hypothesis $ h_{k',A'}\in \cH^{\gamma,d} $, such that for any $ i\in B $, $ h_{k',A'}((x,y_{i}))>\gamma $, and $ h_{k',A'}((x,y_{d+1}))=0 $. To see this we first notice that any $ h_{k',A'} $ with $ k'\not=x $ would fail to comply with $ h_{k',A'}((x,y_{d+1}))=0 $, since in this case $ h_{k',A'}((x,y_{d+1}))=\gamma_{k',A'}>0 $. Furthermore in the case that $ k'=x $, since $ |A'|=d-1 $, there exists at least one $ i\in B $ such that $ y_{i}\not\in A' $, implying that $ h_{k',A'}((x,y_{i}))=0 $, contradicting that $ h_{k',A'}((x,y_{i}))>\gamma $. Thus we have shown that $ \cH^{\gamma,d} $ has $ \gamma $-graph dimension exactly $ d $.

\paragraph*{Lower bounds.} We now define a family of distributions wherein we for any proper learning algorithm $ \cA $ will find a hard distribution. To this end let
\begin{align*}
 k_{u}= \left\lceil \frac{ d}{16\eps}\right\rceil
\end{align*}
be the choice of effective universe size (the place where our distributions will have support inside).
We remark that this choice of $ k_{u} $  implies the following inequalities, useful for the proof:
\begin{align}
 \eps(k_{u}-d+1)&\leq \eps\left(\frac{d}{16\eps}-d+2\right) \leq \frac{d}{16}\label{eq:lowerboundproper0.1}
 \\
 k_{u}&\geq \frac{d}{16\eps} > 2d+1  \label{eq:lowerboundproper0.2}
\end{align}
where in the first line we have used that $ \lceil x\rceil \leq x+1 $ and $ d\geq 2 $ and in the second line we have used that $ \eps<1/(64e) $ and $ d\geq 2 $. Furthermore let
\begin{align*}
 Q=\left\{ \vec{A}\in [k_{u}]^{k_{u}-d+1}:\forall i,j\in[k_u-d+1], i\not=j, \vec{A}_{i}\not= \vec{A}_{j} \right\}
\end{align*}
where each of these vectors will be used to define a distribution $ \cD_{\vec{A}} $ as follows. For any $ \vec{A}\in Q $, we define the distribution $ \cD_{\vec{A}} $ over $ \cX\times [0,1] $ as
\begin{align*}
 \cD_{\vec{A}}(((x,y),0)) = \begin{cases}
    \frac{1}{k_{u}-d+1} & \text{ if } x=k_{u},y \in \vec{A},\\
   0 & \text{ else }.
\end{cases}
\end{align*}
We notice that $ \cD_{\vec{A}} $ is realizable by $ \cH^{\gamma,d} $, since $ h_{k_{u},\{\vec{A}^{c}\}} $, where $ \{\vec{A}^{c}\}=\{  i\in [k_{u}] \mid \forall j\in[k_{u}-d+1]: i\not= \vec{A}_{j}\}  $ is constant zero on the support of $ \cD_{\vec{A}} $. We notice that sampling from $ \cD_{\vec{A}} $ can also be seen as sampling $ \rt\sim\cD_{t} $ with $ \p_{\cD}[\rt=i]=\frac{1}{k_{u}-d+1} $ for $ i \in [k_{u}-d+1] $, and then returning the sample $ ((k_{u},\vec{A}_{\rt}),0) $. From now on we will identify samples from $ \cD_{\vec{A}} $ with samples from $ \cD_{t} $ via this mechanism. Thus we will let $ \rS(\vec{A}|\vec{\rt})=(((k_{u},\vec{A}_{\rt_{1}}),0),\ldots,((k_{u},\vec{A}_{\rt_{n}}),0)) $ be the training sequence of size $ n $ sampled i.i.d from $ \cD_{\vec{A}} $ using the random vector $ \vec{\rt}=(\rt_{1},\ldots,\rt_{n}) $. This perspective also implies that for a learning algorithm $ \cA $,
\begin{align*}
 \ls^{\gamma}_{\cD_{\vec{A}}}(\cA(\rS(\vec{A}|\vec{\rt})))=\p_{((\rx,\ry),\rz)\sim \cD_{\vec{A}}}[|\cA(\rS(\vec{A}|\vec{\rt}))(\rx,\ry)-\rz|>\gamma]=\p_{\rt\sim \cD_{t}}[\cA(\rS(\vec{A}|\vec{\rt}))(k_{u},\vec{A}_{\rt})>\gamma]
\end{align*}
where the second equality follows from $ \rz=0 $ given the definition of $ \cD_{\vec{A}} $. We will start by showing \cref{eq:Lowerboundproperleanertheorem1} and then show \cref{eq:Lowerboundproperleanertheorem2}. Let from now on $ \cA $ be any proper learning algorithm.  Now using the above observation about how to rewrite the loss, and using the alternative sampling procedure of $ \rS(\vec{A},\vec{\rt}) $, we have that for any $ \vec{A}\in Q $,
\begin{align*}
 \p_{\rS\sim \cD_{\vec{A}}^{n}}\left[ \ls^{\gamma}_{\cD_{\vec{A}}}(\cA(\rS))>\eps\right]
    =
\p_{\vec{\rt}\sim \cD_{t}^{n}}\left[ \p_{\rt\sim \cD_{t}}[\cA(\rS(\vec{A}|\vec{\rt}))(k_{u},\vec{A}_{\rt})>\gamma]>\eps\right].
\end{align*}
Using this and letting $ \vec{\rA} $ being drawn uniformly from $ Q $ (denoted as $ \vec{\rA} \sim Q$ ), we have that
\begin{align}\label{eq:lowerboundproper3}
 \p_{\vec{\rA}\sim Q}\left[\p_{\rS\sim \cD_{\vec{\rA}}^{n}}\left[ \ls^{\gamma}_{\cD_{\vec{\rA}}}(\cA(\rS))>\eps\right]\right]
    =&\p_{\vec{\rA}\sim Q}\left[\p_{\vec{\rt}\sim \cD_{t}^{n}}\left[ \p_{\rt\sim \cD_{t}}[\cA(\rS(\vec{\rA}|\vec{\rt}))(k_{u},\vec{\rA}_{\rt})>\gamma]>\eps\right]\right]\nonumber
    \\
    =&\p_{\vec{\rt}\sim \cD_{t}^{n}}\left[\p_{\vec{\rA}\sim Q}\left[  \p_{\rt\sim \cD_{t}}[\cA(\rS(\vec{\rA}|\vec{\rt}))(k_{u},\vec{\rA}_{\rt})>\gamma]>\eps\right]\right]
.\end{align}
We will show that the last term in the above is lower bounded by $ 1/48 $, i.e.
\begin{align}\label{eq:lowerboundproper4}
 \p_{\vec{\rt}\sim \cD_{t}^{n}}\left[\p_{\vec{\rA}\sim Q}\left[  \p_{\rt\sim \cD_{t}}[\cA(\rS(\vec{\rA}|\vec{\rt}))(k_{u},\vec{A}_{\rt})>\gamma]>\eps\right]\right]\geq \frac{1}{48},
\end{align}
implying that there exists a $ \vec{A} \in Q$ such that
\begin{align*}
    \p_{\rS\sim \cD_{\vec{A}}^{n}}\left[ \ls^{\gamma}_{\cD_{\vec{A}}}(\cA(\rS))>\eps\right] \geq \frac{1}{48},
\end{align*}
further implying the statement in \cref{eq:Lowerboundproperleanertheorem1}.
Thus we now lower bound \cref{eq:lowerboundproper4}. To this end, let's define the set of indices seen and not seen in $ \vec{\rt} $
\begin{align*}
    I_{\not \vec{\rt}}=\left\{ j \in [k_{u}-d+1]: j\not \in \{\vec{\rt}\} \right\} \quad \text{ and } \quad I_{\vec{\rt}}=\left\{ j \in [k_{u}-d+1]: j \in \{\vec{\rt}\} \right\}.
\end{align*}
We then have that
\begin{align*}
   &\p_{\vec{\rt}\sim \cD_{t}^{n}}\left[\p_{\vec{\rA}\sim Q}\left[  \p_{\rt\sim \cD_{t}}[\cA(\rS(\vec{\rA}|\vec{\rt}))(k_{u},\vec{\rA}_{\rt})>\gamma]>\eps\right]\right]
   \\
   \geq
   &\p_{\vec{\rt}\sim \cD_{t}^{n}}\left[\p_{\vec{\rA}\sim Q}\left[  \p_{\rt\sim \cD_{t}}[\cA(\rS(\vec{\rA}|\vec{\rt}))(k_{u},\vec{\rA}_{\rt})>\gamma]>\eps\right]\ind\{|I_{\not \vec{\rt}}|\geq d\}\right].
\end{align*}
We will show that for any realization $ \vec{t} $ of $ \vec{\rt} $ with $ |I_{\not \vec{t} }|\geq d $, it holds that
\begin{align}\label{eq:lowerboundproper4.1}
 \p_{\vec{\rA}\sim Q}\left[  \p_{\rt\sim \cD_{t}}[\cA(\rS(\vec{\rA}|\vec{t}))(k_{u},\vec{\rA}_{\rt})>\gamma]>\eps\right]\geq \frac{1}{24},
\end{align}
implying that
\begin{align}\label{eq:lowerboundproper5}
    \p_{\vec{\rt}\sim \cD_{t}^{n}}\left[\p_{\vec{\rA}\sim Q}\left[  \p_{\rt\sim \cD_{t}}[\cA(\rS(\vec{\rA}|\vec{\rt}))(k_{u},\vec{\rA}_{\rt})>\gamma]>\eps\right]\right]
   \geq
   \frac{1}{24}
   \p_{\vec{\rt}\sim \cD_{t}^{n}}\left[|I_{\not \vec{\rt}}|\geq d\right].
\end{align}
Lastly we will show that
\begin{align}\label{eq:lowerboundproper6}
    \p_{\vec{\rt}\sim \cD_{t}^{n}}\left[|I_{\not \vec{\rt}}|\geq d\right]\geq 1/2,
\end{align}
implying \cref{eq:lowerboundproper4} by combining the above two inequalities in \cref{eq:lowerboundproper5} and \cref{eq:lowerboundproper6}.
We first show \cref{eq:lowerboundproper4.1} and then \cref{eq:lowerboundproper6}. To this end let $ \vec{t} $ be any realization of $ \vec{\rt} $ with $ |I_{\not \vec{t} }|\geq d $. We have that
\begin{align}\label{eq:lowerboundproper7}
    \p_{\vec{\rA}\sim Q}\left[  \p_{\rt\sim \cD_{t}}[\cA(\rS(\vec{\rA}|\vec{t}))(k_{u},\vec{\rA}_{\rt})>\gamma]>\eps\right]
    =&
    \p_{\vec{\rA}\sim Q}\left[ \sum_{i\in \{\vec{\rA}\}} \frac{\ind\{\cA(\rS(\vec{\rA}|\vec{t}))(k_{u},i)>\gamma\}}{k_{u}-d+1}  >\eps\right]
    \\
    =
    &\p_{\vec{\rA}\sim Q}\left[ |\{i\in\{\vec{\rA}\}: \cA(\rS(\vec{\rA}|\vec{t}))(k_{u},i)>\gamma\}|>\eps(k_{u}-d+1)\right]
    \\
    &\geq
    \p_{\vec{\rA}\sim Q}\left[ |\{i\in\{\vec{\rA}\}: \cA(\rS(\vec{\rA}|\vec{t}))(k_{u},i)>\gamma\}|>\frac{d}{8}\right]\nonumber
\end{align}
where the last inequality follows from \cref{eq:lowerboundproper0.1}.
Let $ \vec{\rA}_{I_{\not\vec{t}}}=(\rA_{(I_{\not\vec{t}})_{1}},\ldots,\rA_{(I_{\not\vec{t}})_{|I_{\not\vec{t}}|}})$, be the subvector of $ \vec{\rA} $ consisting of indices from $ I_{\not\vec{t}} $ and let $ \vec{\rA}_{I_{\vec{t}}}= (\rA_{(I_{\vec{t}})_{1}},\ldots,\rA_{(I_{\vec{t}})_{|I_{\vec{t}}|}}) $ be the subvector of $ \vec{\rA} $ consisting of indices from $ I_{\vec{t}} $. By the law of total probability we have that
\begin{align*}
    &\p_{\vec{\rA}\sim Q}\left[ |\{i\in\{\vec{\rA}\}: \cA(\rS(\vec{\rA}|\vec{t}))(k_{u},i)>\gamma\}|>\frac{d}{8}\right]
    \\
    =&
    \sum_{x\in[k_{u}]^{k_{u}-d+1-|I_{\not\vec{t}}|}}\p_{\vec{\rA}\sim Q}\left[ |\{i\in\{\vec{\rA}\}: \cA(\rS(\vec{\rA}|\vec{t}))(k_{u},i)>\gamma\}|>\frac{d}{8} \bigg| \vec{\rA}_{I_{\vec{t}}}=x\right] \p_{\vec{\rA}_{I_{\vec{t}}}\sim Q_{I_{\vec{t}}}}\left[\vec{\rA}_{I_{\vec{t}}}=x\right]
\end{align*}
where $ Q_{I_{\vec{t}}} $ is the distribution over $ \vec{\rA}_{I_{\vec{t}}} $ induced by $ \vec{\rA} $ having the uniform distribution on $ Q $.
We will show that
\begin{align}\label{eq:lowerboundproper4.2}
    \p_{\vec{\rA}\sim Q}\left[ |\{i\in\{\vec{\rA}\}: \cA(\rS(\vec{\rA}|\vec{t}))(k_{u},i)>\gamma\}|>\frac{d}{8} \bigg| \vec{\rA}_{I_{\vec{t}}}=x\right] \geq \frac{1}{24}
\end{align}
for any fixed $ x\in[k_{u}]^{k_{u}-d+1-|I_{\not\vec{t}}|} $, which by the above implies \cref{eq:lowerboundproper4.1}.
Now for any fixed $ x\in[k_{u}]^{k_{u}-d+1-|I_{\not\vec{t}}|} $, we have that $ \vec{\rA}_{I_{\not\vec{t}}} \in [k_{u}]^{|I_{\not\vec{t}}|}$ is uniformly distributed on the vectors of length $ |I_{\not\vec{t}}| $ with distinct entries from  $ [k_{u}]\backslash \{x\}$. Furthermore for a fixed $ x $ we have since  $ \rS(\vec{\rA},\vec{t})=((((k_{u},\vec{\rA}_{t_{1}}),0),\ldots,((k_{u},\vec{\rA}_{t_{n}}),0))) $ only depends on $ \vec{\rA}_{I_{\vec{t}}} =x $, that $ \cA(\rS(\vec{\rA}|\vec{t})) $ is fixed. Since $ \cA $ is a proper learning algorithm, the latter further implies that $ \cA(\rS(\vec{\rA}|\vec{t}))=h_{k',A'}\in \cH^{\gamma,d} $, meaning that
\begin{align*}
    \{i\in\{\vec{\rA}\}: \cA(\rS(\vec{\rA}|\vec{t}))(k_{u},i)>\gamma\} = \{i\in\{\vec{\rA}\}: h_{k',A'}(k_{u},i)>\gamma\}=
    \begin{cases}
        \{\vec{\rA}\} & \text{ if } k'\not= k_{u},\\
        \{\vec{\rA}\}\cap A' & \text{ if } k' = k_{u}.
    \end{cases}
\end{align*}
where we in the last equality have used that by definition of $ h_{k',A'} $, it holds that $ h_{k',A'}(k_{u},t)=\gamma_{k',A'}>\gamma $ if $ k'\not=k_{u} $, and $ h_{k',A'}(k_{u},t)=0 $ if $ k'=k_{u} $ and $ t\not\in A' $, and $ h_{k',A'}(k_{u},t)=\gamma_{k',A'}>\gamma $ if $ k'=k_{u} $ and $ t\in A' $.
For the first case in the above, $ k'\not=k_{u} $, the size of the set inside the probability of \cref{eq:lowerboundproper4.2} is $ |\{\vec{A}\}|=k_{u}-d+1\geq d+2 $ by \cref{eq:lowerboundproper0.2}, and the lower bound in \cref{eq:lowerboundproper4.2}  follows. In the second case $k'=k_{u} $ and $ |\{\vec{\rA}_{I_{\vec{t}}}\}\cap A'|=|\{x\}\cap A'|\geq d/2$, the lower bound in \cref{eq:lowerboundproper4.2} also follows. In the last case $k'=k_{u} $ and $ |\{\vec{\rA}_{I_{\vec{t}}}\}\cap A'| = |\{x\}\cap A'|< d/2$, we have that $ |([k_{u}]\backslash\{x\})\cap A'|=|([k_{u}]\cap A')\backslash(\{x\}\cap A')| \geq  |([k_{u}]\cap A')|-|(\{x\}\cap A')|\geq d-1-(d/2-1/2)\geq (d-1)/2$, where we have used in the last step that $ |\{x\}\cap A'|< d/2 $ implies $ |\{x\}\cap A'|\leq d/2-1/2 $.
Furthermore we have that $ |k_{u}\backslash\{x\}| = k_u - |\{x\}|=d-1+ |I_{\not\vec{t}}| $.
Using this we have that
\begin{align*}
 &\e_{\vec{\rA}\sim Q}\left[ |\{i\in\{\vec{\rA}\}: \cA(\rS(\vec{\rA}|\vec{t}))(k_{u},i)>\gamma\}| \bigg| \vec{\rA}_{I_{\vec{t}}}=x\right]
 =
 \e_{\vec{\rA}\sim Q}\left[ |\{\vec{\rA}\}\cap A'| \bigg| \vec{\rA}_{I_{\vec{t}}}=x\right]
 \\
\geq&
 \e_{\vec{\rA}\sim Q}\left[ |\{\vec{\rA}_{I_{\not\vec{t}}}\}\cap A'| \bigg| \vec{\rA}_{I_{\vec{t}}}=x\right]
 =
  \e_{\vec{\rA}\sim Q}\left[ \sum_{i=1}^{|I_{\not\vec{t}}|} \ind\{(\vec{\rA}_{I_{\not\vec{t}}})_{i} \in A'\} \bigg| \vec{\rA}_{I_{\vec{t}}}=x\right]= |I_{\not\vec{t}}| \frac{|([k_{u}]\backslash \{x\})\cap A'|}{|[k_{u}]\backslash \{x\}|}
  \\&\geq
 |I_{\not\vec{t}}| \frac{(d-1)/2}{d-1+ |I_{\not\vec{t}}| }
 =
 \frac{d-1}{2}\frac{|I_{\not\vec{t}}|}{d-1+ |I_{\not\vec{t}}| }
 \geq \frac{d-1}{2} \frac{d}{d-1+ d} = \frac{d-1}{2} \frac{d}{2d-1}\geq \frac{d}{6}
\end{align*}
where the second equality follows from the entries of $ \vec{\rA}_{I_{\not\vec{t}}} $ being distinct, the third equality from $ \vec{\rA}_{I_{\not\vec{t}}} \in [k_{u}]^{|I_{\not\vec{t}}|}$ being uniformly distributed on the vectors of length $ |I_{\not\vec{t}}| $ with distinct entries from  $ [k_{u}]\backslash \{x\}$,  the second to last inequality follows from $ |I_{\not\vec{t}}|\geq d $ and $ \frac{a}{b+a} \geq \frac{c}{b+c}$ for $ a\geq c >0 $ and $ b\geq0 $. Using this lower bound on the expectation and that $|\{\vec{A}\}\cap A'|\leq |A'|=d-1\leq d$ we have by \cref{lem:reversemarkov} that
\begin{align*}
    &\p_{\vec{\rA}\sim Q}\left[ |\{i\in\{\vec{\rA}\}: \cA(\rS(\vec{\rA}|\vec{t}))(k_{u},i)>\gamma\}|>d/8\bigg| \vec{\rA}_{I_{\vec{t}}}=x\right]
    \\
    =
    &\p_{\vec{\rA}\sim Q}\left[ |\{i\in\{\vec{\rA}\}: \cA(\rS(\vec{\rA}|\vec{t}))(k_{u},i)>\gamma\}|/d>1/8\bigg| \vec{\rA}_{I_{\vec{t}}}=x\right]\geq \frac{1}{6}-\frac{1}{8}= \frac{1}{24}
\end{align*}
which shows \cref{eq:lowerboundproper4.2}.

We now show \cref{eq:lowerboundproper6}. To this end let $ \rX_{i} $ be the random variable counting number of draws from $ \cD_{t} $ before seeing a new element in $ [k_{u}-d+1] $ after having seen $ i-1 $ different elements in $ [k_{u}-d+1] $, i.e. $ \rX_{i} $ is geometrically distributed with parameter $ p_{i}=\frac{k_{u}-d+2-i}{k_{u}-d+1} $. We then have that $ \rX = \sum_{i=1}^{k_{u}-2d} \rX_{i} $, is the number of draws needed to have seen $ k_{u}-2d $ different elements in $ [k_{u}-d+1] $ meaning there are $ k_{u}-d+1-(k_{u}-2d)= d+1 $ unseen elements, i.e. $ |I_{\not \vec{\rt}}|\geq d $. Thus if we can show that $ \rX>n $ with probability at least $ 1/2 $ we have that \cref{eq:lowerboundproper6} follows, thus we show that.
We have that
\begin{align}\label{eq:lowerboundproper1}
    \e[\rX]
    &=
    \sum_{i=1}^{k_{u}-2d}\frac{k_{u}-d+1}{k_{u}-d+2-i}
    =
    (k_{u}-d+1)\sum_{j=d+2}^{k_{u}-d+1}\frac{1}{j}
    \geq
    (k_{u}-d+1)\ln{\left( \frac{k_{u}-d+2}{d+2} \right)}
    \\
    &\geq
    (k_{u}-d+1)\ln{\left( \frac{k_{u}-d}{d+2} \right)}
\end{align}
where the first equality follows from the linearity of expectation and from the expectation of a geometric random variable with parameter $p$ being $ 1/p $, in the second equality we have shifted the index of the sum, and in the inequality we have used that $ \sum_{j=m}^{n} \frac{1}{j}\geq \int_{m}^{n+1} \frac{1}{x} dx = \ln{(\frac{n+1}{m})} $.
Furthermore we have that
\begin{align}\label{eq:lowerboundproper2}
    \var\left[\rX\right]
 =
    \sum_{i=1}^{k_{u}-2d} \frac{1-\frac{k_{u}-d+2-i}{k_{u}-d+1} }{\left(\frac{k_{u}-d+2-i}{k_{u}-d+1}\right)^{2}}
\leq
    (k_{u}-d+1)^{2} \sum_{j=d+2}^{k_{u}-d+1} \frac{1}{j^{2}}
\leq
    (k_{u}-d+1)^{2} \int_{d+1}^{\infty} \frac{1}{x^{2}} dx
\leq
     \frac{(k_{u}-d+1)^{2}}{2}.
\end{align}
where the first equality follows from the independence of the $ \rX_{i} $s and from the variance of a geometric random variable with parameter $ p $ being $ \frac{1-p}{p^{2}} $, in the first inequality we have used that $ 1-p_{i}\leq 1 $, in the second inequality we have used that $ \sum_{j=m}^{n} \frac{1}{j^{2}}\leq \int_{m-1}^{\infty} \frac{1}{x^{2}} dx $, and in the last inequality we have used that $ \int_{a}^{\infty} \frac{1}{x^2} dx=\frac{1}{a} $ for $ a>0 $. Using Chebyshev's inequality we have that
\begin{align*}
    \p\left[ |\rX - \e[\rX]| \geq \sqrt{2\var\left[\rX\right]} \right]\leq \frac{1}{2},
\end{align*}
implying that with probability at least $ 1/2 $, it holds that
\begin{align}\label{eq:lowerboundproper9}
    \rX \geq \e[\rX] - \sqrt{2\var\left[\rX\right]} \geq (k_{u}-d+1) \ln{\left(\frac{k_{u}-d}{d+2} \right)} - (k_{u}-d+1)= (k_{u}-d+1) \ln{\left(\frac{k_{u}-d}{(d+2)e} \right)}.
\end{align}
where the second to last inequality follows from the bounds on $ \e[X] $ and $ \var[X] $ derived in \cref{eq:lowerboundproper1} and \cref{eq:lowerboundproper2}.
Now using that
\begin{align*}
 k_{u}=\left \lceil \frac{d}{16\eps} \right \rceil\geq \frac{d}{16\eps} \geq \frac{3d}{64\eps}+\frac{d}{64\eps}\geq \frac{3d}{64\eps}+d
\end{align*}
where the last inequality follows from $ \eps\leq 1/64 e $. Now, we bound each term in the product of the last expression in \cref{eq:lowerboundproper9}
by respectively
\begin{align*}
    \frac{k_{u}-d}{(d+2)e}  \geq \frac{k_{u}-d}{2de} \geq
    \frac{3}{128 e \eps}
\end{align*}
where we have used in the first inequality that $ d\geq 2 $, and
\begin{align*}
 k_{u}-d+1 \geq \frac{3d}{64\eps}
\end{align*}
which implies that with probability at least $ 1/2 $, it holds that
\begin{align*}
    \rX \geq \frac{3d}{64\eps} \ln{\left(\frac{3}{128 e \eps}\right)}> n,
\end{align*}
which we concluded earlier would imply \cref{eq:lowerboundproper6}.

Now for the expected loss bound in \cref{eq:Lowerboundproperleanertheorem2}, we notice that
\begin{align}\label{eq:lowerboundproper8}
    \e_{\vec{\rA}\sim Q}\left[\e_{\rS\sim \cD_{\vec{\rA}}^{n}}\left[ \ls^{\gamma}_{\cD_{\vec{\rA}}}(\cA(\rS))\right]\right]
    &=
    \e_{\vec{\rt}\sim \cD_{t}^{n}}\left[\e_{\vec{\rA}\sim Q}\left[  \p_{\rt\sim \cD_{t}}[\cA(\rS(\vec{\rA}|\vec{\rt}))(k_{u},\vec{A}_{\rt})>\gamma]\right]\right]
    \\
    &\geq
    \e_{\vec{\rt}\sim \cD_{t}^{n}}\left[\e_{\vec{\rA}\sim Q}\left[  \p_{\rt\sim \cD_{t}}[\cA(\rS(\vec{\rA}|\vec{\rt}))(k_{u},\vec{A}_{\rt})>\gamma]\right]
    \ind\{|I_{\not \vec{\rt}}|\geq d\}\right]\nonumber
    \\
    &\geq
    \frac{1}{k_{u}-d+1}
    \e_{\vec{\rt}\sim \cD_{t}^{n}}\left[\e_{\vec{\rA}\sim Q}\left[  |\{i\in\{\vec{\rA}\}: \cA(\rS(\vec{\rA}|\vec{t}))(k_{u},i)>\gamma\}|\right]
    \ind\{|I_{\not \vec{\rt}}|\geq d\}\right]\nonumber
\end{align}
where the first equality follows from similar arguments as in \cref{eq:lowerboundproper3}, the second inequality from similar arguments as in \cref{eq:lowerboundproper7}. Now letting $ \vec{t} $ be any realization of $ \vec{\rt} $ with $ |I_{\not \vec{t}}|\geq d $, we have that
\begin{align*}
 &\e_{\vec{\rA}\sim Q}\left[  |\{i\in\{\vec{\rA}\}: \cA(\rS(\vec{\rA}|\vec{t}))(k_{u},i)>\gamma\}|\right]
 \\
 =&
  \sum_{x\in[k_{u}]^{k_{u}-d+1-|I_{\not\vec{t}}|}}\e_{\vec{\rA}\sim Q}\left[ |\{i\in\{\vec{\rA}\}: \cA(\rS(\vec{\rA}|\vec{t}))(k_{u},i)>\gamma\}| \bigg| \vec{\rA}_{I_{\vec{t}}}=x\right] \p_{\vec{\rA}_{I_{\vec{t}}}\sim Q_{I_{\vec{t}}}}\left[\vec{\rA}_{I_{\vec{t}}}=x\right].
\end{align*}
In the proof of \cref{eq:lowerboundproper4.2} we showed that for any fixed $ x\in[k_{u}]^{k_{u}-d+1-|I_{\not\vec{t}}| } $, it holds that
\begin{align*}
    \e_{\vec{\rA}\sim Q}\left[ |\{i\in\{\vec{\rA}\}: \cA(\rS(\vec{\rA}|\vec{t}))(k_{u},i)>\gamma\}| \bigg| \vec{\rA}_{I_{\vec{t}}}=x\right] \geq \frac{d}{6},
\end{align*}
which implies that
\begin{align*}
    \e_{\vec{\rA}\sim Q}\left[  |\{i\in\{\vec{\rA}\}: \cA(\rS(\vec{\rA}|\vec{t}))(k_{u},i)>\gamma\}|\right] \geq \frac{d}{6}.
\end{align*}
Plugging this back into the expected loss bound, \cref{eq:lowerboundproper8}, we get that
\begin{align*}
    \e_{\vec{\rA}\sim Q}\left[\e_{\rS\sim \cD_{\vec{\rA}}^{n}}\left[ \ls^{\gamma}_{\cD_{\vec{\rA}}}(\cA(\rS))\right]\right]
    \geq
    \frac{1}{k_{u}-d+1}
    \frac{d}{6}
    \e_{\vec{\rt}\sim \cD_{t}^{n}}\left[
    \ind\{|I_{\not \vec{\rt}}|\geq d\}\right]
    \geq
    \frac{d}{12(k_{u}-d+1)}
    \geq
    4\eps/3
\end{align*}
where the second to last inequality follows from \cref{eq:lowerboundproper6} saying $ \p_{\vec{\rt}\sim \cD_{t}^{n}}\left[|I_{\not \vec{\rt}}|\geq d\right]\geq 1/2, $  and the last inequality follows from $ k_{u}\leq d/(16\eps)+1 $ and $ d\geq 2 $ implying that $ k_{u}-d+1\leq d/(16\eps) $, showing the existence of a $ \vec{A}\in Q $ with the same expected loss bound over $ \rS\sim \cD_{\vec{A}}^{n} $ and completing the proof of \cref{eq:Lowerboundproperleanertheorem2} in the theorem.
\end{proof}

\end{document}